\def\D{{\mathcal D}}
\def\R{{\mathbb R}}
\def\CE{\mathbf{E}}
\def\mG{\mathcal{G}}
\newcommand{\notimplies}{%
  \mathrel{{\ooalign{\hidewidth$\not\phantom{=}$\hidewidth\cr$\implies$}}}}
\newcommand{\setlabel}[2]{%
  \phantomsection
  #1\def\@currentlabel{\unexpanded{#1}}\label{#2}%
}
\newtheorem{theorem}{Theorem}
\newtheorem{corollary}{Corollary}
\newtheorem{proposition}{Proposition}
\newtheorem{lemma}{Lemma}
\newtheorem{assumption}{Assumption}
\newtheorem{remark}{Remark}
\newtheorem{definition}{Definition}
\title{Blockwise Adaptivity: 
Faster Training and \\Better 
Generalization
in Deep Learning}
\author{
  Shuai Zheng, James T. Kwok\\
  \texttt{szhengac@cse.ust.hk, jamesk@cse.ust.hk} \\
  Department of Computer Science\\
  Hong Kong University of Science and Technology
}
\begin{document} 

\maketitle

\begin{abstract} 
Stochastic methods with coordinate-wise adaptive stepsize (such as
RMSprop and 
Adam)
have been widely used in training deep neural networks. 
Despite their fast convergence, 
they can generalize worse than stochastic gradient descent. 
In this paper, by revisiting the design of Adagrad, we propose
to split the network parameters into blocks, and
use a blockwise
adaptive stepsize.
Intuitively, 
blockwise adaptivity is less aggressive than
adaptivity to individual coordinates, 
and can have a better balance between adaptivity and generalization. 
We show 
theoretically 
that the proposed blockwise adaptive gradient
descent has comparable convergence rate as its counterpart with
coordinate-wise adaptive stepsize, but is faster
up to some constant. 
We also study its uniform stability 
and show that blockwise adaptivity can lead to lower generalization error than coordinate-wise adaptivity. 
Experimental results show that blockwise adaptive gradient
descent
converges
faster 
and improves generalization performance over
Nesterov's accelerated gradient and Adam.
 
\end{abstract} 


\section{Introduction}

Deep networks
have achieved excellent performance in a variety of domains such as computer vision \cite{he2016deep}, 
language modeling \cite{zaremba2014recurrent}, and speech recognition \cite{graves2013speech}.
The most popular optimizer
is stochastic gradient decent (SGD) \cite{robbins1951stochastic},
which is simple and has low per-iteration complexity.
Its convergence rate is also well-established 
\cite{ghadimi2013stochastic,bottou2018optimization}. 
However, vanilla SGD is sensitive to the choice of stepsize, and requires careful tuning. 
To improve the efficiency and robustness of SGD, many variants have been proposed, 
such as momentum acceleration
\cite{polyak1964some,nesterov1983method,sutskever2013importance} and adaptive
stepsizes \cite{duchi2011adaptive,hinton2012rmsprop,zeiler2012adadelta,kingma2014adam}. 

Though variants with coordinate-wise adaptive stepsize (such as Adam
\cite{duchi2011adaptive})
have shown to be effective in accelerating convergence,
their
generalization performance 
is often worse 
than SGD \cite{wilson2017marginal}.
To improve generalization performance, attempts have been made to use  a
layer-wise stepsize \cite{singh2015layer,yang2017lars,adam2017normalized,zhou2018adashift}, which
assign different stepsizes to different layers or normalize the layer-wise gradient.
However, there has been no theoretical analysis for its empirical success. 
More generally, the whole network parameter can also be partitioned into blocks instead of simply
into layers.


Recently, it is shown that coordinate-wise adaptive gradient descent is closely related to sign-based gradient descent \cite{pmlr-v80-balles18a,pmlr-v80-bernstein18a}. 
Theoretical arguments and empirical evidence 
suggest that the gradient sign would impede generalization
\cite{pmlr-v80-balles18a}.
To contract the generalization gap, a partial adaptive parameter for the
second-order momentum is proposed \cite{chen2018closing}.
By using a smaller partial adaptive parameter, the adaptive gradient algorithm behaves less like sign descent and more like SGD. 

Moreover,
in 
methods
with coordinate-wise adaptive stepsize, a small $\epsilon$ ($=10^{-8}$) parameter is
typically used 
to avoid numerical
problems in practical
implementation. 
It is discussed in \cite{zaheer2018adaptive} that this $\epsilon$ parameter
controls adaptivity of the algorithm, and using a larger value (say,
$\epsilon = 10^{-3}$) can reduce adaptivity and empirically helps Adam to match its generalization performance with SGD. 
This implies that coordinate-wise adaptivity may be too strong for good generalization performance.

In this paper, 
by revisiting the derivation of Adagrad, we consider partitioning the model parameters into blocks 
as in \cite{singh2015layer,yang2017lars,adam2017normalized,zhou2018adashift},
and propose the use of a
blockwise stepsize. 
By allowing this blockwise stepsize to depend on the corresponding gradient block, we have the notion of blockwise adaptivity. 
Intuitively, 
it is less aggressive 
to adapt to parameter blocks instead of to individual coordinates,
and this reduced adaptivity can have a better balance between adaptivity and generalization. 
Moreover,  as
blockwise adaptivity is not 
coordinate-wise adaptivity, it does not suffer from the performance deterioration
as for sign-based gradient descent.

We will focus on 
the expected risk
minimization problem
\cite{pmlr-v80-bernstein18a,ghadimi2013stochastic,ward2018adagrad,zaheer2018adaptive,zou2018convergence,zou2018sufficient}:
\begin{eqnarray} \label{eq:expected_loss}
\min_{\theta} F(\theta) = \CE_z[f(\theta; z)],
\end{eqnarray}
where $f$ is some 
possibly nonconvex
loss function, and $z$ is a random sample.
The expected risk 
measures the generalization performance
on unseen data \cite{bottou2018optimization}, and
reduces to the empirical risk when a finite training set is considered.  
We show theoretically 
that the proposed blockwise adaptive gradient descent can be faster than its counterpart with 
coordinate-wise adaptive stepsize.
Using tools on uniform stability \cite{bousquet2002stability,hardt2016train},
we also show that blockwise adaptivity has potentially lower generalization error than coordinate-wise adaptivity. 
Empirically, 
blockwise adaptive gradient descent converges faster and obtains better generalization performance than
its coordinate-wise counterpart (Adam) and Nesterov's accelerated gradient (NAG)
\cite{sutskever2013importance}. 

{\bf Notations}. 
For an integer $n$, $[n] =\{1, 2, \dots, n\}$. 
For a vector $x$,
$x^T$ denotes its transpose,
$\text{Diag}(x)$ is a diagonal matrix with $x$ on its diagonal, $\sqrt{x}$ is the
element-wise square root of $x$, $x^2$ is the coordinate-wise square of $x$, $\|x\|_2 = \sqrt{x^Tx}$, $\|x\|_{\infty} = \max_i |x_i|$,   
$\|x\|_Q^2 = x^TQx$,  where $Q$ is a positive semidefinite (psd) matrix, and $x \geq 0$ means $x_i \geq 0$ for all $i$. 
For two vectors $x$ and $y$, 
$x/y$, and $\langle x, y\rangle$ denote the element-wise 
division and dot product, respectively. For a square matrix $X$, $X^{-1}$ is its
inverse, and $X \succeq 0$ means that $X$ is psd. Moreover,
$1_d = [1, 1, \dots, 1]^T \in \R^d$. 


\section{Related Work}

Adagrad 
\cite{duchi2011adaptive}
is the first adaptive gradient method 
in online convex learning
with coordinate-wise stepsize.
It is particularly useful for sparse learning, as parameters for
the rare features 
can take large steps. Its
stepsize schedule is competitive with the
best coordinate-wise stepsize in hindsight \cite{mcmahan2010adaptive}. 
Recently, its convergence rate with a global adaptive stepsize 
in nonconvex optimization 
is established 
\cite{ward2018adagrad}. 
It is shown that Adagrad converges to a stationary point at the optimal $\mathcal{O}(1/\sqrt{T})$ rate 
(up to a 
factor
$\log(T)$), where $T$ is the total number of iterations. 

Recall that 
the SGD iterate is the solution to the problem: 
$\theta_{t+1} =
\arg\min_\theta \langle g_t, \theta \rangle + \frac{1}{2\eta}\|\theta - \theta_{t}\|_2^2$, 
where $g_t$ 
is the gradient of the loss function $f_t$
at iteration $t$,
and 
$\theta_t \in \R^d$ 
is the parameter vector.
To incorporate information about the curvature of sequence $\{f_t\}$, 
the $\ell_2$-norm 
in the SGD update can be replaced 
by the Mahalanobis norm,
leading to
\cite{duchi2011adaptive}:
\begin{eqnarray} \label{eq:gd}
\theta_{t+1} =
\arg\min_\theta \langle g_t, \theta \rangle + \frac{1}{2\eta}\|\theta - \theta_{t}\|_{\text{Diag}(s_t)^{-1}}^2,
\end{eqnarray}
where $s_t \geq 0$. This is an instance of mirror descent \cite{nemirovsky1983problem}.  
Its regret bound has a 
gradient-related
term 
$\sum_{t=1}^T\|g_t\|_{\text{Diag}(s_t)^{-1}}^2$. 
Adagrad's
stepsize
can be obtained by examining a similar objective
\cite{duchi2011adaptive}:
\begin{equation} \label{eq:adagrad_problem}
\min_{s \in \mathcal{S}}
\; \sum_{t=1}^T\|g_t\|_{\text{Diag}(s)^{-1}}^2,
\end{equation} 
where 
$\mathcal{S} = \{s: s \geq 0,  \langle s, 1 \rangle \leq c\}$, and
$c$ is some constant. 
At optimality, 
$s_{*,i} = c\|g_{1:T, i}\|_2/\sum_{j=1}^d\|g_{1:T, j}\|_2$, 
where $g_{1:T, i} = [g_{1, i}^T, \dots, g_{T, i}^T]^T$. As $s_t$ cannot depend on
$g_j$'s with $j > t$, this suggests $s_{t, i} \propto \|g_{1:t, i}\|_2$. 
Theoretically, this choice of $s_t$ leads to a regret bound that is 
competitive with the best post-hoc optimal bound \cite{mcmahan2010adaptive}. 

To solve the expected risk minimization problem in (\ref{eq:expected_loss}),
an Adagrad variant called weighted AdaEMA
is 
recently
proposed in  
 \cite{zou2018sufficient}.
It employs weighted averaging of $g_{t, i}^2$'s for stepsize and momentum
acceleration. This is a general coordinate-wise adaptive method and includes many
Adagrad variants as special cases, including Adam and RMSprop.


\section{Blockwise Adaptive Descent}




\subsection{Blockwise 
vs
Coordinate-wise 
Adaptivity}

Let $n$ 
be the sample size, 
$d$  be the input dimensionality, and 
$m$ be the output dimensionality.
Consider 
a $L$-layer neural network,
with output
$\phi_{L-1}(\cdots\phi_2(\phi_1(XW_1)W_2 )\cdots W_{L-1})W_L$,
where $X \in \R^{n\times d}$ is the input matrix and $\{W_l \in \R^{d_{l-1} \times d_l}\}_{l=1}^L$ are the weight matrices with
$d_0 = d$ and $d_L = m$. 
The activation functions $\{\phi_l\}_{l=1}^{L-1}$ are assumed to be bijective (e.g., tanh and leaky ReLU). 
For simplicity, assume that $d_l = d = m>n$ for all $l$.
Training this neural network with the square loss corresponds to solving
the nonlinear optimization problem:
$\min_{\{W_l\}_{l=1}^L} \|\phi_{L-1}(\cdots\phi_2(\phi_1(XW_1)W_2 )\cdots
W_{L-1})W_L - Y\|^2_2$,
where $Y \in \R^{n\times m}$ is the label matrix.
Consider training  the network
layer-by-layer,
starting from the bottom one.
For layer $l$,
$W_{t+1, l} = W_{t, l} - \eta_{t, l}g_{t, l}$,
where $g_{t, l}$ is a stochastic gradient evaluated at $W_{t, l}$
at time $t$,
and $\eta_{t, l}$ is the stepsize
which may 
be adaptive in that it
depends on $g_{t, l}$.
This 
layer-wise training
is analogous to block coordinate descent, with each layer being a block.  
The optimization subproblem for the $l$th layer can be rewritten as 
\begin{eqnarray} \label{eq:nonlinear_sub_ls}
\min_{W_l} \|\Phi_l(H_{l-1}W_l) - Y\|^2_2,
\end{eqnarray}
where $\Phi_l(\cdot) = \phi_{L-1}(\cdots\phi_{l+1}(\phi_l(\cdot)W_{l+1})\cdots
W_{L-1})W_L$, $H_{l-1} = \phi_{l-1}(\cdots\phi_1(XW_1)\cdots W_{l-1})$ is
the input hidden representation
of $X$ at the $l$th layer, and $H_0 = X$. 

\begin{proposition} \label{theorem:nonlinear_large_margin}
Assume that 
$W_{l'}$'s (with $l' > l$)
are invertible.
If 
$W_{l}$ is initialized to zero,
and
$H_{l-1}$ has full row rank, 
then
the critical point that 
it converges to is also the minimum
$\ell_2$-norm solution of
(\ref{eq:nonlinear_sub_ls})
in expectation.
\end{proposition}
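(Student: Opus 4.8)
The plan is to reduce the nonlinear subproblem to a linear system, exploit the structure of the gradient to confine the iterates to a subspace, and then invoke the standard characterization of the minimum-norm solution.

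First I would observe that, under the stated assumptions, the map $\Phi_l$ is a bijection, since it is a composition of the bijective activations $\phi_l, \dots, \phi_{L-1}$ with the invertible linear maps $W_{l+1}, \dots, W_L$. Hence I can set $Z = \Phi_l^{-1}(Y)$, and the subproblem \eqref{eq:nonlinear_sub_ls} attains its global minimum value $0$ exactly when $H_{l-1}W_l = Z$. Because $H_{l-1}$ has full row rank and $d>n$, this linear system is underdetermined and consistent, so a whole affine subspace of global minimizers exists. I would then characterize the critical points: writing $P = H_{l-1}W_l$, the chain rule gives $\nabla_{W_l}\ell = H_{l-1}^\top \nabla_P \ell$, and since $H_{l-1}^\top$ has full column rank a critical point forces $\nabla_P \ell = 0$; as the Jacobian of $\Phi_l$ is nonsingular (a product of invertible weights and diagonal matrices with nonzero activation derivatives), this yields $\Phi_l(P)=Y$, i.e.\ every critical point solves $H_{l-1}W_l = Z$.

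Second --- and this is the crux --- I would track the subspace in which the iterates live. From $\nabla_{W_l}\ell = H_{l-1}^\top \nabla_P \ell$, and likewise for a mini-batch stochastic gradient $g_{t,l} = (H_{l-1})_B^\top M_t$, every column of every (stochastic) gradient lies in the row space of $H_{l-1}$ (the mini-batch row space being a subspace of it). Starting the update $W_{t+1,l} = W_{t,l} - \eta_{t,l} g_{t,l}$ from $W_{0,l}=0$ therefore keeps every column of $W_{t,l}$ in the row space of $H_{l-1}$ along every sample path, not merely in expectation. Passing to the limit, the critical point reached by layer-wise training both solves $H_{l-1}W_l = Z$ and has all columns in the row space of $H_{l-1}$.

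Finally I would close with the standard minimum-norm fact: decomposing any solution as $W_l = W_{\mathrm{row}} + W_{\mathrm{null}}$ into its row-space and null-space parts gives $\|W_l\|_F^2 = \|W_{\mathrm{row}}\|_F^2 + \|W_{\mathrm{null}}\|_F^2$, so the unique minimizer has vanishing null-space component, i.e.\ the minimum-$\ell_2$-norm solution is the one lying entirely in the row space of $H_{l-1}$. Since the limit point is such a solution, it must be the minimum-norm solution, which establishes the claim; the phrase ``in expectation'' accounts only for the fact that stochastic gradient descent on a smooth (nonconvex) objective is guaranteed to approach a critical point in expectation, the subspace invariance itself being deterministic. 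I expect the main obstacle to be the step connecting critical points to the solution set of $H_{l-1}W_l = Z$, which requires the Jacobian of $\Phi_l$ to be nonsingular and hence the activation derivatives to be nonzero at differentiable points --- delicate for leaky ReLU at its kink --- with a secondary subtlety in pinning down precisely what convergence ``in expectation'' means for the stochastic iteration.
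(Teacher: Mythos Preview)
Your proposal is correct and follows essentially the same route as the paper: establish that $\Phi_l$ is a bijection, show via the factorization $\nabla_{W_l}\ell = H_{l-1}^\top(\cdot)$ and full-rank Jacobian of $\Phi_l$ that every critical point solves $H_{l-1}W_l = \Phi_l^{-1}(Y)$, observe that the stochastic iterates started at zero remain in the row space of $H_{l-1}$, and conclude by the standard minimum-norm characterization. Your remark that the subspace invariance holds pathwise (not merely in expectation) is a sharper statement than the paper makes explicit, and your flagged concern about nonvanishing activation derivatives is exactly the assumption the paper invokes ($\phi_k'(z)\neq 0$) to get Jacobian invertibility.
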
 
As stepsize $\eta_{t, l}$ can depend on $g_{t,l}$, 
Proposition~\ref{theorem:nonlinear_large_margin}
shows that blockwise
adaptivity can find the minimum $\ell_2$-norm solution 
of (\ref{eq:nonlinear_sub_ls}).
In contrast, coordinate-wise adaptivity fails to find the minimum
$\ell_2$-norm solution even for the underdetermined linear least squares
problem \cite{wilson2017marginal}. 
Another benefit of using a blockwise stepsize is that 
the
optimizer's
extra memory cost
can be reduced. Using a coordinate-wise stepsize
requires an additional $\mathcal{O}(d)$ memory for storing estimates of the
second moment, while the blockwise stepsize only needs an extra
$\mathcal{O}(B)$ memory, where $B$ is the number of blocks. A deep network
generally has millions of parameters but only tens of layers. If we set $B$
to be the number of layers, memory reduction 
can
be significant.



There have been some recent attempts on the
use of layer-wise stepsize
in deep networks,
either by assigning a specific adaptive stepsize
to each layer or normalizing the layer-wise gradient
\cite{singh2015layer,yang2017lars,adam2017normalized,zhou2018adashift}. 
However, 
justifications
and 
convergence analysis 
are still lacking.



\subsection{Blockwise Adaptive Learning Rate
with Momentum}
\label{sec:adabc}

Let the gradient $g_t \in \R^d$ 
be partitioned 
to $\{g_{t, \mathcal{G}_b} \in \R^{d_b}: b = 1, \dots, B\}$, 
where $\mathcal{G}_b$ is the set of indices in block $b$, and $g_{t, \mathcal{G}_b}$ 
is the subvector 
of $g_t$ 
belonging to block $b$. 
Inspired by 
problem~(\ref{eq:adagrad_problem}) 
in the derivation of
Adagrad,
we consider the following variant
which imposes a block structure
on $s$:\footnote{We assume the indices in $\mathcal{G}_b$ are consecutive; otherwise, we can simply reorder the elements of the gradient.  Note that reordering does not change the result, as the objective is invariant to ordering of the coordinates. }
\begin{eqnarray}
\min_{s \in \mathcal{S}'} \sum_{t=1}^T\|g_t\|_{\text{Diag}(s)^{-1}}^2 \label{eq:opt2},
\end{eqnarray}
where $\mathcal{S}' = \{s: s = [q_1 1_{d_1}^T, \dots, q_B 1_{d_B}^T]^T \geq 0, \langle s, 1 \rangle \leq c\}$ for some
 $q_i\in \R$.
It can be easily shown that
at optimality, $q_b=c\|g_{1:T, \mG_b}\|_2/(\sqrt{d_b}\sum_{i=1}^B\sqrt{d_i}\|g_{1:T, \mG_i}\|_2)$,
where $g_{1:T, \mG_b} = [g_{1, \mG_b}^T, \dots, g_{T, \mG_b}^T]^T$. 
The optimal $q_b$ is 
thus
proportional to $\|g_{1:T, \mG_b}\|_2/\sqrt{d_b}$.  
When $s_t$ in (\ref{eq:gd}) is partitioned by the same block structure, the optimal $q_b$  
suggests
to incorporate
$\|g_{1:t, \mG_b}\|_2/\sqrt{d_b}$ into $s_t$ for block $b$ at time
$t$. Thus, we consider the following update rule with
blockwise adaptive stepsize:
\begin{eqnarray} 
v_{t, b} & = & v_{t-1, b} + \|g_{t, \mG_b}\|_2^2/d_b, \label{eq:var}\\
\theta_{t+1, \mG_b} & = & \theta_{t, \mG_b} - \eta_tg_{t, \mG_b}/(\sqrt{v_{t,b}} + \epsilon), 
\label{eq:blk_adagrad}
\end{eqnarray}
where $\epsilon$ is a hyperparameter that prevents numerical issues. 
When $B=d$, 
this update rule reduces to Adagrad.
In Appendix~\ref{app:online}, we show that it can outperform Adagrad in online convex learning.

As $v_t$ in (\ref{eq:var}) is increasing w.r.t. $t$, the update in ({\ref{eq:blk_adagrad}}) suffers from vanishing stepsize, making slow progress on 
nonconvex problems such as
deep network
training.
To alleviate this problem, 
weighted moving average momentum 
has been used in many Adagrad variants such as RMSprop, Adam and weighted AdaEMA
 \cite{zou2018sufficient}.
In this paper, 
we adopt
weighted AdaEMA 
with the use of a blockwise adaptive stepsize.
The proposed procedure, which
will be called blockwise adaptive gradient with momentum (BAGM), is shown in 
Algorithm~\ref{alg:block-mom-adagrad}.
When $B=d$ and $\epsilon=0$, BAGM reduces to weighted AdaEMA.
As weighted AdaEMA 
includes many Adagrad variants,
the proposed 
BAGM
also
covers the corresponding blockwise variants.
In Algorithm~\ref{alg:block-mom-adagrad},
$m_t$ serves as an exponential
moving averaged momentum, and $\{\beta_t\}$ is a sequence of momentum parameters. 
The $a_t$'s 
assign different weights to the past gradients in the accumulation of variance,
as:
\begin{equation} \label{eq:a}
\hat{v}_{t,b} = \sum_{i=1}^t \frac{a_i}{A_t}\frac{ \|g_{i, \mG_b}\|^2_2 }{d_b}= \frac{1}{\sum_{j=1}^ta_j}\sum_{i=1}^t a_i\frac{ \|g_{i, \mG_b}\|^2_2 }{d_b}.
\end{equation} 
In this paper, we consider the three weight sequences $\{a_t\}$ introduced in \cite{zou2018convergence}.
\setlabel{{\bf S.1}}{seq:1}: $a_t = a$ for some $a > 0$;
\setlabel{{\bf S.2}}{seq:2}: $a_t = t^{\tau}$ for some $\tau > 0$:
The fraction $a_t/A_t$ in  
(\ref{eq:a}) then decreases as
$\mathcal{O}(1/t)$.
\setlabel{{\bf S.3}}{seq:3}: $a_t = \alpha^{-t}$ for some $0 < \alpha < 1$: 
It can be shown that this is equivalent to using the exponential moving average estimate:
$v_{t,b}  =  \alpha v_{t-1, b} + (1 - \alpha)\frac{\|g_{t,
\mathcal{G}_b}\|_2^2}{d_b}$,
and $\hat{v}_{t,b}  =  \frac{v_{t,b}}{1 - \alpha^t}$.
When $B = d$, 
$\beta_t = \beta$, and $\eta_t = \eta/(\sqrt{t}(1 - \beta^t ))$, the proposed algorithm reduces to Adam. 




\begin{algorithm}[t]
\caption{BAGM: Blockwise adaptive gradient with momentum for stochastic nonconvex optimization.}
   \label{alg:block-mom-adagrad}
\begin{algorithmic}[1]
   \STATE {\bfseries Input:} $\{\eta_t\} $; 
    $\{a_t\}$;
    $\{\beta_t\}$;
    $\epsilon > 0$.
   \STATE {\bfseries initialize} 
	$\theta_1 \in \R^d$;
	$v_0 \leftarrow 0$;
	$m_0 \leftarrow 0$
	 $A_0  \leftarrow 0$;
   \FOR{$t=1, 2, \dots, T$}
     \STATE{Sample an unbiased stochastic gradient $g_t$ }
     \STATE{$A_t = A_{t-1} + a_t$}
     \FOR{$b=1, 2, \dots, B$}
      \STATE{$v_{t, b} = v_{t-1, b} + a_t \|g_{t, \mG_b}\|^2_2/d_b$}
     \STATE{$\hat{v}_{t, b} = v_{t, b} / A_t$}
      \STATE{$m_{t, \mG_b} = \beta_t m_{t - 1,\mG_b} + (1 - \beta_t) g_{t, \mG_b}$}
     \STATE{$\theta_{t+1, \mG_b} = \theta_{t, \mG_b} - \eta_tm_{t, \mG_b}/(\sqrt{\hat{v}_{t, b}} + \epsilon)$}
     \ENDFOR
   \ENDFOR
\end{algorithmic}
\end{algorithm}

\subsection{Convergence Analysis}
\label{sec:adabcm_converge}


We make the following assumptions. 

\begin{assumption} \label{assumption:smooth}
$F$ 
in (\ref{eq:expected_loss})
is lower-bounded (i.e., $F(\theta_*) = \inf_\theta F(\theta) >
-\infty$)  and 
$L$-smooth.
\end{assumption}

\begin{assumption} \label{assumption:variance}
Each block of stochastic gradient has bounded second moment,
i.e.,
$\CE_t[\|g_{t, \mathcal{G}_b}\|^2_2]/d_b \leq \sigma_b^2, \forall b \in [B],
\forall t$,
where the expectation is taken w.r.t. the random 
$f_t$. 
\end{assumption}
Assumption~\ref{assumption:variance}
implies that the variance of each block of stochastic gradient is upper-bounded by
$d_b\sigma_b^2$ (i.e., $\CE_t[\|g_{t, \mathcal{G}_b} -
\nabla_{\mG_b}F(\theta_t)\|^2_2] = \CE_t[\|g_{t, \mathcal{G}_b}\|^2_2] -
\|\nabla_{\mG_b}F(\theta_t)\|_2^2 \leq d_b\sigma_b^2$). 

We make the following assumption on sequence $\{\beta_t\}$.
This implies that 
we can
use,
for example, 
a constant $\beta_t = \beta$, or an increasing sequence $\beta_t = \beta (1 -
1/t^\tau)$.

\begin{assumption}  \label{assumption:beta}
$0 \leq \beta_t \leq \beta$ for some $0 \leq \beta < 1$.
\end{assumption}


\begin{assumption} \label{assumption:weight}
(i) $\{a_t\}$ is non-decreasing;
(ii) 
$a_t$ grows slowly
such that $\{A_{t-1}/A_t\}$ is non-decreasing and $A_t/(A_{t-1} + a_1) \leq \omega$ for some $\omega \geq 0$; 
(iii) $p \equiv \lim_{t\rightarrow \infty}A_{t-1}/A_t > \beta^2$.
\end{assumption}
This is satisfied by sequences \ref{seq:1}
(with $\omega = 1$ and $p = 1$), \ref{seq:2}
(with $\omega = 
(1 + 2^{\tau})/2$ and $p = 1$),
and \ref{seq:3}
(with $\omega = 
(1 + 1/\alpha)/2$ and $p = \alpha > \beta^2$). 

\begin{assumption} \cite{zou2018sufficient} \label{assumption:stepsize}
The stepsize $\eta_t$ is chosen such that $w_t = \eta_t/\sqrt{a_t/A_t}$ is ``almost" non-increasing, i.e.,
there exists a non-increasing sequence $\{z_t\}$ and positive constants $C_1$ and $C_2$ 
such that $C_1z_t \leq w_t \leq C_2z_t$ for all $t$. 
\end{assumption}
Assumption~\ref{assumption:stepsize}
is satisfied by 
the example sequences \ref{seq:1}, \ref{seq:2}, \ref{seq:3} when
$\eta_t =\eta/\sqrt{t}$ for some $\eta > 0$. 
Interested readers are
referred 
to \cite{zou2018sufficient}
for details.


As in weighted AdaEMA, we define a sequence of virtual estimates of the second
moment:
$\tilde{v}_{t,b} = (v_{t-1, b} + a_t\CE_t[\|g_{t, \mathcal{G}_b}\|_2^2/d_b])/A_t$.
Let $\bar{v}_{T, B} \equiv \max_{1 \leq t \leq T}\CE[\max_{b} \tilde{v}_{t,b}]$
be its maximum over all blocks and training iterations,
where the expectation 
$\CE$
is taken over all 
random $f_t$'s.
Let $\hat{A}_{t, i} = \prod_{j = i + 1}^tA_{j - 1}/A_j$ for $1 \leq i < t$ and $\hat{A}_{t, t} = 1$. For a constant $\tilde{p}$
such that $\beta^2 <  \tilde{p} < p$, define $C_a = \prod_{j = 2}^NA_{j -
1}/(A_j\tilde{p})$, where $N$ is the largest index
for which $A_{j - 1}/A_j < \tilde{p}$. When $A_1/A_2
\geq \tilde{p}$, we set $C_a = 1$. 

The following Theorem provides a bound related to the gradients.

\begin{theorem} \label{theorem:nonconvex_momentum}
Suppose that 
Assumptions~\ref{assumption:smooth}-\ref{assumption:stepsize} hold. 
Let $\rho = \beta^2/\tilde{p}$. We have
$\min_{1 \leq t \leq T}(\CE[\|\nabla F(\theta_t)\|^{4/3}_2])^{3/2} \leq
C(T)$,
where\footnote{When $T = 1$, the second term in $C(T)$ (involving summation from $t=2$ to $T$) disappears.}
$C(T) = \frac{\sqrt{2\left(\bar{v}_{T, B} + \epsilon^2\right)}}{\eta_TT}\left[\frac{2C_2}{(1 - \beta)C_1}C_0  + C_4\left[\frac{\beta}{\sqrt{C_a(1 - \rho)}}\sum_{b=1}^B\sigma_bd_b\sum_{t=2}^Tw_t\left(\sqrt{\frac{A_t}{A_{t-1}}} - 1\right) \right.\right.$ 
$ \left.\left. + \sum_{b=1}^BC_b'\left[w_1\log\left(\frac{\sigma_b^2}{\epsilon^2} + 1\right) + \omega\sum_{t=1}^{T}\eta_t\sqrt{\frac{a_t}{A_t}}\right]\right]\right]$, 
$C_0 = F(\theta_1) - F(\theta_*)$, 
$C_4 = \frac{2C_2^2}{C_1^2\sqrt{C_a}(1 - \sqrt{\rho})(1 - \beta)}$, $C_b'  =  \frac{LC_2^3w_1d_b}{C_1^3C_a(1 - \sqrt{\rho})^2} + \frac{2C_3^2C_2\sigma_bd_b}{C_1}$, 
and $C_3 =   \frac{\beta/(1 - \beta)}{\sqrt{C_aA_{1}/A_2(1 - \rho)}}  + 1$. 
\end{theorem}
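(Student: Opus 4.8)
The plan is to adapt the weighted AdaEMA analysis of \cite{zou2018sufficient} to the blockwise preconditioner, closing with a H\"older step that produces the $4/3$-power bound. First I would invoke Assumption~\ref{assumption:smooth} to write the one-step descent inequality
\[
\CE_t[F(\theta_{t+1})] \le F(\theta_t) - \eta_t\sum_{b=1}^B\Big\langle \nabla_{\mG_b}F(\theta_t),\, \CE_t\Big[\tfrac{m_{t,\mG_b}}{\sqrt{\hat v_{t,b}}+\epsilon}\Big]\Big\rangle + \tfrac{L\eta_t^2}{2}\,\CE_t\Big[\big\|\tfrac{m_t}{\sqrt{\hat v_t}+\epsilon}\big\|_2^2\Big].
\]
Since neither $m_t$ nor $\hat v_t$ is conditionally deterministic, I would replace $\hat v_{t,b}$ by the virtual estimate $\tilde v_{t,b}$, which is measurable given the history, and split
\[
\frac{m_{t,\mG_b}}{\sqrt{\hat v_{t,b}}+\epsilon} = \frac{m_{t,\mG_b}}{\sqrt{\tilde v_{t,b}}+\epsilon} + m_{t,\mG_b}\Big(\frac{1}{\sqrt{\hat v_{t,b}}+\epsilon}-\frac{1}{\sqrt{\tilde v_{t,b}}+\epsilon}\Big),
\]
treating the first summand as the signal and bounding the second as an error via Assumption~\ref{assumption:variance} together with $|\sqrt{\hat v_{t,b}}-\sqrt{\tilde v_{t,b}}|\le|\hat v_{t,b}-\tilde v_{t,b}|/(\sqrt{\hat v_{t,b}}+\sqrt{\tilde v_{t,b}})$.

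Next I would unroll $m_{t,\mG_b}=\beta_t m_{t-1,\mG_b}+(1-\beta_t)g_{t,\mG_b}$, so that $m_{t,\mG_b}$ becomes a weighted combination of the past blockwise gradients. The subtlety is that the preconditioner $\sqrt{\tilde v_{t,b}}$ at time $t$ differs from the one in force when an older gradient first entered $m_t$; comparing these across steps is exactly what generates the products $\hat A_{t,i}$ and the factors $\sqrt{A_t/A_{t-1}}-1$. Here Assumption~\ref{assumption:beta} and, crucially, Assumption~\ref{assumption:weight}(iii) ($p>\beta^2$, so that $\rho=\beta^2/\tilde p<1$) keep the resulting geometric-type sums over $\hat A_{t,i}$ finite, with growth governed by $1/(1-\rho)$ and $1/(1-\sqrt\rho)$ and the transient indices with $A_{j-1}/A_j<\tilde p$ absorbed into $C_a$. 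After rearranging, the signal inner products accumulate, up to the constants $C_1,C_2$ of Assumption~\ref{assumption:stepsize}, into a weighted sum $\sum_{t=1}^T w_t\,\CE\big[\|\nabla F(\theta_t)\|_2^2/(\sqrt{\tilde v_t}+\epsilon)\big]$, while the momentum mismatch leaves the term $\tfrac{\beta}{\sqrt{C_a(1-\rho)}}\sum_{b}\sigma_b d_b\sum_{t\ge2} w_t(\sqrt{A_t/A_{t-1}}-1)$.

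I would then bound the remaining curvature and error terms blockwise. Assumption~\ref{assumption:variance} gives $\CE_t[\|g_{t,\mG_b}\|_2^2/d_b]\le\sigma_b^2$, and the Adagrad-style telescoping $\sum_t \frac{a_t\|g_{t,\mG_b}\|_2^2/d_b}{A_t\tilde v_{t,b}}\le\log(\sigma_b^2/\epsilon^2+1)+\mathcal{O}(1)$ produces the logarithmic terms; combined with the smoothness contribution and the bound $A_t/(A_{t-1}+a_1)\le\omega$ from Assumption~\ref{assumption:weight}(ii), these assemble into the per-block constants $C_b'$. Summing the descent inequality over $t=1,\dots,T$ telescopes $F$ to $C_0=F(\theta_1)-F(\theta_*)$.

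Finally I would pass to the stated minimum via H\"older's inequality with exponents $3/2$ and $3$: writing $\|\nabla F(\theta_t)\|_2^{4/3}=\big(\|\nabla F(\theta_t)\|_2^2/(\sqrt{\tilde v_t}+\epsilon)\big)^{2/3}(\sqrt{\tilde v_t}+\epsilon)^{2/3}$ yields $\CE[\|\nabla F(\theta_t)\|_2^{4/3}]\le\big(\CE[\|\nabla F(\theta_t)\|_2^2/(\sqrt{\tilde v_t}+\epsilon)]\big)^{2/3}\big(\CE[(\sqrt{\tilde v_t}+\epsilon)^2]\big)^{1/3}$, and $(\sqrt{\tilde v_{t,b}}+\epsilon)^2\le2(\tilde v_{t,b}+\epsilon^2)$ with $\CE[\max_b\tilde v_{t,b}]\le\bar v_{T,B}$ bounds the last factor by $2(\bar v_{T,B}+\epsilon^2)$. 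Lower-bounding the weights through $w_t\ge C_1 z_t$ and the non-increasing $\{z_t\}$ by a quantity of order $\eta_T T$ then extracts the prefactor $\sqrt{2(\bar v_{T,B}+\epsilon^2)}/(\eta_T T)$ in front of the minimum over $t$, giving $C(T)$. I expect the principal obstacle to be the momentum bookkeeping of the second step: tracking how past gradients weighted by $\{\beta_t\}$ and $\{a_t\}$ interact with the evolving blockwise preconditioner, since this is where the contraction $\rho<1$, the products $\hat A_{t,i}$, and the constants $C_a$, $C_3$, $C_4$ must be simultaneously reconciled.
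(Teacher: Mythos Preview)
Your outline is largely right and shares the paper's key ingredients: the switch to the virtual $\tilde v_{t,b}$ for measurability, the Adagrad telescoping giving $\log(\sigma_b^2/\epsilon^2+1)$, geometric control via $\rho=\beta^2/\tilde p<1$ and $C_a$, and the closing H\"older step with exponents $(3/2,3)$. The main methodological difference is your treatment of momentum. You propose to fully unroll $m_t$ into a sum over all past $g_i$ and reconcile preconditioners across many steps; the paper instead sets up a one-step Lyapunov recursion for $M_t=\CE[\langle\nabla F(\theta_t),\delta_t\rangle+L\|\delta_t\|_2^2]$, obtained by algebraically decomposing $\delta_t-\tfrac{\beta_t\eta_t}{\sqrt{A_{t-1}/A_t}\,\eta_{t-1}}\delta_{t-1}$ into pieces $X_{t,b},Y_{t,b},Z_{t,b}$ and using smoothness once to write $\langle\nabla F(\theta_t),\delta_{t-1}\rangle\le\langle\nabla F(\theta_{t-1}),\delta_{t-1}\rangle+L\|\delta_{t-1}\|_2^2$. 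This buys two things: it avoids the cross terms $\langle\nabla F(\theta_t),g_i\rangle$ for $i<t$ that your unrolling leaves (which still need a smoothness-plus-preconditioner comparison across $t-i$ steps before they yield signal $\|\nabla F(\theta_i)\|^2$), and it makes $C_3,C_4$ fall out directly from bounding $\|X_{t,b}\|,\|Y_{t,b}\|$ via $\|m_{t-1,\mG_b}\|_2^2\le \hat v_{t-1,b}\,d_b/\big(C_a\,(a_{t-1}/A_{t-1})(1-\rho)\big)$. The $\sqrt{A_t/A_{t-1}}-1$ term is in fact the $\epsilon$-regularization residual $Z_{t,b}$ (from $\sqrt{A_{t-1}\hat v_{t-1,b}/A_t}+\epsilon$ versus $\sqrt{A_{t-1}/A_t}(\sqrt{\hat v_{t-1,b}}+\epsilon)$), not a pure momentum mismatch. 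Full unrolling of $m_t$ \emph{is} used in the paper, but only to bound $\sum_t\|\delta_t\|_2^2$, where no signed inner product needs to be preserved. One small correction: the signal accumulates as $\sum_t\eta_t\,\CE[\|\nabla F(\theta_t)\|_{\tilde H_t^{-1}}^2]$, weighted by $\eta_t$, not $w_t$.
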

When $B=d$,
the bound 
here is tighter than that
in \cite{zou2018sufficient}, as 
we exploit heterogeneous second-order upper bound
(Assumption~\ref{assumption:variance}).
The following Corollary shows the bound with 
high
probability.
\begin{corollary} \label{corollary:nonconvex_momentum_highpro}
With probability at least $1 - \delta^{2/3}$, we have $\min_{1 \leq t \leq T}\|\nabla F(\theta_t)\|_2^2 \leq  C(T)/\delta$.
\end{corollary}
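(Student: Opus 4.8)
The plan is to deduce the high-probability bound directly from the expectation bound in Theorem~\ref{theorem:nonconvex_momentum} by a single application of Markov's inequality, after a suitable choice of exponent. First I would let $t^*$ denote the deterministic index in $[T]$ that attains $\min_{1 \leq t \leq T}\CE[\|\nabla F(\theta_t)\|_2^{4/3}]$, so that Theorem~\ref{theorem:nonconvex_momentum} reads $(\CE[\|\nabla F(\theta_{t^*})\|_2^{4/3}])^{3/2} \leq C(T)$, or equivalently $\CE[\|\nabla F(\theta_{t^*})\|_2^{4/3}] \leq C(T)^{2/3}$. It is worth stressing that $t^*$ is a fixed (nonrandom) index, since it minimizes a deterministic sequence of expectations; this is what lets me apply Markov's inequality to a single random variable rather than to the random index realizing the pathwise minimum.

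Next I would introduce the nonnegative random variable $X = \|\nabla F(\theta_{t^*})\|_2^2$, so that $X^{2/3} = \|\nabla F(\theta_{t^*})\|_2^{4/3}$ and the bound above becomes $\CE[X^{2/3}] \leq C(T)^{2/3}$. Applying Markov's inequality to $X^{2/3}$ at threshold $(C(T)/\delta)^{2/3}$ then gives
\begin{equation*}
\Pr\!\left[X > \frac{C(T)}{\delta}\right] = \Pr\!\left[X^{2/3} > \left(\frac{C(T)}{\delta}\right)^{2/3}\right] \leq \frac{\CE[X^{2/3}]}{(C(T)/\delta)^{2/3}} \leq \frac{C(T)^{2/3}}{C(T)^{2/3}/\delta^{2/3}} = \delta^{2/3}.
\end{equation*}
Taking complements, $X \leq C(T)/\delta$ holds with probability at least $1 - \delta^{2/3}$. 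Since $\min_{1 \leq t \leq T}\|\nabla F(\theta_t)\|_2^2 \leq \|\nabla F(\theta_{t^*})\|_2^2 = X$ pointwise by the choice of $t^*$, the event $\{X \leq C(T)/\delta\}$ is contained in the event $\{\min_{1 \leq t \leq T}\|\nabla F(\theta_t)\|_2^2 \leq C(T)/\delta\}$, so the latter also has probability at least $1 - \delta^{2/3}$, which is the claim.

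Because the argument is a one-line corollary of the theorem, I do not expect any serious obstacle; the only point requiring genuine care is the bookkeeping of exponents. The $4/3$-power appearing in Theorem~\ref{theorem:nonconvex_momentum} is engineered precisely so that measuring $X$ through its $2/3$-power converts the expectation guarantee into a Markov bound in which the powers of $C(T)$ cancel and the residual power of $\delta$ comes out to $\delta^{2/3}$. I would therefore verify that exponent once explicitly and confirm the deterministic nature of $t^*$ before invoking Markov, as these are the two places where a careless step would silently change the stated failure probability.
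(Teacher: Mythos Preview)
Your proof is correct and follows essentially the same route as the paper: both apply Markov's inequality to the $2/3$-power of the squared gradient norm and use the expectation bound from Theorem~\ref{theorem:nonconvex_momentum}. The only cosmetic difference is that the paper sets $X = \min_{1\le t\le T}\|\nabla F(\theta_t)\|_2^2$ directly and uses $\CE[\min_t Y_t]\le \min_t\CE[Y_t]$ to bound $\CE[X^{2/3}]$, whereas you fix the deterministic minimizing index $t^*$ first and pass to the pathwise minimum afterward; both orderings are valid and yield the same bound.
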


\begin{corollary} \label{corollary:nonconvex_momentum_seq}
Let $\tilde{\beta}_t = \prod_{i=1}^t\beta_i$, and
$\eta_t = \eta/(\sqrt{t}(1 - \tilde{\beta}_t ))$ for some positive constant $\eta$.
When $\sum_{t=1}^Ta_t = \mathcal{O}(T^\gamma)$ for some $\gamma > 0$
(which holds for sequences \ref{seq:1} and \ref{seq:2}), $C(T) = \mathcal{O}(\log(T)/\sqrt{T})$. 
When $a_t = \alpha^{-t}$ for some $0 < \alpha < 1$
(sequence \ref{seq:3}),
$C(T) = \mathcal{O}(1)$.
\end{corollary}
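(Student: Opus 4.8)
The plan is to substitute the prescribed stepsize $\eta_t = \eta/(\sqrt{t}(1-\tilde\beta_t))$ and each weight sequence into the explicit bound $C(T)$ of Theorem~\ref{theorem:nonconvex_momentum}, and then show that $C(T)$ factors as a $\Theta(1/\sqrt{T})$ prefactor multiplying a bracketed term whose growth is controlled by two scalar sums. First I would observe that every named constant ($C_0,C_1,C_2,C_3,C_4,C_a,\omega,\rho$, and the $C_b'$) is independent of $T$: they depend only on the problem constants, on $\beta$, and on the fixed sequence parameter ($\tau$ or $\alpha$), all of which are legitimate since Assumptions~\ref{assumption:beta}--\ref{assumption:stepsize} were verified for S.1--S.3 in the text. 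Hence it suffices to track the $T$-dependence of three quantities: the prefactor $\sqrt{2(\bar v_{T,B}+\epsilon^2)}/(\eta_T T)$, the sum $S_1 = \sum_{t=2}^T w_t(\sqrt{A_t/A_{t-1}}-1)$, and the sum $S_2 = \sum_{t=1}^T \eta_t\sqrt{a_t/A_t}$.

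For the prefactor, since $\beta_t\le\beta<1$ gives $\tilde\beta_t\le\beta^t\to0$, we have $1-\tilde\beta_t\in[1-\beta,1]$, so $\eta_t=\Theta(1/\sqrt t)$ and $1/(\eta_T T)=\Theta(1/\sqrt T)$. Moreover Assumption~\ref{assumption:variance} bounds each virtual estimate by $\tilde v_{t,b}\le\max_b\sigma_b^2$, so $\bar v_{T,B}=\mathcal{O}(1)$ and the whole prefactor is $\Theta(1/\sqrt T)$. Thus the rate is governed entirely by how fast the bracket $\,\propto\, S_1+S_2$ (plus $\mathcal{O}(1)$ terms) grows.

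For the polynomial case (S.1, S.2, or more generally $A_T=\mathcal{O}(T^\gamma)$), I would first bound $\sqrt{A_t/A_{t-1}}-1 = (A_t/A_{t-1}-1)/(\sqrt{A_t/A_{t-1}}+1)\le a_t/(2A_{t-1})$; combined with $w_t=\eta_t\sqrt{A_t/a_t}$ and the fact that $A_t=\Theta(A_{t-1})$ (which follows from Assumption~\ref{assumption:weight}(ii)), this reduces $S_1$ to a sum of the same shape as $S_2$. Both then reduce to bounding $\sum_t t^{-1/2}\sqrt{a_t/A_t}$, which by Cauchy--Schwarz is at most $(\sum_t 1/t)^{1/2}(\sum_t a_t/A_t)^{1/2}$. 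The first factor is $\mathcal{O}(\sqrt{\log T})$; for the second I would use $a_t/A_t = 1-A_{t-1}/A_t\le\log(A_t/A_{t-1})$ for $t\ge2$ to telescope, giving $\sum_{t\ge2}a_t/A_t\le\log(A_T/A_1)=\mathcal{O}(\log T)$ since $A_T=\mathcal{O}(T^\gamma)$. Hence $S_1,S_2=\mathcal{O}(\log T)$, so the bracket is $\mathcal{O}(\log T)$ and $C(T)=\mathcal{O}(\log T/\sqrt T)$.

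For the exponential case S.3, the behavior is qualitatively different: $A_t=\sum_{j=1}^t\alpha^{-j}$ gives $a_t/A_t=(1-\alpha)/(1-\alpha^t)\to1-\alpha$ and $A_t/A_{t-1}\to1/\alpha$, both bounded away from the boundary values $0$ and $1$. Consequently $\sqrt{A_t/A_{t-1}}-1=\Theta(1)$, $\sqrt{a_t/A_t}=\Theta(1)$, and $w_t=\eta_t/\sqrt{a_t/A_t}=\Theta(1/\sqrt t)$, so both sums become $\sum_t\Theta(1/\sqrt t)=\Theta(\sqrt T)$. Multiplying the $\Theta(\sqrt T)$ bracket by the $\Theta(1/\sqrt T)$ prefactor yields $C(T)=\mathcal{O}(1)$. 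The main obstacle is the polynomial case: turning the heuristic $a_t/A_t=\Theta(1/t)$ into a sequence-independent bound, for which the telescoping estimate $\sum a_t/A_t=\mathcal{O}(\log A_T)$ combined with Cauchy--Schwarz is the crucial device, while the remaining work is routine bookkeeping of $T$-independent constants.
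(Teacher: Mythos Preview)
Your argument is correct, and for the exponential case S.3 it matches the paper's proof essentially line for line. For the polynomial-growth case, however, your route is genuinely different from (and somewhat more general than) the paper's. The paper simply plugs in the specific form $a_t=at^\tau$, uses $A_t=\Theta(t^{1+\tau})$, and computes each summand directly: $\eta_t\sqrt{a_t/A_t}=\mathcal{O}(1/t)$, $w_t=\mathcal{O}(1)$, and $\sqrt{A_t/A_{t-1}}-1=\mathcal{O}(1/t)$, so that both $S_1$ and $S_2$ are bounded term-by-term by a harmonic sum and hence are $\mathcal{O}(\log T)$. You instead work only from the hypothesis $A_T=\mathcal{O}(T^\gamma)$ together with Assumption~\ref{assumption:weight}, reducing $S_1$ to $S_2$ via $\sqrt{A_t/A_{t-1}}-1\le a_t/(2A_{t-1})$ and $A_t=\Theta(A_{t-1})$, and then handling $S_2$ by Cauchy--Schwarz plus the telescoping bound $\sum_{t\ge 2}a_t/A_t\le\log(A_T/A_1)$. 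The paper's approach is shorter and more explicit for the two named sequences, but yours actually proves the corollary in the generality in which it is stated (arbitrary admissible $\{a_t\}$ with polynomial $A_T$), which the paper's direct computation does not.
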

When $B=d$, we obtain the same non-asymptotic convergence rates as in \cite{zou2018sufficient}.  
Note that
SGD is analogous to BAGM with $B=1$, as they both use a single stepsize for all
coordinates and the convergence rates depend on the same second-order moment upper
bound in Assumption~\ref{assumption:variance}. 
With a decreasing
stepsize, 
SGD also has a convergence rate of $O(\log(T)/T)$,
which can be seen by 
setting their
stepsize
$\gamma_k$ to $\eta/\sqrt{k}$ in (2.4) of \cite{ghadimi2013stochastic}. 
Thus, our rate is as good as SGD. 


Next, we compare the effect of $B$ on convergence. 
As $\bar{v}_{T, B}$ in $C(T)$ depends on the sequence $\{\theta_t\}$, a direct
comparison is difficult.
Instead, we study an 
upper bound 
looser 
than
$C(T)$. First, we introduce
the following assumption, which is stronger than Assumption~\ref{assumption:variance}
(that only bounds the expectation).
\begin{assumption} \label{assumption:strict}
$\|g_{t, \mG_b}\|_2^2/d_b \leq G_b^2, \forall b\in [B]$ and $\forall t$. 
\end{assumption} 

With Assumption~\ref{assumption:strict}, 
it can be easily shown
that $\bar{v}_{T, B} \leq \max_bG_b^2$.
We can then
define a looser upper bound $\tilde{C}(T)$ by replacing
$\bar{v}_{T, B}$ in $C(T)$ 
with $\max_bG_b^2$.  
We proceed to compare the convergence using 
coordinate-wise stepsize (with $B=d$)
and 
blockwise stepsize
(with $B=\tilde{B}$ for some $\tilde{B}$).
Note that 
when $B=d$, Assumption~\ref{assumption:strict} becomes $g_{t, i}^2 \leq G_i^2$ for some $G_i$,
and Assumption~\ref{assumption:variance} becomes $\CE_t[g_{t, i}^2] \leq \sigma_i^2$ for some $ \sigma_i$.  
When $B=\tilde{B}$,
we assume that Assumption~\ref{assumption:variance} is tight in the sense that
$\sigma_b^2 \leq \sum_{i \in \tilde{\mG}_b}\sigma_i^2/d_b$,\footnote{Note that
$\frac{1}{d_b}\CE_t[\|g_{t, \tilde{\mG}_b}\|_2^2] = \frac{1}{d_b}\sum_{i\in\tilde{\mG}_b}\CE_t[g_{t,i}^2] \leq \frac{1}{d_b}\sum_{i\in\tilde{\mG}_b}\sigma_i^2$. On the other hand, 
$\CE_t[\|g_{t, \tilde{\mG}_b}\|_2^2]/d_b
\leq \sigma_b^2$.
Thus, this bound is tight in the sense that $\sigma_b^2\leq
\frac{1}{d_b}\sum_{i\in\tilde{\mG}_b}\sigma_i^2$.}
where $\tilde{\mG}_b$ is the set of indices in 
block 
$b$.
The following Corollary shows that 
blockwise stepsize
can have faster convergence than coordinate-wise stepsize.

\begin{corollary}  \label{corollary:nonconvex_momentum_comparison}
Assume that 
Assumption~\ref{assumption:strict} holds. 
Let $\tilde{C}_d(T)$ and $\tilde{C}_{\tilde{B}}(T)$ be the values of $\tilde{C}(T)$ for $B=d$ and $B=\tilde{B}$, respectively. 
Define
$r_1 := \frac{\sum_{b=1}^{\tilde{B}}\sum_{i \in \tilde{\mG}_b}\log\left(\sigma_i^2/\epsilon^2 + 1\right)}{\sum_{b=1}^{\tilde{B}}d_b\log\left(\sigma_b^2/\epsilon^2 + 1\right)}$, 
$r_2 := \frac{\sum_{b=1}^{\tilde{B}}\sum_{i \in \tilde{\mG}_b}\sigma_i}{\sum_{b=1}^{\tilde{B}}\sigma_bd_b}$ and $r_3 := \frac{\sum_{b=1}^{\tilde{B}}\sum_{i \in \tilde{\mG}_b}\sigma_i\log\left(\sigma_i^2/\epsilon^2 + 1\right)}{\sum_{b=1}^{\tilde{B}}\sigma_bd_b\log\left(\sigma_b^2/\epsilon^2 + 1\right)}$. Let $r_{\min} = \min(r_1, r_2, r_3)$. 
Then, 
$\frac{\tilde{C}_d(T)}{\tilde{C}_{\tilde{B}}(T)} \geq \min(1, r_{\min})\sqrt{(\max_b\max_{i \in \tilde{\mG}_b}G_i^2 + \epsilon^2)/(\max_bG_b^2 +
\epsilon^2)}$.
\end{corollary}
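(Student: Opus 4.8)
The plan is to factor $\tilde{C}(T)$ into a block-dependent prefactor times a bracketed sum, and bound the two factors separately. First I would write $\tilde{C}(T) = \frac{\sqrt{2(\max_b G_b^2 + \epsilon^2)}}{\eta_T T}\,[\mathrm{bracket}]$ and observe that, apart from $\sqrt{2(\max_b G_b^2+\epsilon^2)}$, every constant appearing in $\tilde{C}(T)$ is independent of the block partition: this includes $C_0, C_1, C_2, C_3, C_4, C_a, \rho, \beta, \omega, w_1$, the smoothness constant $L$, the factor $\eta_T T$, and the time sums $\sum_{t=2}^T w_t(\sqrt{A_t/A_{t-1}}-1)$ and $\sum_{t=1}^T \eta_t\sqrt{a_t/A_t}$. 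The only genuinely block-dependent quantities are the three sums $\sum_b \sigma_b d_b$, $\sum_b d_b\log(\sigma_b^2/\epsilon^2+1)$, $\sum_b \sigma_b d_b\log(\sigma_b^2/\epsilon^2+1)$, together with $\sum_b d_b = d$, which is actually invariant under the partition. I would make this structure explicit by expanding $C_b' = \kappa_1 d_b + \kappa_2 \sigma_b d_b$ with block-free $\kappa_1,\kappa_2 \geq 0$, so the whole bracket collapses to a nonnegative linear combination $P + \alpha_1\sum_b\sigma_b d_b + \alpha_2\sum_b d_b\log(\sigma_b^2/\epsilon^2+1) + \alpha_3 d + \alpha_4\sum_b\sigma_b d_b\log(\sigma_b^2/\epsilon^2+1)$ with block-free coefficients $P,\alpha_1,\alpha_2,\alpha_3,\alpha_4 \geq 0$.

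The second step handles the prefactor ratio. Since $\tilde{C}(T)$ differs from $C(T)$ only by replacing $\bar{v}_{T,B}$ with $\max_b G_b^2$, and since for $B=d$ one has $\max_b G_b^2 = \max_i G_i^2 = \max_b\max_{i\in\tilde{\mG}_b}G_i^2$, the ratio of prefactors is exactly $\sqrt{(\max_b\max_{i\in\tilde{\mG}_b}G_i^2+\epsilon^2)/(\max_b G_b^2+\epsilon^2)}$, which is precisely the square-root factor in the statement. It then remains only to show that the bracket ratio is at least $\min(1,r_{\min})$.

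For the bracket ratio I would invoke the definitions of $r_1,r_2,r_3$ to rewrite the $B=d$ sums in terms of the $B=\tilde{B}$ sums: $\sum_i\log(\sigma_i^2/\epsilon^2+1)=r_1\sum_b d_b\log(\sigma_b^2/\epsilon^2+1)$, $\sum_i\sigma_i=r_2\sum_b\sigma_b d_b$, and $\sum_i\sigma_i\log(\sigma_i^2/\epsilon^2+1)=r_3\sum_b\sigma_b d_b\log(\sigma_b^2/\epsilon^2+1)$, while the term $\alpha_3 d$ and the constant $P$ are unchanged, i.e. carry effective factor $1$. Writing $\mu=\min(1,r_{\min})$, each summand of the $B=d$ bracket equals the corresponding $B=\tilde{B}$ summand scaled by one of $r_1,r_2,r_3$ or $1$, and each of these is $\geq\mu$; because all coefficients and all sums are nonnegative, the termwise inequality $(\mathrm{bracket})_d \geq \mu\,(\mathrm{bracket})_{\tilde{B}}$ follows immediately. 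Multiplying this bracket ratio by the prefactor ratio yields the claimed lower bound.

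The main obstacle I anticipate is bookkeeping rather than conceptual: I must verify the nonnegativity and block-independence of every coefficient after expanding $C_b'$, and in particular confirm that $S_0=\sum_{t=2}^T w_t(\sqrt{A_t/A_{t-1}}-1)\geq 0$ (which holds since $a_t\geq 0$ gives $A_t\geq A_{t-1}$, hence each $\sqrt{A_t/A_{t-1}}\geq 1$) and that $C_0=F(\theta_1)-F(\theta_*)\geq 0$. The one point demanding care is the constant contribution $\alpha_3 d$ arising from $\kappa_1 S\sum_b d_b$: because $\sum_b d_b=d$ is invariant under the partition, its effective scaling factor is exactly $1$, and since $1\geq\mu$ this term cannot weaken the bound. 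Stating this explicitly is what makes the termwise comparison airtight.
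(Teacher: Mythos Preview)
Your proposal is correct and follows essentially the same approach as the paper's proof: both factor $\tilde{C}(T)$ into the prefactor $\sqrt{2(\max_b G_b^2+\epsilon^2)}/(\eta_T T)$ times a bracket, expand $C_b'=\kappa_1 d_b+\kappa_2\sigma_b d_b$ so the bracket becomes a nonnegative combination of $P$, $\sum_b\sigma_b d_b$, $\sum_b d_b\log(\sigma_b^2/\epsilon^2+1)$, $\sum_b\sigma_b d_b\log(\sigma_b^2/\epsilon^2+1)$, and $d$, then substitute $r_1,r_2,r_3$ to compare termwise. Your write-up is slightly more abstract (the paper carries out the full expansion explicitly), but the argument is the same.
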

Note that $r_{\min}$ can be larger than $1$ as 
$\sigma_b^2 \leq \frac{1}{d_b}\sum_{i\in \tilde{\mG}_b}\sigma_i^2$. 
Corollary~\ref{corollary:nonconvex_momentum_comparison}
then indicates that blockwise adaptive stepsize will
lead to improvement if
$\sqrt{(\max_b\max_{i \in \tilde{\mG}_b}G_i^2 + \epsilon^2)/(\max_bG_b^2 +
\epsilon^2)}  > \frac{1}{r_{\min}}$. 
Assume that the upper bound $G_b$ is tight so that $G_b^2 \leq \frac{1}{d_b}\sum_{i \in \tilde{\mG}_b}G_i^2$.  
Thus, $\max_b\max_{i \in \tilde{\mG}_b}G_i^2 \geq \max_bG_b^2$, and the above condition is likely to hold when $r_{\min}$ is close to $1$.  
From the definitions of $r_1$, $r_2$ and $r_3$, we can see that they get close to
$1$ when $\{\sigma_i^2\}_{i \in \tilde{\mG}_b}$ are close to $\sigma_b^2$
(i.e., $\{\sigma_i^2\}_{i \in \tilde{\mG}_b}$ has low variability). 
In particular, $r_{\min} = 1$ when $\sigma_i = \sigma_b$ for all $i \in \tilde{\mG}_b$ (note that $\sigma_i = \sigma_b \notimplies G_i = G_b$).   
This is empirically verified 
in 
Appendix~\ref{sec:verify_corollary}.


\subsection{Uniform Stability and
Generalization Error}
\label{sec:gen_error_as}

Given a sample $S = \{z_i\}_{i=1}^n$ of $n$ examples drawn
i.i.d. from an underlying unknown data distribution $\D$, one often learns the model by minimizing the empirical risk:
$\min_{\theta} \Phi_S(\theta) \equiv \frac{1}{n}\sum_{i=1}^nf(\theta; z_i)$,
where $\theta = M(S)$ is the output of a possibly randomized algorithm $M$ 
(e.g., SGD) 
running on data $S$.  

\begin{definition}
\cite{hardt2016train} Let $S$ and $S'$ be two samples of size $n$ that differ in only one example. 
Algorithm $M$ is
{\em $\epsilon_{u}$-uniformly stable} if
$\epsilon_{stab}\equiv \sup_{S, S'}\sup_{z\in\D}\CE_M[f(M(S); z) - f(M(S'); z)] \leq \epsilon_{u}$.
\end{definition}
The generalization error \cite{hardt2016train}
is defined as 
$\epsilon_{gen}\equiv\CE_{S, M}[\Phi_S(M(S)) - F(M(S))]$, where the expectation is taken w.r.t. the sample $S$ and randomness of $M$. 
It is shown 
that the generalization error 
is bounded by the uniform stability of $M$,
i.e., $|\epsilon_{gen}| \leq \epsilon_{stab}$
\cite{hardt2016train}.
In other words, the more uniformly stable an algorithm is, the lower is its generalization error.

Let $\Delta_t = \|\theta_t - \theta_t'\|_2$, and
$\tilde{\Delta}_{t}(z)=
|f(\theta_{t}; z) - f(\theta_{t}'; z)|$,
where $\theta_t, \theta_t'$ are the $t$th iterates of BAGM 
on $S$ and $S'$, respectively. 
The following shows how $\CE[\tilde{\Delta}_{t}(z)]$ (uniform stability) grows with $t$.


\begin{proposition} \label{prop:gen_error}
Assume that $f$ is $\tilde{\gamma}$-Lipschitz\footnote{In other words, $|f(\theta;z) - f(\theta';z)| \leq \tilde{\gamma}\|\theta - \theta'\|_2$ for any $z$.}. 
Suppose that 
Assumptions~\ref{assumption:smooth}-\ref{assumption:stepsize} hold,
$\beta_t=0$,
and $\theta_1 = \theta_1'$.
For any $z\in \D$, we
have
$
\CE[\tilde{\Delta}_{t+1}(z)] \leq  \frac{2\tilde{\gamma}C_2}{nC_1}\sqrt{\left[w_1^2\sum_{b=1}^Bd_b\log\left(\sigma_b^2/\epsilon^2 + 1\right) 
+ d\omega\sum_{k=1}^{t}\eta_k^2\right]t} + \left(1 - \frac{1}{n}\right)\tilde{\gamma}W_t,
$
where $W_t = \tilde{\gamma}\sum_{k=1}^t\eta_k\CE\left[\max_b\left|1/(\sqrt{\hat{v}_{k, b}} + \epsilon) - 1/(\sqrt{\hat{v}_{k, b}'} + \epsilon)\right|\right] + L\sum_{k=1}^t\eta_k\CE\left[\Delta_k/(\sqrt{\min_b\hat{v}_{k, b}} + \epsilon)\right]$.
\end{proposition}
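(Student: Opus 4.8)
The plan is to couple the two runs of BAGM on $S$ and $S'$ by feeding them a shared sequence of sampled indices, track the growth of the parameter gap $\Delta_t = \|\theta_t - \theta_t'\|_2$, and then convert back to $\CE[\tilde{\Delta}_{t+1}(z)]$ through the $\tilde{\gamma}$-Lipschitz property, since $\tilde{\Delta}_{t+1}(z) \leq \tilde{\gamma}\Delta_{t+1}$. Because $\beta_t = 0$, the momentum collapses to $m_{t,\mG_b} = g_{t,\mG_b}$, so the per-block update is $\theta_{t+1,\mG_b} = \theta_{t,\mG_b} - \eta_t g_{t,\mG_b}/(\sqrt{\hat{v}_{t,b}}+\epsilon)$. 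With $\theta_1 = \theta_1'$ we have $\Delta_1 = 0$, and the triangle inequality gives the unrolled bound $\Delta_{t+1} \leq \sum_{k=1}^t\delta_k$, where $\delta_k$ is the $\ell_2$-norm of the difference of the two update vectors at step $k$.

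First I would condition on the history before the $k$th sampled index and split into two cases. With probability $1-1/n$ the sampled example is shared, so both runs use the same loss; I decompose the per-block update difference as $\frac{\nabla_{\mG_b}f(\theta_k;z)-\nabla_{\mG_b}f(\theta_k';z)}{\sqrt{\hat{v}_{k,b}}+\epsilon} + \nabla_{\mG_b}f(\theta_k';z)\big(\frac{1}{\sqrt{\hat{v}_{k,b}}+\epsilon}-\frac{1}{\sqrt{\hat{v}_{k,b}'}+\epsilon}\big)$. Taking the full-vector $\ell_2$-norm, the first part is controlled by $L$-smoothness and by lower-bounding each block denominator with $\sqrt{\min_b\hat{v}_{k,b}}+\epsilon$, giving $L\Delta_k/(\sqrt{\min_b\hat{v}_{k,b}}+\epsilon)$; the second part is controlled by $\|\nabla f(\theta_k';z)\|_2\leq\tilde{\gamma}$ (from Lipschitzness) and by pulling out $\max_b|1/(\sqrt{\hat{v}_{k,b}}+\epsilon)-1/(\sqrt{\hat{v}_{k,b}'}+\epsilon)|$. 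This bound is independent of which shared example is drawn, so averaging over them is immediate; summing over $k$, multiplying by $1-1/n$ and by the outer $\tilde{\gamma}$ reproduces exactly the term $(1-1/n)\tilde{\gamma}W_t$.

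Next, with probability $1/n$ the sampled example is the one in which $S$ and $S'$ differ, so the two gradients come from different examples and smoothness is unavailable; here I bound $\delta_k$ by the sum of the two individual update magnitudes, $\eta_k(\|[g_{k,\mG_b}/(\sqrt{\hat{v}_{k,b}}+\epsilon)]_b\|_2+\|[g_{k,\mG_b}'/(\sqrt{\hat{v}_{k,b}'}+\epsilon)]_b\|_2)$. The task reduces to bounding $\CE[\sum_{k=1}^t\eta_k\|[g_{k,\mG_b}/(\sqrt{\hat{v}_{k,b}}+\epsilon)]_b\|_2]$. I apply Cauchy--Schwarz across $k$ to extract a factor $\sqrt{t}$, then Jensen (concavity of $\sqrt{\cdot}$) to move the expectation inside the root, leaving $\CE[\sum_{k}\eta_k^2\sum_b\|g_{k,\mG_b}\|_2^2/(\sqrt{\hat{v}_{k,b}}+\epsilon)^2]$ to be estimated.

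The main obstacle is this last quantity, which needs the adaptive-gradient telescoping argument. Using $\eta_k = w_k\sqrt{a_k/A_k}$, the bound $(\sqrt{\hat{v}_{k,b}}+\epsilon)^2\geq\hat{v}_{k,b}+\epsilon^2$, and the identity $a_k\|g_{k,\mG_b}\|_2^2/d_b = v_{k,b}-v_{k-1,b}$, each summand reduces to $w_k^2 d_b\,(\hat{v}_{k,b}-(A_{k-1}/A_k)\hat{v}_{k-1,b})/(\hat{v}_{k,b}+\epsilon^2)$; the inequality $1-x\le\log(1/x)$ telescopes the leading piece into $\sum_b d_b\log(\sigma_b^2/\epsilon^2+1)$ once the expectation is passed through the virtual estimates $\tilde{v}_{k,b}$ and Assumption~\ref{assumption:variance} is invoked, while the weight-ratio remainder, controlled by the slow-growth bound $A_k/(A_{k-1}+a_1)\le\omega$ (Assumption~\ref{assumption:weight}(ii)), produces the $d\omega\sum_k\eta_k^2$ term. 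The almost-monotonicity $C_1 z_k\le w_k\le C_2 z_k$ (Assumption~\ref{assumption:stepsize}) lets me replace $w_k$ by $(C_2/C_1)w_1$, yielding the prefactor $2C_2/(nC_1)$, with the factor $2$ coming from the two update magnitudes and $1/n$ from the case probability. Combining both cases and multiplying by $\tilde{\gamma}$ gives the stated bound. The delicate bookkeeping is in keeping the stepsize-gap and $\Delta_k$ contributions of $W_t$ in expectation rather than over-simplifying them, since they encode the history-dependent discrepancy between the blockwise denominators $\hat{v}_{k,b}$ and $\hat{v}_{k,b}'$ and must be left implicit.
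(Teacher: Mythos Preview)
Your proposal is correct and follows essentially the same route as the paper: coupling the two trajectories, splitting on whether the drawn index is the differing one, decomposing the shared case into a smoothness term and a stepsize-mismatch term, and controlling the differing case via Cauchy--Schwarz plus the adaptive-gradient telescoping bound. The paper packages the last step as an invocation of Lemma~\ref{lemma:nonconvex-momentum-grad-weighted-bound} (applied to the squared quantities $w_t^2$, which are again ``almost'' non-increasing), whereas you sketch that lemma's proof inline; your telescoping description is slightly loose (the clean telescope is on $v_{k,b}$ rather than $\hat{v}_{k,b}$, the virtual estimates $\tilde v_{k,b}$ are not actually needed here, and the precise split into the $w_1^2\log$ piece and the $d\omega\sum_k\eta_k^2$ piece comes from Abel summation rather than a per-summand split), but these are cosmetic and do not affect the argument.
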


Using Proposition~\ref{prop:gen_error},
we can study how $B$ affects the growth of $\CE[\tilde{\Delta}_{t+1}(z)]$.
Consider the first term on the RHS of the bound.
Recall that $\sigma_b^2 \leq \frac{1}{d_b}\sum_{i\in \tilde{\mG}_b}\sigma_i^2$.
If $\sigma_b^2 = \frac{1}{d_b}\sum_{i\in \tilde{\mG}_b}\sigma_i^2$, 
this term is smallest when $B=d$; otherwise,
some $B < d$ will make this term smallest.  
For the $W_t$ term,
as $\frac{1}{\tilde{B}}\sum_{b=1}^{\tilde{B}}\hat{v}_{k, b} (\text{on which } B=1 \text{ depends}) \geq \min_b\hat{v}_{k, b} \geq \min_b\min_{i \in \tilde{G}_b}\hat{v}_{k, i} (\text{on which } B=d \text{ depends})$,
the $\min_b\hat{v}_{k, b}$ term inside is
typically the smallest when $B=d$, and is largest when $B=1$.
Thus, the first term of the bound is small when $B$ is close to $d$, while $W_t$ is small when $B$ approaches $1$.
As a result, 
for $B$ equals to some $1<\tilde{B} <d$,
$\CE[\tilde{\Delta}_{t+1}(z)]$, and thus
the generalization error,
grows slower than those of $B=d$ and $B=1$.  




\section{Experiments}

In this section, we perform experiments on 
CIFAR-10
(Section~\ref{sec:resnet}), 
ImageNet
(Section~\ref{sec:imagenet}), and WikiText-2 (Section~\ref{sec:lm}). 
All the experiments are run on a AWS p3.16 instance with 8 NVIDIA V100 GPUs. 
We introduce four block construction strategies: 
\setlabel{{\bf B.1}}{bs:1}: Use a single adaptive stepsize
for each parameter tensor/matrix/vector. A parameter tensor can be the kernel tensor in a convolution layer, a parameter matrix can be the weight matrix in a fully-connected layer, and a parameter vector can be a bias vector; 
\setlabel{{\bf B.2}}{bs:2}: Use an adaptive stepsize for each output dimension 
of the parameter matrix/vector in a fully connected layer, and 
an adaptive stepsize
for each output channel in the convolution layer;  
\setlabel{{\bf B.3}}{bs:3}: Use an adaptive stepsize for each output dimension 
of the parameter matrix/vector in a fully connected layer, and 
an adaptive stepsize
for each kernel
in the convolution layer; 
\setlabel{{\bf B.4}}{bs:4}: Use an adaptive stepsize for each input dimension 
of the parameter tensor/matrix, and an adaptive stepsize for each parameter vector. 

We compare 
the proposed BAGM 
(with block construction approaches \ref{bs:1}, \ref{bs:2},
\ref{bs:3}, \ref{bs:4})
with the following baselines:
(i)
Nesterov's accelerated gradient (NAG)
\cite{sutskever2013importance}; and (ii) Adam
\cite{kingma2014adam}. These two algorithms are widely applied in deep
networks \cite{zaremba2014recurrent,he2016deep,vaswani2017attention}. 
NAG provides a strong baseline with good generalization performance, while Adam
serves as a fast counterpart with coordinate-wise adaptive stepsize. 

As grid search for all hyper-parameters is very computationally expensive, we 
only tune
the most important ones
using a validation set and 
fix the rest.
We use a constant $\beta_t = \beta$ (momentum parameter) and exponential increasing
sequence \ref{seq:3} with $\alpha =  0.999$ for BAGM. For Adam, we also fix its
second moment parameter to $0.999$ and tune its momentum parameter. 
Note that with such configurations, Adam is a special case of BAGM with $B=d$ (i.e., weighted AdaEMA). 
For all the
adaptive methods, 
we use
$\epsilon=10^{-3}$
as suggested in \cite{zaheer2018adaptive}.


\subsection{ResNet on CIFAR-10}
\label{sec:resnet}

We train a deep residual network
from the MXNet Gluon CV model zoo\footnote{\url{https://github.com/dmlc/gluon-cv/blob/master/gluoncv/model_zoo/model_zoo.py}}
on the CIFAR-10 data set. 
We use the 56-layer and 110-layer networks as in \cite{he2016deep}. 
10\% of the training data are carved out as validation set. We perform grid search
using the validation set
for the initial stepsize
$\eta$ and momentum parameter $\beta$ on ResNet56. The obtained hyperparameters are
then also used on ResNet110. 
We follow the similar setup as in \cite{he2016deep}. 
Details are in 
Appendix~\ref{sec:exp_setup_cifar10}.
  
Table~\ref{tab:val_resnet} shows
the testing errors of the various
methods. 
With a large $\epsilon=10^{-3}$, the testing performance of Adam matches that of NAG. 
This agrees with \cite{zaheer2018adaptive} that a larger
$\epsilon$ reduces adaptivity and improves generalization performance. It also
agrees with Proposition~\ref{prop:gen_error} that the bound is smaller when $\epsilon$ is larger. 
Specifically, Adam has lower testing error than NAG on ResNet56 but higher on ResNet110.  
For both models, BAGM reduces the testing error over Adam for all block construction strategies used.  
In particular, except \ref{bs:3}, BAGM with all other schemes outperform
NAG.


\begin{table}[ht]
\begin{center}
\begin{tabular}{c|c|c|c|c}
 \hline 
  &   \multicolumn{2}{c}{CIFAR-10} &  \multicolumn{2}{|c}{ImageNet}   \\  
 & ResNet56 & ResNet110 & \multicolumn{2}{c}{ResNet50}  \\\hline 
& \multicolumn{2}{c|}{test error (\%)}  & top-1 validation error (\%) & top-5 validation error (\%) \\\hline 
NAG & $6.91$ & $6.28$ & $20.94$ & $5.51$ \\ 
Adam  & $6.64$  & $6.35$ & $21.04$  & $5.47$ \\\hline 
BAGM-\ref{bs:1}  & $\mathbf{6.26}$ & $\mathbf{5.94}$ & $\mathbf{20.79}$ & $5.43$ \\
BAGM-\ref{bs:2} & $6.51$  & $6.27$ & $20.90$ & $\mathbf{5.39}$ \\
BAGM-\ref{bs:3} & $6.52$  & $6.31$ & $20.88$ & $5.52$  \\
BAGM-\ref{bs:4} & $6.38$  & $6.02$ & $20.82$ & $5.48$ \\
\hline
\end{tabular}
\end{center}
\caption{Testing errors (\%) on CIFAR-10 and validation set errors (\%) on ImageNet. 
The best results
are bolded.
}
\label{tab:val_resnet}
\vskip -0.1in
\end{table}

\begin{figure}[ht]
\begin{center}
\subfigure[Training error.]{\includegraphics[width=0.32\columnwidth]{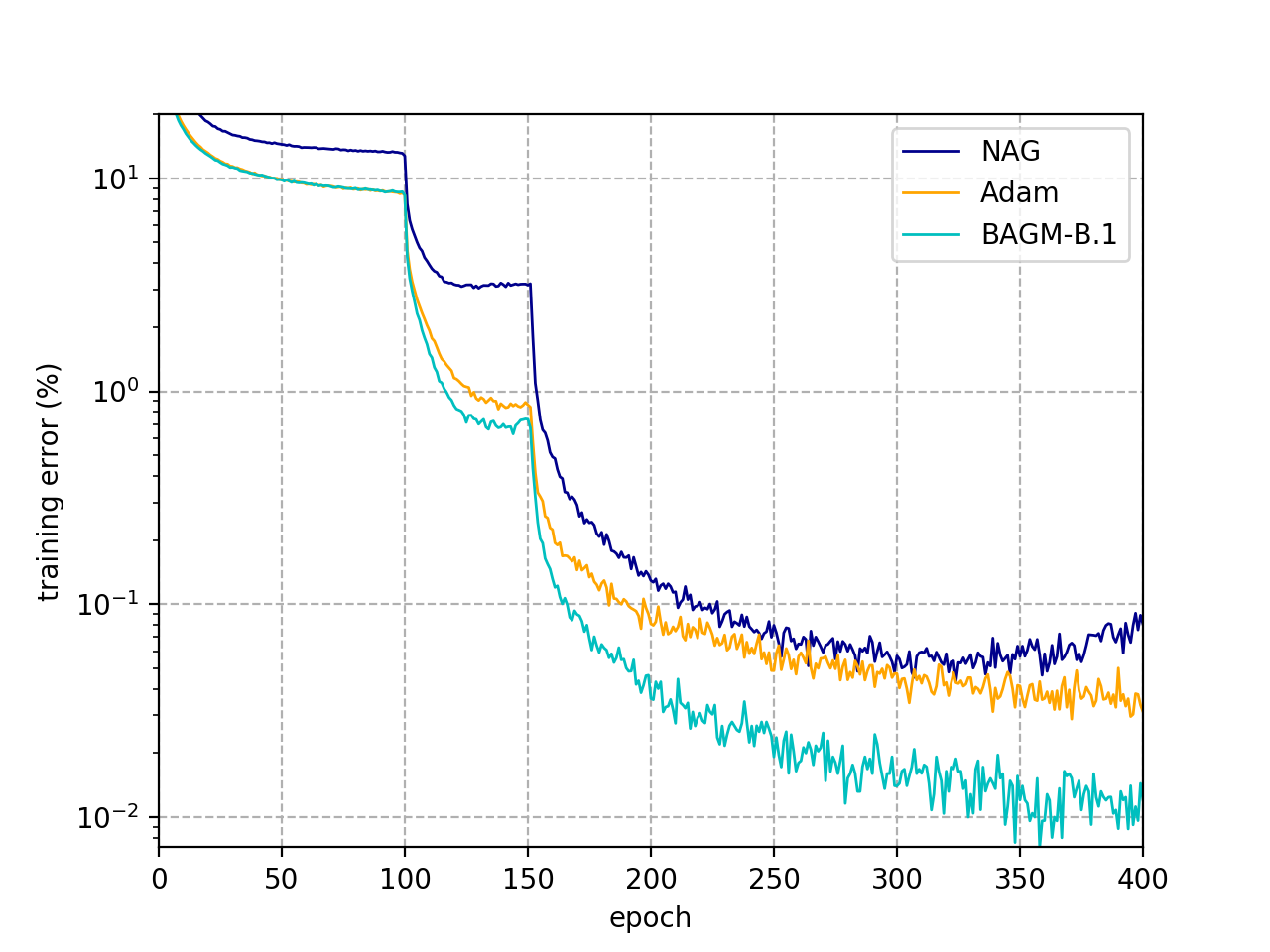}}
\subfigure[Testing error.]{\includegraphics[width=0.32\columnwidth]{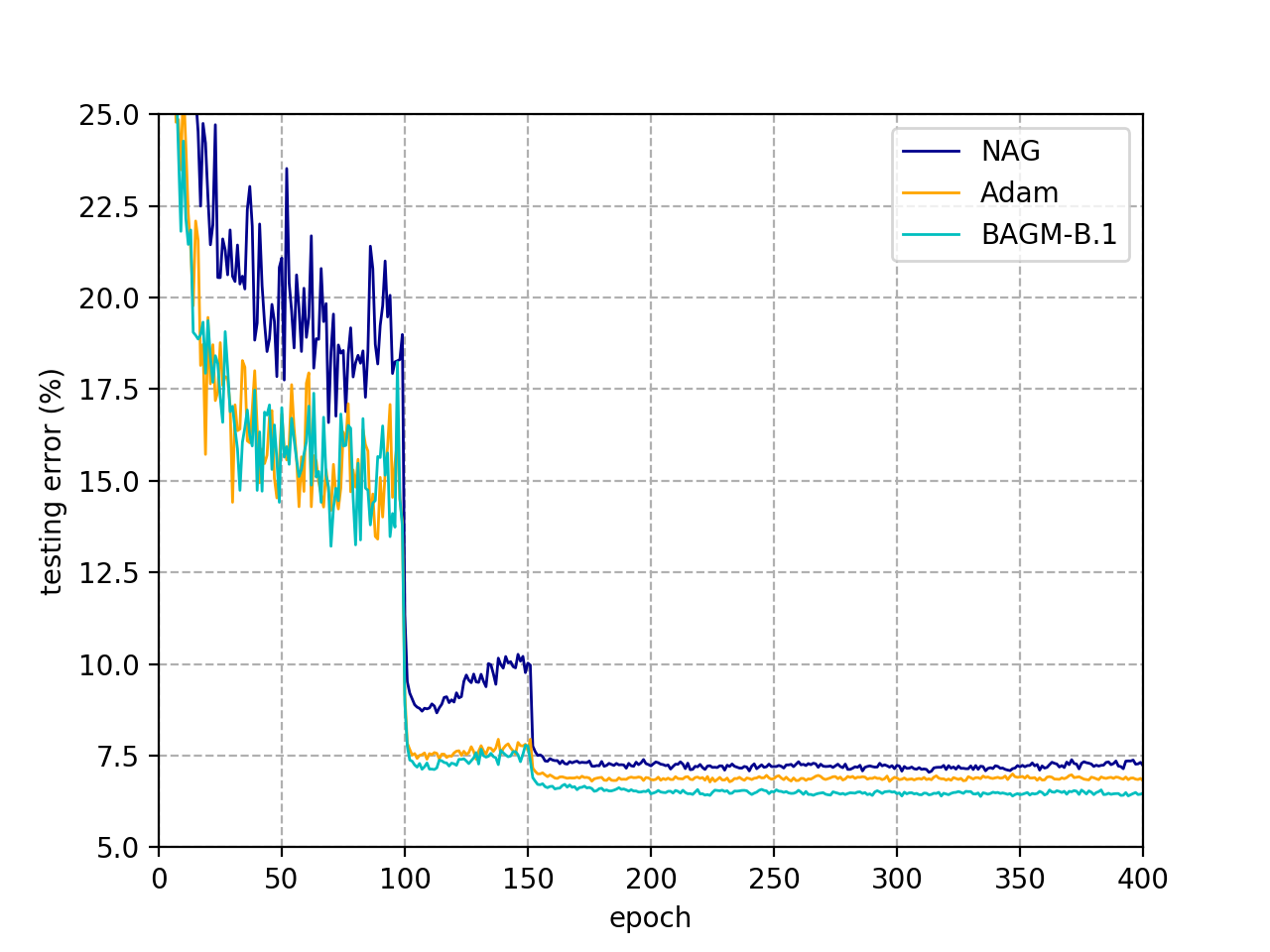}} 
\subfigure[Generalization error.]{\includegraphics[width=0.32\columnwidth]{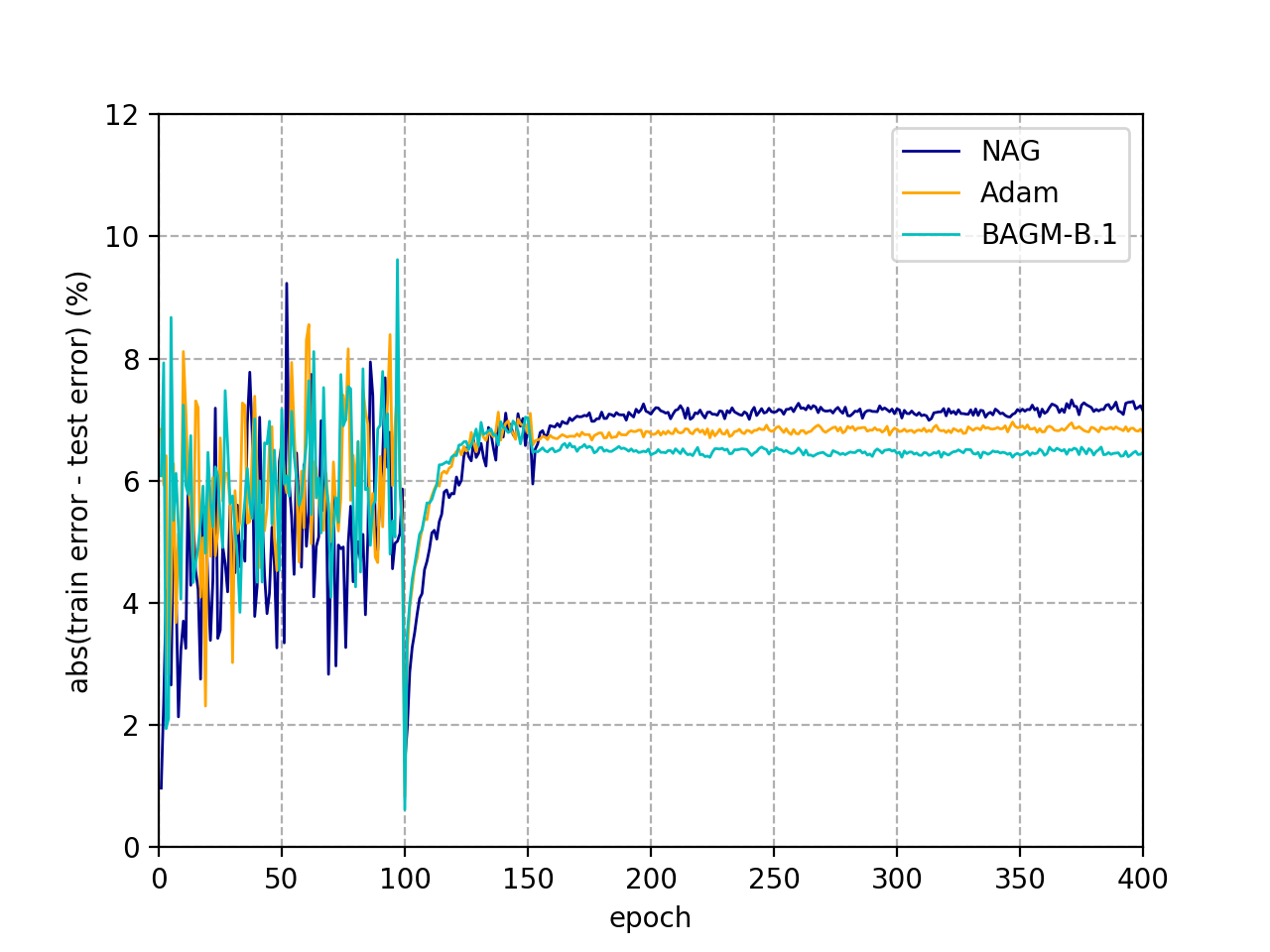}} \\
\vskip -0.15in
\subfigure[Training error.]{\includegraphics[width=0.32\columnwidth]{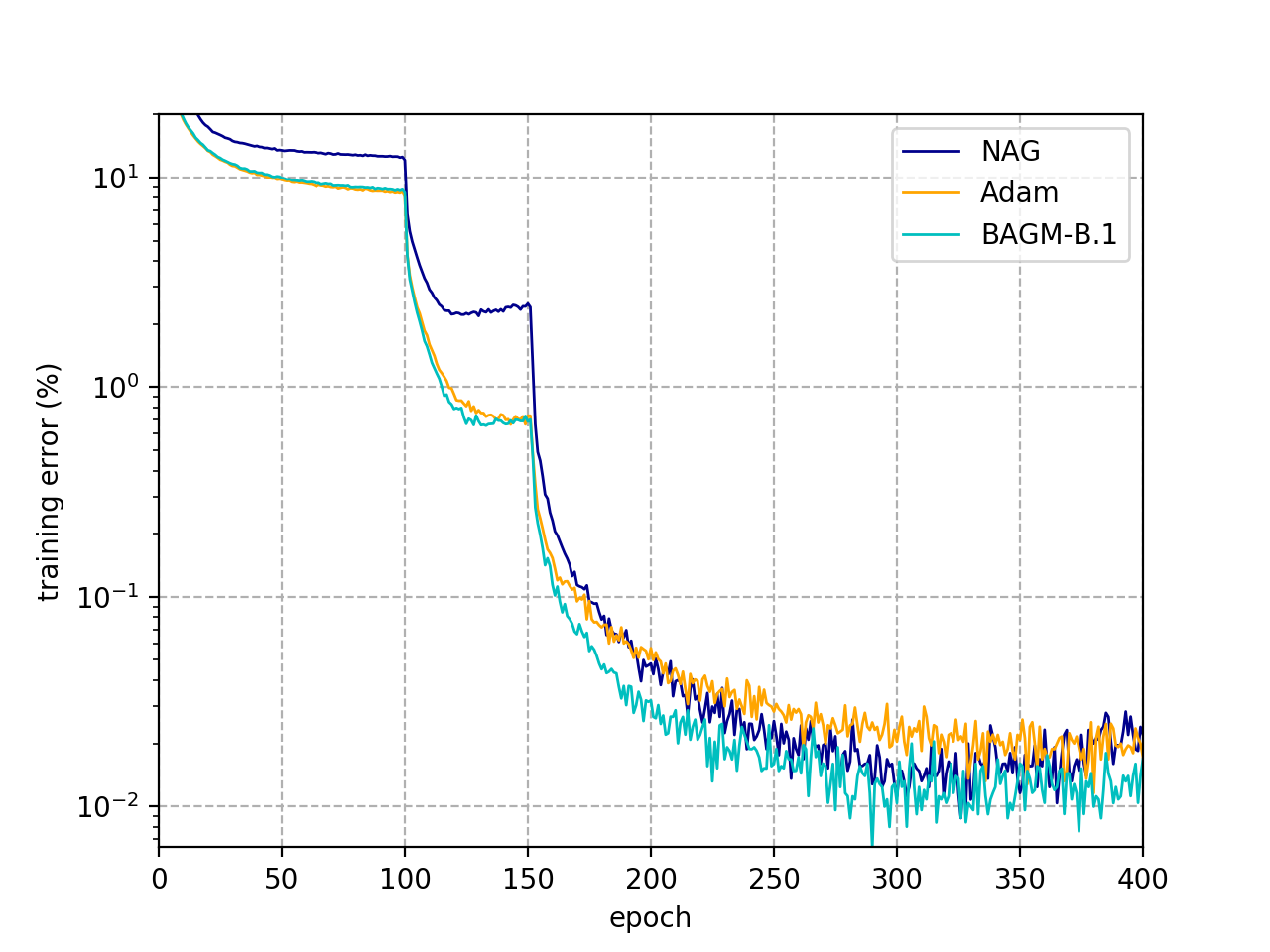}}
\subfigure[Testing error.]{\includegraphics[width=0.32\columnwidth]{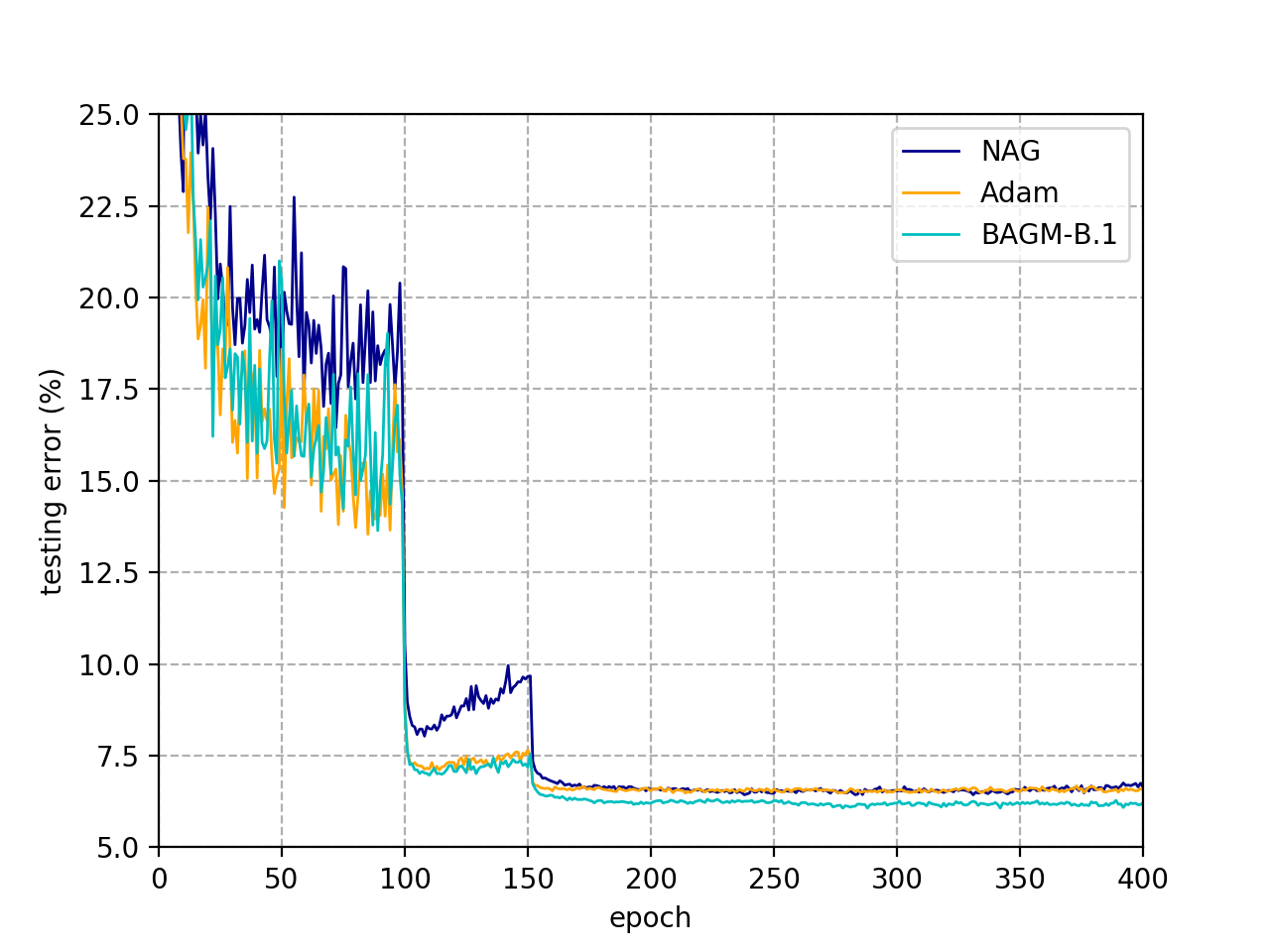}}
\subfigure[Generalization error.]{\includegraphics[width=0.32\columnwidth]{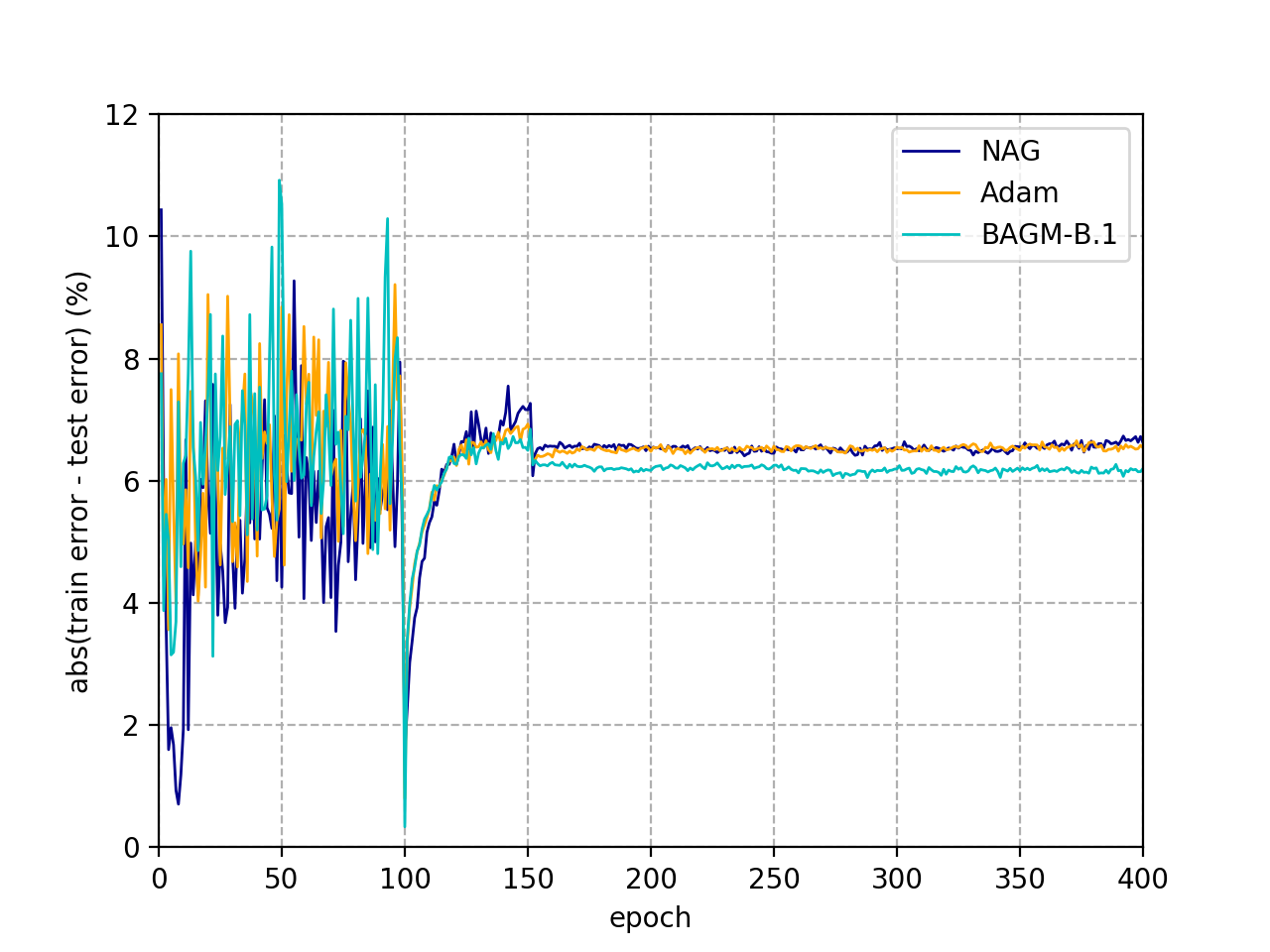}} \\
\caption{Results on residual networks. Top:
ResNet56; Bottom:
ResNet110.
The curves are obtained by running the algorithms with the best hyper-parameters obtained by grid search. 
The training error (\%) is plotted in log scale. To reduce statistical variance, results are averaged over $5$ repetitions.}
\label{fig:resnet}
\end{center}
\vskip -0.1in
\end{figure}

Convergence of the training, testing, and generalization errors (absolute difference between training error and testing error) are shown
in Figure~\ref{fig:resnet}.\footnote{To reduce clutterness, we only show results of the block construction scheme
BAGM-\ref{bs:1}, which gives the lowest testing error among the proposed block
schemes. Figure with the full results is shown in 
Appendix~\ref{sec:exp_setup_cifar10}.}
As can be seen, on both models,  
BAGM-\ref{bs:1} converges to a lower training error rate than Adam. 
This agrees with
Corollary~\ref{corollary:nonconvex_momentum_comparison} 
that blockwise adaptive methods can have faster convergence than their counterparts with element-wise
adaptivity. Moreover, the generalization error of BAGM-\ref{bs:1} is smaller than
Adam, which agrees with Proposition~\ref{prop:gen_error} that blockwise adaptivity
can have a slower growth of generalization error. On both models, BAGM-\ref{bs:1}
gives the smallest generalization error, while NAG has the highest generalization error on ResNet56.  
Overall, the proposed methods can accelerate convergence and improve generalization performance. 


\subsection{ImageNet Classification}
\label{sec:imagenet}

In this experiment, we train a 50-layer ResNet model on ImageNet 
\cite{russakovsky2015imagenet}.
The data set has 1000 classes, 1.28M training samples, and 50,000 validation images.
As the data set does not come with labels for its test set, we evaluate its
generalization performance on the validation set.  We use the ResNet50\_v1d network from the MXNet Gluon CV model zoo. We train the FP16 (half precision) model on 8 GPUs, each of which processes 128 images in each iteration. 
More details are in 
Appendix~\ref{app:imagenet}.

Performance on the validation set is shown in
Table~\ref{tab:val_resnet}.
As
can be seen, 
BAGM with all the block schemes (particularly
BAGM-\ref{bs:1})
achieve lower top-1 errors than Adam and NAG.
As for the top-5 error,
BAGM-\ref{bs:2} obtains the lowest, which is then followed by
BAGM-\ref{bs:1}. 
Overall, BAGM-\ref{bs:1} has the best performance on both CIFAR-10 and ImageNet. 


\subsection{Word-Level Language Modeling}
\label{sec:lm}

In this section, we train the
AWD-LSTM word-level language model \cite{merity2017regularizing} on the WikiText-2 (WT2) 
data set \cite{merity2016pointer}. 
We use the publicly available implementation 
in the Gluon NLP
toolkit\footnote{\url{https://gluon-nlp.mxnet.io/}.}. 
We perform grid search on 
the initial learning rate and momentum parameter
as in Section~\ref{sec:resnet},  and
set the weight decay to $1.2\times10^{-6}$
as in \cite{merity2017regularizing}. 
More details on the setup are in 
Appendix~\ref{app:lm}.  
As there is no convolutional layer, \ref{bs:2} and \ref{bs:3} are the same.
Table~\ref{tab:test_lm}
shows 
the testing perplexities,
the lower the better. As can be seen, 
all adaptive methods achieve lower test perplexities than NAG, and
BAGM-\ref{bs:2} obtains the best result. 

\begin{table*}[ht]
\begin{center}
\begin{tabular}{c|cc|ccc}
 \hline 
 & NAG & Adam & BAGM-\ref{bs:1} & BAGM-\ref{bs:2} & BAGM-\ref{bs:4} \\\hline 
test perplexity & $65.75$ & $65.40$ & $65.42$ & $\mathbf{65.29}$ & $65.55$\\ 
\hline
\end{tabular}
\end{center}
\caption{Testing perplexities on WikiText-2. Results are averaged over $3$
repetitions. }
\label{tab:test_lm}
\end{table*}


\section{Conclusion}

In this paper, we proposed 
adapting the stepsize 
for each parameter block, instead of for each individual parameter as in Adam and
RMSprop.
Convergence and uniform stability analysis shows that it can have faster convergence and lower
generalization error
than its counterpart with coordinate-wise adaptive stepsize. 
Experiments on image classification and language modeling
confirm
these theoretical results.

\bibliography{block}
\bibliographystyle{plain}


\newpage
\appendix


\section{Online Convex Learning}
\label{app:online}

In online learning, the learner picks a prediction $\theta_t \in \R^d$ at round $t$, and then suffers a loss $f_t(\theta_t)$. The goal of the learner is to choose $\theta_t$ 
and achieve a low regret w.r.t.
an optimal predictor $\theta_* = \arg\min_{\theta}\sum_{t=1}^Tf_t(\theta)$ in
hindsight.
The regret (over $T$ rounds) is defined as 
\begin{eqnarray} \label{eq:regret}
R(T) \equiv \sum_{t=1}^Tf_t(\theta_t) - \inf_{\theta}\sum_{t=1}^Tf_t(\theta).
\end{eqnarray}


\subsection{Proposed Algorithm}

The proposed procedure, which will be called
blockwise adaptive gradient
(BAG),
is shown in Algorithm~\ref{alg:block-adagrad}.
Compared to Adagrad, 
each block, instead of each coordinate, has its own learning rate.

\begin{remark}
When $B=d$ (i.e., each block has only one coordinate),
Algorithm~\ref{alg:block-adagrad}
reduces to Adagrad.
When $B=1$ (i.e., all coordinates are grouped together),
Algorithm~\ref{alg:block-adagrad}
produces the update:
$\theta_{t+1} = \theta_t - \eta\frac{g_t}{\|g_{1:t}\|_2/\sqrt{d} + \epsilon}$
with a global
adaptive learning rate. 
\end{remark}

\begin{algorithm}[t]
\caption{BAG: Blockwise adaptive gradient
for online convex learning.}
   \label{alg:block-adagrad}
\begin{algorithmic}[1]
   \STATE {\bfseries Input:} $\eta > 0$; 
    $\epsilon > 0$.
   \STATE {\bfseries initialize} 
	$\theta_1 \in \R^d$;
	$v_0 \leftarrow 0$
   \FOR{$t=1, 2, \dots, T$}
     \STATE{Receive subgradient $g_t \in \partial f_t(\theta_{t})$ of $f_t$ at $\theta_{t}$}
     \FOR{$b=1, 2, \dots, B$}
     \STATE{$v_{t, b} = v_{t-1, b} + \|g_{t, \mG_b}\|^2_2/d_b$}
     \STATE{$\theta_{t + 1, \mG_b} = \theta_{t, \mG_b} - \eta g_{t, \mG_b}/(\sqrt{v_{t, b}} + \epsilon)$}
     \ENDFOR
   \ENDFOR
\end{algorithmic}
\end{algorithm}


\subsection{Regret Analysis}
\label{sec:bc_regret_comparison}

First, we make the following assumptions.

\begin{assumption} \label{assumption:convex}
Each
$f_t$ in (\ref{eq:regret}) is convex  but possibly nonsmooth. There exists a subgradient
$g \in \partial f_t(\theta)$ such that
$f_t(\theta') \geq f_t(\theta) + \langle g, \theta' - \theta \rangle$ 
for all $\theta, \theta'$.
\end{assumption}

\begin{assumption} \label{assumption:bounded_dist}
Each parameter block is in a ball of the corresponding optimal block
throughout the iterations.
In other words, 
$\max_{t}\|\theta_{t, \mG_b} - \theta_{*, \mG_b}\|_2 \leq D_{b}$
for all $b \in [B]$,
where $\theta_{*, \mG_b}$ is the subvector of $\theta_*$
for block $b$.  
\end{assumption}

\begin{theorem} \label{theorem:convex_regret}
Suppose that Assumptions~\ref{assumption:convex} and \ref{assumption:bounded_dist}
hold. Then,
\begin{eqnarray} \label{eq:block_regret}
R(T) \leq  \sum_{b=1}^B\left[\frac{1}{2\eta\sqrt{d_b}}D_b^2 + \eta\sqrt{d_b}\right]\|g_{1:T, \mG_b}\|_2.
\end{eqnarray}
\end{theorem}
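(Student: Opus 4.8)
The plan is to bound the regret by relating it to the Adagrad-style analysis, but exploiting the blockwise structure of the learning rate. The key observation is that Algorithm~\ref{alg:block-adagrad} runs an independent Adagrad-type update on each block $\mathcal{G}_b$, with the block-specific accumulated quantity $v_{t,b} = \sum_{k=1}^t \|g_{k,\mathcal{G}_b}\|_2^2/d_b$. So I would decompose the total regret across blocks and analyze each block separately, then sum. By convexity (Assumption~\ref{assumption:convex}), $R(T) \leq \sum_{t=1}^T \langle g_t, \theta_t - \theta_* \rangle = \sum_{b=1}^B \sum_{t=1}^T \langle g_{t,\mathcal{G}_b}, \theta_{t,\mathcal{G}_b} - \theta_{*,\mathcal{G}_b} \rangle$, so it suffices to bound each inner (per-block) sum by $\left[\frac{1}{2\eta\sqrt{d_b}}D_b^2 + \eta\sqrt{d_b}\right]\|g_{1:T,\mathcal{G}_b}\|_2$.

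For a fixed block $b$, I would follow the standard mirror-descent/Adagrad telescoping argument. Writing $\psi_{t,b} = \sqrt{v_{t,b}} + \epsilon$ (the scalar learning-rate denominator shared across the whole block), the update is $\theta_{t+1,\mathcal{G}_b} = \theta_{t,\mathcal{G}_b} - \frac{\eta}{\psi_{t,b}} g_{t,\mathcal{G}_b}$, which is ordinary (non-coordinate-wise) gradient descent on block $b$ with step $\eta/\psi_{t,b}$. The usual identity from expanding $\|\theta_{t+1,\mathcal{G}_b} - \theta_{*,\mathcal{G}_b}\|_2^2$ gives
\begin{equation*}
\langle g_{t,\mathcal{G}_b}, \theta_{t,\mathcal{G}_b} - \theta_{*,\mathcal{G}_b}\rangle \leq \frac{\psi_{t,b}}{2\eta}\left(\|\theta_{t,\mathcal{G}_b} - \theta_{*,\mathcal{G}_b}\|_2^2 - \|\theta_{t+1,\mathcal{G}_b} - \theta_{*,\mathcal{G}_b}\|_2^2\right) + \frac{\eta}{2\psi_{t,b}}\|g_{t,\mathcal{G}_b}\|_2^2.
\end{equation*}
Summing over $t$ and performing an Abel summation on the $\psi_{t,b}$-weighted distance terms, I would bound the resulting telescoped distance sum using $\psi_{t,b} - \psi_{t-1,b} \geq 0$ (since $v_{t,b}$ is nondecreasing) together with the diameter bound $\|\theta_{t,\mathcal{G}_b} - \theta_{*,\mathcal{G}_b}\|_2^2 \leq D_b^2$ from Assumption~\ref{assumption:bounded_dist}. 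This collapses the first part to roughly $\frac{D_b^2}{2\eta}\psi_{T,b} = \frac{D_b^2}{2\eta}(\sqrt{v_{T,b}} + \epsilon)$.

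The second, gradient-squared part $\frac{\eta}{2}\sum_{t=1}^T \frac{\|g_{t,\mathcal{G}_b}\|_2^2}{\psi_{t,b}}$ is where the $\sqrt{d_b}$ scaling enters. Since $v_{t,b} = \frac{1}{d_b}\sum_{k\leq t}\|g_{k,\mathcal{G}_b}\|_2^2$, I have $\psi_{t,b} \geq \sqrt{v_{t,b}} = \|g_{1:t,\mathcal{G}_b}\|_2/\sqrt{d_b}$, so $\frac{\|g_{t,\mathcal{G}_b}\|_2^2}{\psi_{t,b}} \leq \sqrt{d_b}\,\frac{\|g_{t,\mathcal{G}_b}\|_2^2}{\|g_{1:t,\mathcal{G}_b}\|_2}$. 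The key technical lemma—the standard Adagrad summation bound $\sum_{t=1}^T \frac{a_t^2}{\sqrt{\sum_{k\leq t} a_k^2}} \leq 2\sqrt{\sum_{t=1}^T a_t^2}$ applied with $a_t = \|g_{t,\mathcal{G}_b}\|_2$—then gives $\sum_t \frac{\|g_{t,\mathcal{G}_b}\|_2^2}{\|g_{1:t,\mathcal{G}_b}\|_2} \leq 2\|g_{1:T,\mathcal{G}_b}\|_2$, so this part is at most $\eta\sqrt{d_b}\,\|g_{1:T,\mathcal{G}_b}\|_2$. Combining with the first part and using $\sqrt{v_{T,b}} = \|g_{1:T,\mathcal{G}_b}\|_2/\sqrt{d_b}$ yields exactly the per-block bound $\left[\frac{1}{2\eta\sqrt{d_b}}D_b^2 + \eta\sqrt{d_b}\right]\|g_{1:T,\mathcal{G}_b}\|_2$ (dropping the benign $\epsilon$ contribution, or absorbing it).

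\textbf{The main obstacle} I anticipate is handling the $\epsilon$ term cleanly and making the Abel-summation/telescoping over the time-varying scalar weights $\psi_{t,b}$ rigorous: one must verify that the weights are nondecreasing so that the distance terms collapse to a single $\psi_{T,b} D_b^2$ factor rather than accumulating, and track whether the final bound should carry $\sqrt{v_{T,b}} + \epsilon$ or just $\sqrt{v_{T,b}}$. Since the stated bound has no explicit $\epsilon$, I expect the proof either drops $\epsilon$ by noting $\psi_{t,b} \geq \sqrt{v_{t,b}}$ in the numerator-denominator estimates (which only helps) while bounding the distance part via $\psi_{T,b}$ and then using that $\epsilon\cdot D_b^2/(2\eta)$ is dominated or absorbed, or it simply takes $\epsilon \to 0$ in the telescoping constant. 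The per-block decomposition itself is the conceptually important step and is immediate from convexity; everything else is the textbook Adagrad regret argument carried out blockwise with the $1/\sqrt{d_b}$ normalization bookkeeping.
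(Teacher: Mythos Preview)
Your proposal is correct and follows essentially the same approach as the paper. The paper organizes the argument slightly differently—it first proves a one-step mirror-descent lemma in terms of the full block-diagonal matrix $H_t=\text{Diag}(s_t)$ and then decomposes the telescoped distance terms blockwise, whereas you decompose into blocks upfront and run the scalar-stepsize analysis on each block—but the ingredients (convexity linearization, telescoping with nondecreasing $\psi_{t,b}$, the diameter bound $D_b$, and the standard Adagrad summation lemma $\sum_t a_t^2/\|a_{1:t}\|_2 \le 2\|a_{1:T}\|_2$) are identical; the paper likewise simply sets $\epsilon=0$ in the final step to remove the residual $\epsilon$ term.
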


\begin{remark}
When $B=d$,
by setting $D_b = D_{\infty}$ for all $b \in [B]$, where $D_\infty$ is some constant such that $\max_{t}\|\theta_t - \theta_*\|_{\infty} \leq D_\infty$,
the regret bound reduces to that 
of Adagrad
in Theorem~5 of
\cite{duchi2011adaptive}.
\end{remark}

By Jensen's inequality, the last term of (\ref{eq:block_regret}) is minimized when $B=d$. However, the comparison with Adagrad is indeterminate in the first term due to the constant $D_b$. 

In the following, we provide an example showing that 
when gradient 
magnitudes
for elements in the same block have the same upper bound,
blockwise adaptive learning
rate can lead to lower regret than coordinate-wise adaptive learning rate (in Adagrad).
This then indicates that blockwise adaptive method can potentially be beneficial in
training deep networks, 
as its architecture can be naturally divided into blocks and parameters in the same block are likely to have gradients with similar magnitudes.  

Let $f_t$ be the hinge loss for a linear model:
\begin{eqnarray} \label{eq:hinge}
f_t(\theta_t) = \max(0, 1 - y_t\langle\theta_t, x_t\rangle),
\end{eqnarray}
where $y_t \in \{-1, 1\}$ is the label and $x_t \in \R^d$ is the feature vector.
Assume that input $x_t$ is partitioned into $\tilde{B}$ blocks. 
For each $i$ in input block
$b$,
with probability $p_b$,
$x_{t, i} \sim \mathcal{N}(c_b y_t, \gamma_b^2)$ 
for some given $c_b, \gamma_b$,
and $x_{t, i} = 0$ otherwise.
Then, $\CE[g_{t, i}^2] \leq p_b(c_b^2 + \gamma_b^2)$, and the expected gradient
magnitudes
for elements in the same input block have the same upper bound. 
Taking expectation of the gradient terms in (\ref{eq:block_regret}), we have,
for all $b$'s,
\begin{eqnarray*}
\CE[\|g_{1:T,\mG_b}\|_2]  \leq \sqrt{\sum_{i \in \mathcal{G}_b}\sum_{t=1}^T\CE[g_{t, i}^2]} \leq \tau_b\sqrt{d_bp_bT}, 
\end{eqnarray*}
where $\tau_b^2 = c_b^2 + \gamma_b^2$. 
Thus, 
with $B=\tilde{B}$ and the gradient partitioned in the same way as the input
features,
(\ref{eq:block_regret}) 
reduces to
\begin{eqnarray} \label{eq:regret_abstract_block}
\CE[R(T)]  \leq  \sum_{b=1}^{\tilde{B}}\tau_b\left[\frac{1}{2\eta}D_b^2+ \eta d_b\right]\sqrt{p_bT}.
\end{eqnarray}
On the other hand, for Adagrad,
$B=d$, and
Assumption~\ref{assumption:bounded_dist} becomes
$\max_t (\theta_{t, i} - \theta_{*, i})
\leq D_i$ for some 
$D_i$.
The bound in (\ref{eq:block_regret}) reduces to
\begin{eqnarray} \label{eq:regret_abstract_coordinate}
\CE[R(T)]  \leq  \sum_{b=1}^{\tilde{B}}\tau_b\left[\frac{1}{2\eta}\sum_{i \in \tilde{\mathcal{B}}_b}D_i^2 + \eta d_b\right]\sqrt{p_bT}, 
\end{eqnarray}
where $\tilde{\mathcal{B}}_b$ is the set of indices in the $b$th input block. 
We assume that Assumption~\ref{assumption:bounded_dist} is tight. 
Then 
$D_b^2 \leq \sum_{i \in
\tilde{\mathcal{B}}_b}D_i^2$,
and the bound in 
(\ref{eq:regret_abstract_block}) is smaller than that in
(\ref{eq:regret_abstract_coordinate}). 

Figure~\ref{fig:synthetic_cvx} compares
BAG with $B=1, 2, 3, 4$,  and $d$ 
($= 100$)
on a
synthetic data set.
At round $t$, we randomly sample class label $y_t \in \{-1, 1\}$ with equal
probabilities. 
The first $50$ features are sampled independently from $\mathcal{N}(
10y_t, 100)$ with probability $0.5$, and zero otherwise.
The last $50$ features are sampled independently from $\mathcal{N}(-5y_t, 25)$ with probability $0.4$, and zero otherwise. For
$B=2$, we partition the elements of gradient $g_t$ into two blocks, one for the first 
$50$ coordinates, and the other for the rest (and thus exactly the same as the
input block structure). For $B=3$, we form the first block using the first $35$ coordinates, the second block with the next $30$ coordinates, 
and the third block with the remaining $35$ elements. The block 
structure is thus different from the input block structure. 
For $B=4$, the coordinates of gradient $g_t$ are divided into four blocks each of $25$
elements.  
We initialize $\theta_1$ to zero,
fix $\epsilon=10^{-8}$ and $\eta=0.01$.
The expected regret
is estimated 
by averaging over $100$ repetitions.
As can be seen
from Figure~\ref{fig:synthetic_cvx},
BAG with $B=2$ and $4$ achieve lower regrets than the others.
BAG with $B=3$ is a little worse but still performs better than 
$B=d$.  
For $B=1$, the mismatch in block structures is severe and the performance is 
worst.

\begin{figure}[ht]
\vskip -0.1in
\begin{center}
\includegraphics[width=0.5\columnwidth]{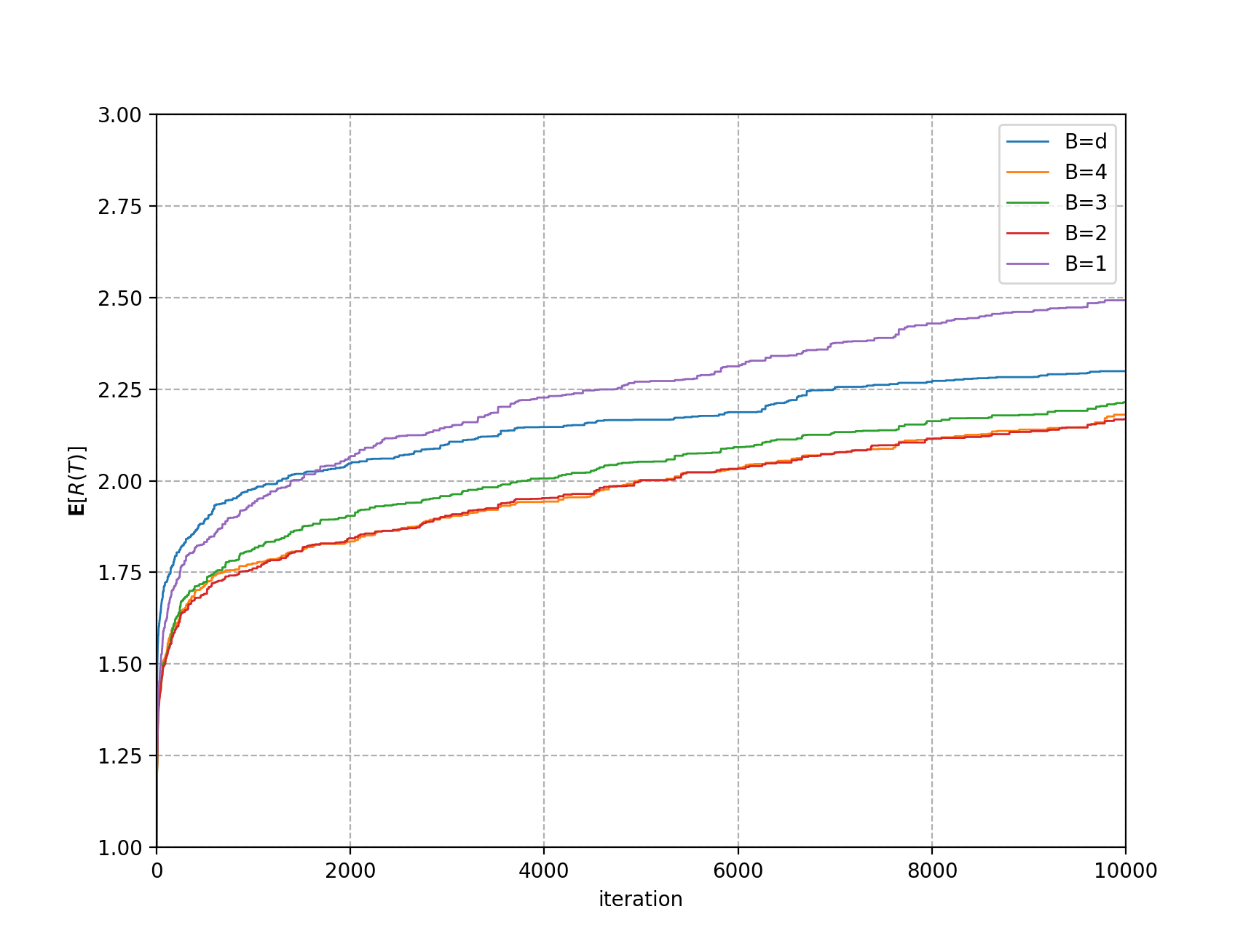}
\vskip -0.1in
\caption{Expected regret on a synthetic data set.}
\label{fig:synthetic_cvx}
\end{center}
\end{figure}


\subsection{Excess Risk}

To measure the generalization ability, one is interested in minimizing the expected loss (\ref{eq:expected_loss}). 
Here,
the expectation is taken w.r.t. the distribution of random (loss) function $f$ (i.e., $f_t$'s are generated i.i.d.). 
When the distribution of $f$ corresponds to a finite training set,
(\ref{eq:expected_loss}) reduces to empirical risk minimization. 
The goal is to find parameter $\hat{\theta}$ with good 
generalization ability, i.e., small excess risk:
\begin{eqnarray*} \label{eq:excess_risk}
\mathcal{E}(\theta) \equiv F(\hat{\theta}) - \min_{\theta}F(\theta).
\end{eqnarray*}
Using the online-to-batch conversion \cite{cesa2004generalization}, one can
convert the regret bound (on past data) to excess risk (on unseen data)
bound. In particular, we have the following corollary.
\begin{corollary} \label{corollary:convex_convergence}
\cite{cesa2004generalization}
Assume that the loss is bounded in $[0, 1]$.  
If $f_t$'s are generated i.i.d., with probability greater than $1 - \delta$, we have 
$\mathcal{E}\left(\frac{1}{T}\sum_{t=1}^T\theta_t\right) \leq \frac{R(T)}{T} + 2\sqrt{\frac{2\log(1/\delta)}{T}}$.
\end{corollary}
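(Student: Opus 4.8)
The plan is to use an online-to-batch conversion: I will control the gap between the averaged population risk $\frac{1}{T}\sum_t F(\theta_t)$ and the averaged realized losses $\frac{1}{T}\sum_t f_t(\theta_t)$ by a martingale concentration inequality, relate the latter to the regret $R(T)$, and finally pass from the average of $F(\theta_t)$ to $F$ evaluated at the averaged iterate by convexity. Throughout I would work with the filtration $\mathcal{F}_{t-1} = \sigma(f_1,\dots,f_{t-1})$ and exploit that the BAG iterate $\theta_t$ is $\mathcal{F}_{t-1}$-measurable (it depends only on $g_1,\dots,g_{t-1}$), while $f_t$ is drawn i.i.d. and is independent of $\mathcal{F}_{t-1}$. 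Hence $\CE[f_t(\theta_t)\mid \mathcal{F}_{t-1}] = F(\theta_t)$, and the same holds for any fixed comparator.

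First I would fix a minimizer $u \in \arg\min_\theta F(\theta)$ and define the single increment $M_t = [f_t(\theta_t) - f_t(u)] - [F(\theta_t) - F(u)]$. By the measurability and independence observations above, $\CE[M_t\mid \mathcal{F}_{t-1}] = 0$, so $\{M_t\}$ is a martingale difference sequence; since $f,F \in [0,1]$ each bracket lies in $[-1,1]$ and thus $|M_t| \le 2$. Applying the Azuma--Hoeffding inequality to $\sum_t M_t$ with this bound gives, with probability at least $1-\delta$, that $\sum_{t=1}^T M_t \le \sqrt{8T\log(1/\delta)} = 2\sqrt{2T\log(1/\delta)}$. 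This is precisely where the constant $2$ in the claimed bound originates: it comes from the increment range being $2$, not from a union bound, so no loss is incurred in the $\log(1/\delta)$ term.

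Rearranging the high-probability inequality yields $\sum_t[F(\theta_t)-F(u)] \le \sum_t[f_t(\theta_t)-f_t(u)] + 2\sqrt{2T\log(1/\delta)}$. Next I would replace $u$ by the online hindsight optimum: since $\sum_t f_t(u) \ge \inf_\theta \sum_t f_t(\theta)$, the empirical gap $\sum_t[f_t(\theta_t)-f_t(u)]$ is at most $R(T)$. Dividing by $T$ gives $\frac{1}{T}\sum_t F(\theta_t) - F(u) \le R(T)/T + 2\sqrt{2\log(1/\delta)/T}$. Finally, convexity of $F$, inherited from convexity of each $f_t$ under Assumption~\ref{assumption:convex}, together with Jensen's inequality gives $F(\frac{1}{T}\sum_t\theta_t) \le \frac{1}{T}\sum_t F(\theta_t)$; since $F(u) = \min_\theta F(\theta)$ the left-hand side is exactly $\mathcal{E}(\frac{1}{T}\sum_t\theta_t)$, which completes the bound.

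The step I would be most careful about is organizing the concentration as a \emph{single} martingale in $M_t$, rather than separately bounding $\frac{1}{T}\sum_t(F(\theta_t)-f_t(\theta_t))$ and $\frac{1}{T}\sum_t(f_t(u)-F(u))$. The two-inequality route would require a union bound over two failure events and hence produce $\log(2/\delta)$ in place of $\log(1/\delta)$; folding both deviations into one increment of range $2$ is exactly what delivers the stated constant. A secondary point worth verifying is that $u$ and $R(T)$'s comparator are compatible, i.e. that bounding $\sum_t f_t(u)$ from below by the hindsight infimum is the only inequality needed to bridge the data-dependent optimum of the regret and the population optimum of the excess risk.
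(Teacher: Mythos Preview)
Your proposal is correct and follows essentially the same approach as the paper: define a single martingale difference sequence combining the deviations at $\theta_t$ and at the population minimizer, apply one-sided Azuma--Hoeffding with increment range $2$, bound the empirical gap by $R(T)$, and finish with Jensen. One trivial sign slip: with your definition of $M_t$, the inequality $\sum_t M_t \le \sqrt{8T\log(1/\delta)}$ rearranges to a \emph{lower} bound on $\sum_t[F(\theta_t)-F(u)]$; you actually want the lower-tail bound $\sum_t M_t \ge -\sqrt{8T\log(1/\delta)}$ (equivalently apply Azuma to $-M_t$, which is exactly the paper's $Z_t$), after which your stated rearrangement is correct.
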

Thus, achieving lower regret can be seen as obtaining better generalizarion performance. 


\subsection{Least Squares Problem}

Consider the under-determined least squares problem:
\begin{eqnarray} \label{eq:ls}
\min_{\theta} \|X\theta - y\|^2_2,
\end{eqnarray}
where $X \in \R^{n\times d}$ with $n < d$,
and $y \in \R^n$. We assume that $XX^T$ is invertible.  
Any stochastic gradient descent method 
on problem (\ref{eq:ls})
with a global
stepsize outputs a trajectory with iterates lying in the span of the rows of $X$. 
One solution of (\ref{eq:ls}) is $X^T(XX^T)^{-1}y$,
which happens to be the solution with
minimum $\ell_2$-norm among all possible global minimizers. The minimum-norm
solution has the largest margin, and maximizing margin typically leads to lower generalization error \cite{boser1992training}.  
It is known that SGD converges to the minimum $\ell_2$-norm
solution of problem
(\ref{eq:ls}) \cite{zhang2017theory}, while adaptive methods (including Adagrad,
RMSprop, and Adam) converge to solutions with low $\ell_{\infty}$-norm
\cite{wilson2017marginal}. In particular, some examples show that solutions
obtained by adaptive methods can generalize arbitrarily poorly, while the SGD
solution makes no error. 

The following proposition studies the BAG solution. 
\begin{proposition} \label{prop:large_margin_solution}
Consider the underdetermined least squares problem in (\ref{eq:ls}).
If each submatrix $X_{:, \mG_b} \in \R^{n\times d_b}$ has full row rank, then
BAG
(with initial $\theta_1= 0$) converges to an optimal solution $\theta_*$
of (\ref{eq:ls})
in which each subvector $\theta_{*, \mG_b} = X_{:, \mG_b}^T(X_{:, \mG_b}X_{:, \mG_b}^T)^{-1}u_b$ for some $u_b \in \R^n$ and $\sum_{b\in[B]}u_b = y$.
\end{proposition}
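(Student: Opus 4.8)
The plan is to track the subspace in which each block of the iterate lives, exploiting that the gradient of the least squares objective always feeds a vector back into the row space of the corresponding submatrix. Writing the (sub)gradient of $\|X\theta - y\|_2^2$ as $g_t = 2X^T(X\theta_t - y)$, the $b$th block is $g_{t,\mG_b} = 2X_{:,\mG_b}^T(X\theta_t - y)$, which lies in $\text{range}(X_{:,\mG_b}^T)$ regardless of the current residual $X\theta_t - y$. The same holds for any stochastic gradient built from a subset of rows of $X$, since each such gradient block remains a linear combination of the rows of $X_{:,\mG_b}$.

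First I would prove by induction on $t$ that $\theta_{t,\mG_b} \in \text{range}(X_{:,\mG_b}^T)$ for every $b$. The base case is the initialization $\theta_{1,\mG_b} = 0$. For the inductive step, the BAG update $\theta_{t+1,\mG_b} = \theta_{t,\mG_b} - \eta\, g_{t,\mG_b}/(\sqrt{v_{t,b}} + \epsilon)$ adds a scalar multiple of $g_{t,\mG_b}$ to $\theta_{t,\mG_b}$; since both summands lie in $\text{range}(X_{:,\mG_b}^T)$, so does the result. Hence there exists $w_{t,b} \in \R^n$ with $\theta_{t,\mG_b} = X_{:,\mG_b}^T w_{t,b}$ for all $t$, and this representation is preserved in the limit.

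Next I would argue convergence to a global minimizer. Since (\ref{eq:ls}) is convex and smooth and $X$ has full row rank (so $XX^T$ invertible forces the optimal value to be $0$, attained on the affine set $\{\theta : X\theta = y\}$), the iterates converge to some optimal $\theta_*$ with $X\theta_* = y$; this follows from the convex guarantee behind Theorem~\ref{theorem:convex_regret} together with online-to-batch conversion, or directly from the descent behavior of the adaptive update on a convex objective. Passing to the limit in the induction gives $\theta_{*,\mG_b} = X_{:,\mG_b}^T w_{*,b}$ for some $w_{*,b} \in \R^n$. Finally, because $X_{:,\mG_b}$ has full row rank, $X_{:,\mG_b}X_{:,\mG_b}^T$ is invertible, so setting $u_b := (X_{:,\mG_b}X_{:,\mG_b}^T)\, w_{*,b}$ yields $\theta_{*,\mG_b} = X_{:,\mG_b}^T(X_{:,\mG_b}X_{:,\mG_b}^T)^{-1} u_b$. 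Substituting into $X\theta_* = \sum_b X_{:,\mG_b}\theta_{*,\mG_b} = y$ and using $X_{:,\mG_b}X_{:,\mG_b}^T(X_{:,\mG_b}X_{:,\mG_b}^T)^{-1} = I$ collapses each term to $u_b$, giving $\sum_{b\in[B]} u_b = y$, as required.

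The main obstacle is the convergence step rather than the algebra: the subspace invariance is immediate and the closed-form characterization of $\theta_{*,\mG_b}$ is a one-line consequence of the full-rank assumption, but justifying that the adaptive iterates genuinely converge to a point of the solution set (as opposed to only driving the objective gap or gradient norm to zero along a subsequence) requires invoking the convex convergence result with some care. Once convergence to a minimizer $\theta_*$ satisfying $X\theta_* = y$ is in hand, the remaining structure of the limit is forced by the row-space invariance established in the induction.
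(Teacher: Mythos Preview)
Your proposal is correct and follows essentially the same route as the paper: show that each block of every iterate lies in $\mathrm{range}(X_{:,\mG_b}^T)$ (the paper unrolls the recursion explicitly as $\theta_{t+1,\mG_b}=X_{:,\mG_b}^T(-\sum_{i\le t}\eta_{i,b}e_i)$, you phrase it as an induction), invoke the convex regret bound plus online-to-batch conversion to get convergence to a minimizer, and then read off $u_b$ from the full-row-rank hypothesis exactly as you did. Your closing remark about the convergence step being the delicate part is apt; the paper's own proof is no more careful here, appealing to Corollary~\ref{corollary:convex_convergence} and asserting that the running average converges in expectation to some optimal $\theta_*$ (which, being an average of vectors in the row space, also lies there).
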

Obviously, when $B=1$, BAG converges to the minimum $\ell_2$-norm solution of
(\ref{eq:ls}). By adapting the proof, it is easy to see that the same result 
also holds for BAGM.


\section{Synthetic Experiment on BAGM}

Figure~\ref{fig:synthetic_ncvx}
shows an example.
The objective
is based on the smoothed hinge loss (which satisfies Assumption~\ref{assumption:smooth}) 
\cite{rennie5loss}: 
\begin{equation*}
f_t(\theta) = \left\{ \begin{array}{ll}
\frac{1}{2} - y_t\langle \theta, x_t\rangle & y_t\langle \theta, x_t\rangle \leq 0, \\
\frac{1}{2}(1 - y_t\langle \theta, x_t\rangle)^2 & 0 < y_t\langle \theta, x_t\rangle < 1, \\
0 & y_t\langle \theta, x_t\rangle \geq 1.
\end{array} \right.
\end{equation*} 
The data generation and block construction are shown in Appendix~\ref{sec:bc_regret_comparison}. 
The initial $\theta_1$ is zero. We
set $a_t=1$, $\beta_t=0.9$, $\epsilon=10^{-8}$, $\eta_t = \eta/\sqrt{t}$ with $\eta=1$. 
The gradient $\nabla F(\theta_t)$ is estimated on 
$10,000$ 
randomly samples. 
Results are averaged over $10$ repetitions.
As can be seen, BAGM with $B=2$ and 4 have the fastest convergence. 
BAGM with 
$B=1$ 
and 
3 
have smaller $r_{\min}$ in corollary~\ref{corollary:nonconvex_momentum_comparison},
and thus are slower, but still faster than its counterpart with $B=d$ (which reduces to weighted AdaEMA).

\begin{figure}[ht]
\vskip -0.1in
\begin{center}
\includegraphics[width=0.5\columnwidth]{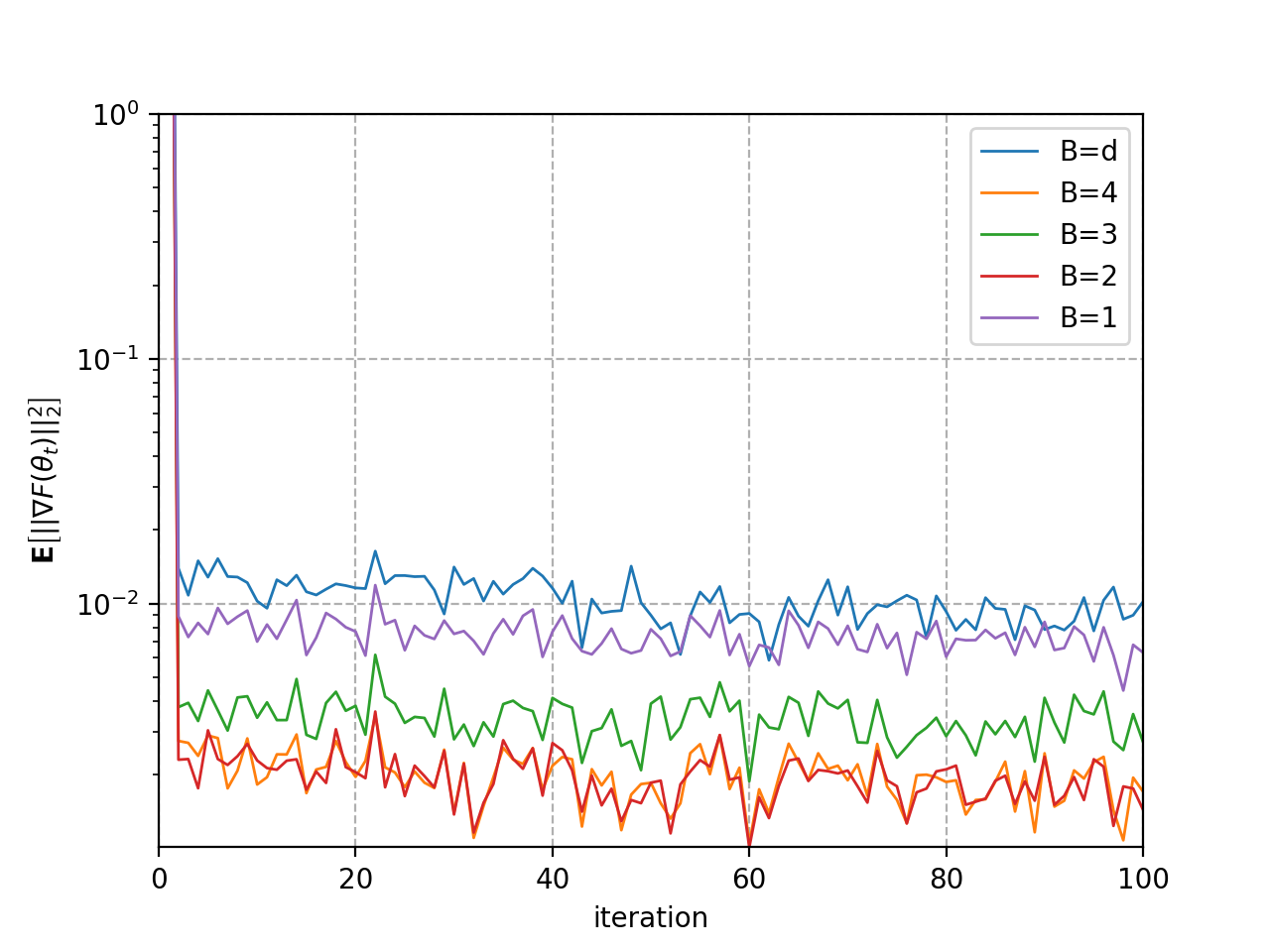}
\vskip -0.1in
\caption{Convergence of $\CE[\|\nabla F(\theta_t)\|^2_2]$ by BAGM on synthetic data.}
\label{fig:synthetic_ncvx}
\end{center}
\end{figure}


\section{Experimental Setup}
\label{appendix_sec:gs}


\subsection{Implementation}

As $\{a_t\}$ is non-decreasing, the accumulated sum $A_t$ can grow significantly, which may potentially cause some numerical issue. In practice, 
using steps 7 and 8 in Algorithms~\ref{alg:block-mom-adagrad}, we equivalently rewrite the update in (\ref{eq:a})
as the following exponentially moving update:
\begin{eqnarray*}
\hat{v}_{t, b} = \alpha_t \hat{v}_{t - 1, b} + (1 - \alpha_t)\frac{\|g_{t,\mG_b}\|_2^2}{d_b}, 
\end{eqnarray*}
where $\alpha_t = 1 - a_t/A_t$. 
If $a_t = \alpha^{-t}$, then $\alpha_t = \alpha(1 - \alpha^{t-1})/(1 - \alpha^t)$. Based on Corollary~\ref{corollary:nonconvex_momentum_seq}, this setting leads to an $\mathcal{O}(1)$ bound. On the other hand, if $a_t = t^{\tau}$, then we have $a_t/A_t = \mathcal{O}(1/t)$. This suggests that we can use polynomial-decay
averaging $\alpha_t = 1 - (c + 1)/(t + c)$ for some $c \geq 0$ \cite{shamir2013stochastic}, whereas $c > 0$ reduces the weight of earlier
iterates compared to later ones. The larger $c$ corresponds to the larger $\tau$. In this case, as $\sum_{t=1}^Ta_t = \mathcal{O}(T^\gamma)$ for some $\gamma > 0$, we have a convergence rate of $ \mathcal{O}(\log(T)/T)$. 

There are many possibilities of partitioning parameters in a deep network to blocks. In this paper, we propose the following. For a fully connected layer (i.e., $h_{l+1} = \phi(W_l^T h_{l} + b_l)$), 
we can assign an adaptive learning rate to either each column of $W_l$ (output
dimension) or each row of $W_l$ (input dimension) or the whole weight matrix
$W_l$.
Similarly, 
for the bias vector $b_l$,
either each of its element 
has its adaptive
learning rate or $b_l$ as a whole uses a single adaptive learning rate. For convolution
layers with weight tensor of shape $C_{out}\times C_{in} \times H \times W$, we
can use an adaptive learning rate for each kernel (leading to
$B_{conv} = C_{out}\times C_{in}$
blocks), each output channel ($B_{conv} =
C_{out}$), each input dimension ($B_{conv} = C_{in} \times H \times W$), or for the whole parameter tensor ($B_{conv} = 1$). 
For the bias vector, the construction is
similar to that for fully connected layers. 


\subsection{ResNet on CIFAR-10}
\label{sec:exp_setup_cifar10}

\begin{figure}[t]
\begin{center}
\subfigure[{\em ResNet56: Training error}.]{\includegraphics[width=0.32\columnwidth]{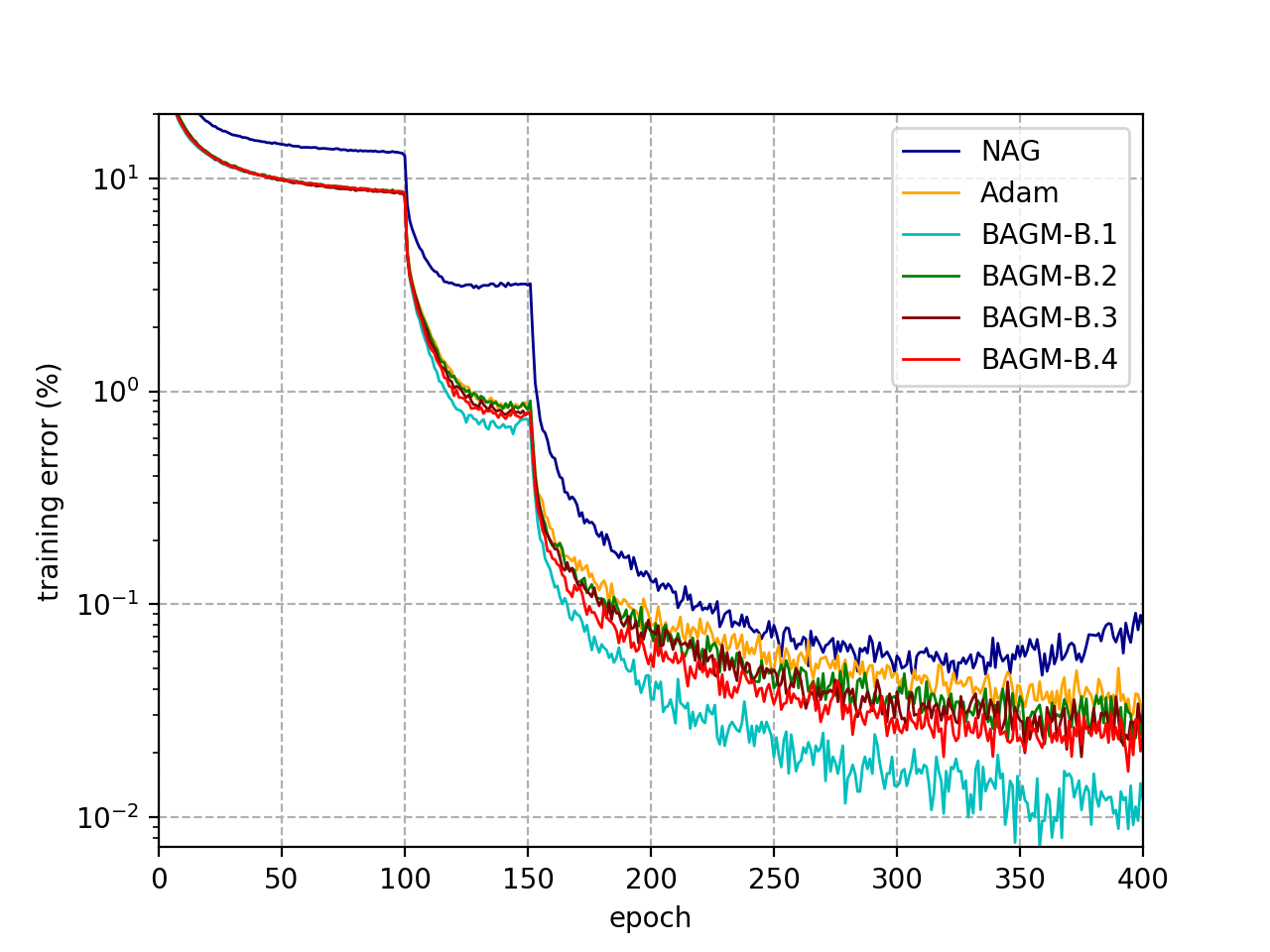}}
\subfigure[{\em ResNet56: Testing error}.]{\includegraphics[width=0.32\columnwidth]{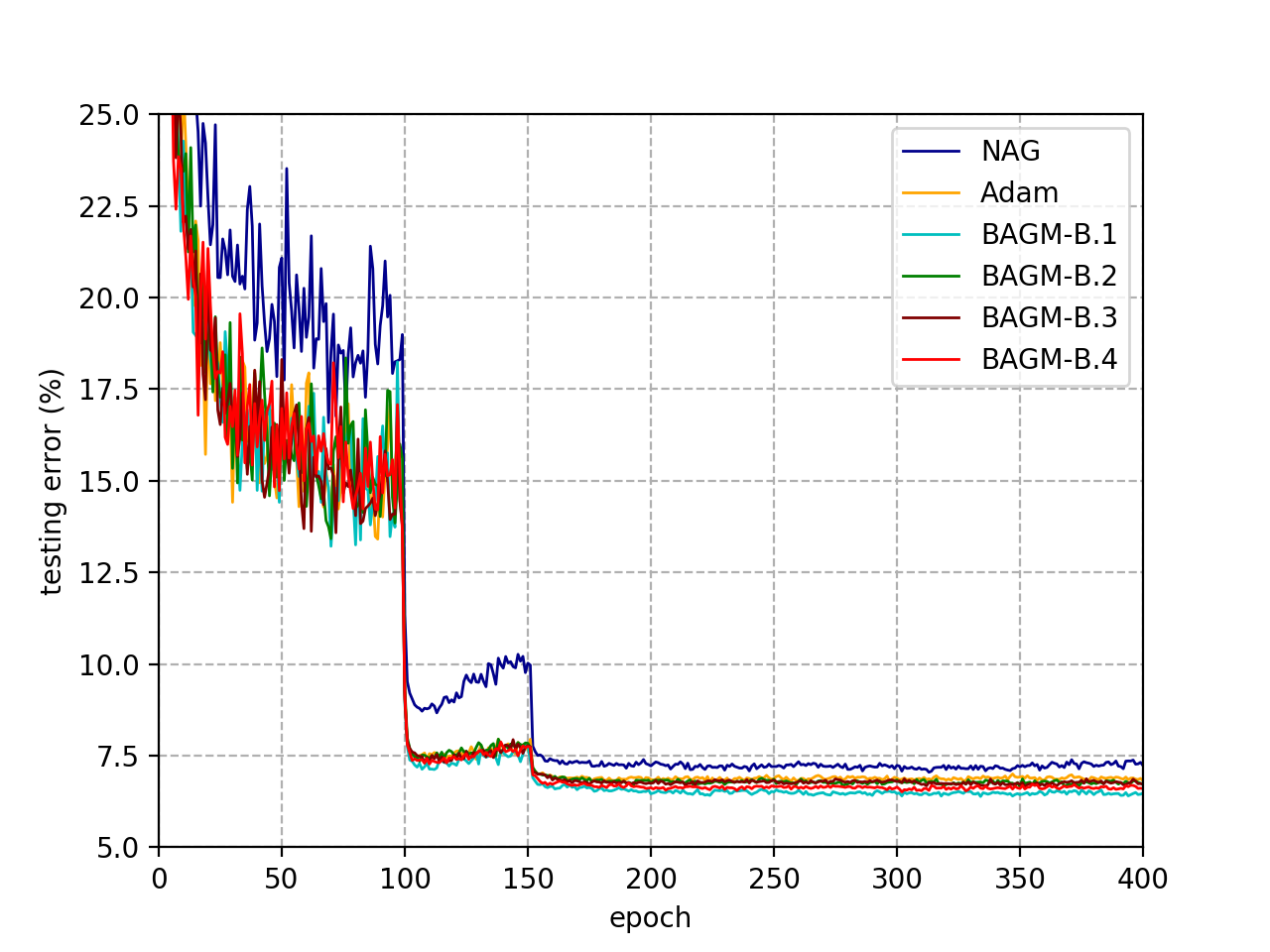}} 
\subfigure[{\em ResNet56: Generalization error}.]{\includegraphics[width=0.32\columnwidth]{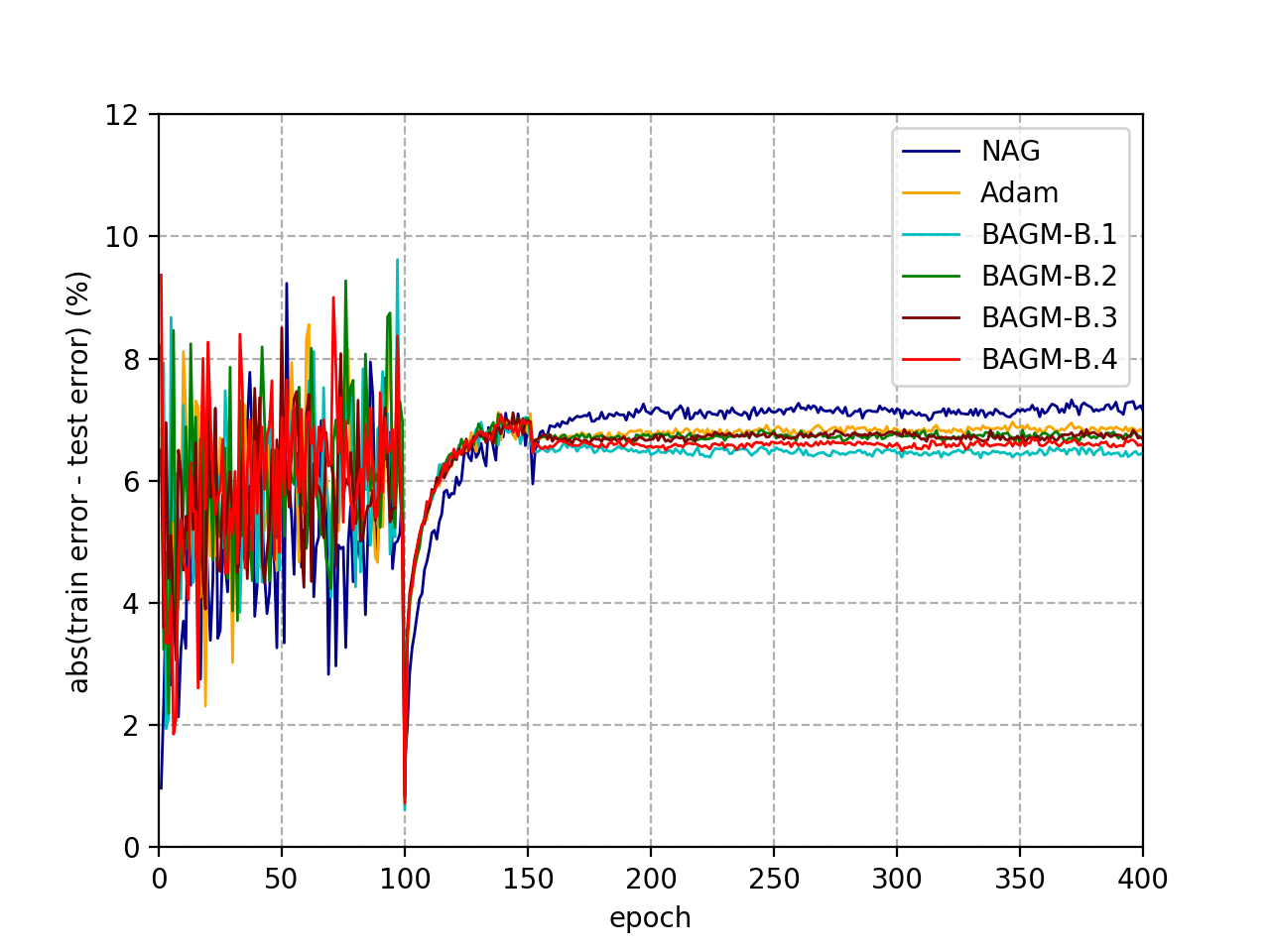}} \\
\vskip -0.1in
\subfigure[{\em ResNet110: Training error}.]{\includegraphics[width=0.32\columnwidth]{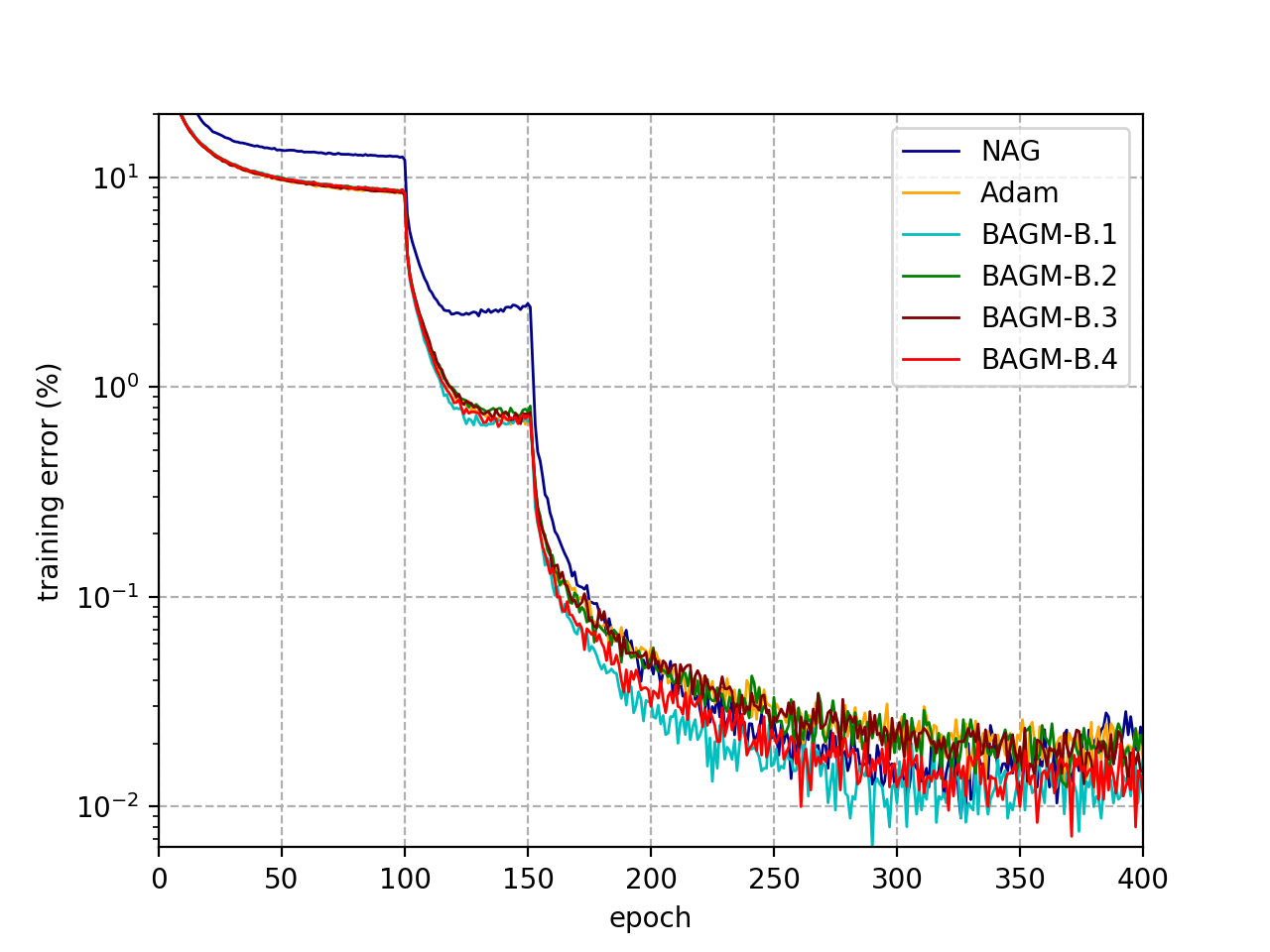}}
\subfigure[{\em ResNet110: Testing error}.]{\includegraphics[width=0.32\columnwidth]{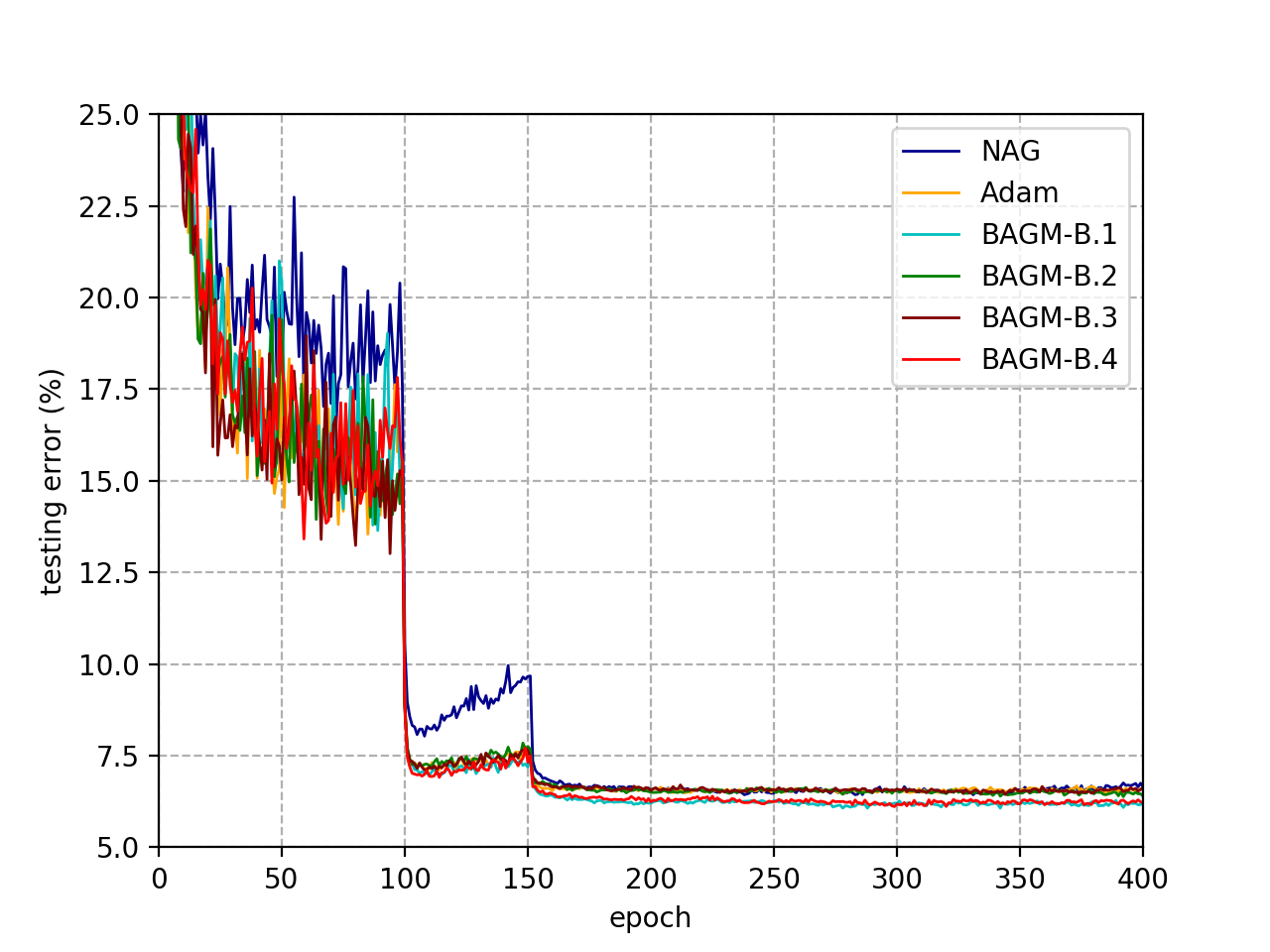}}
\subfigure[{\em ResNet110: Generalization error}.]{\includegraphics[width=0.32\columnwidth]{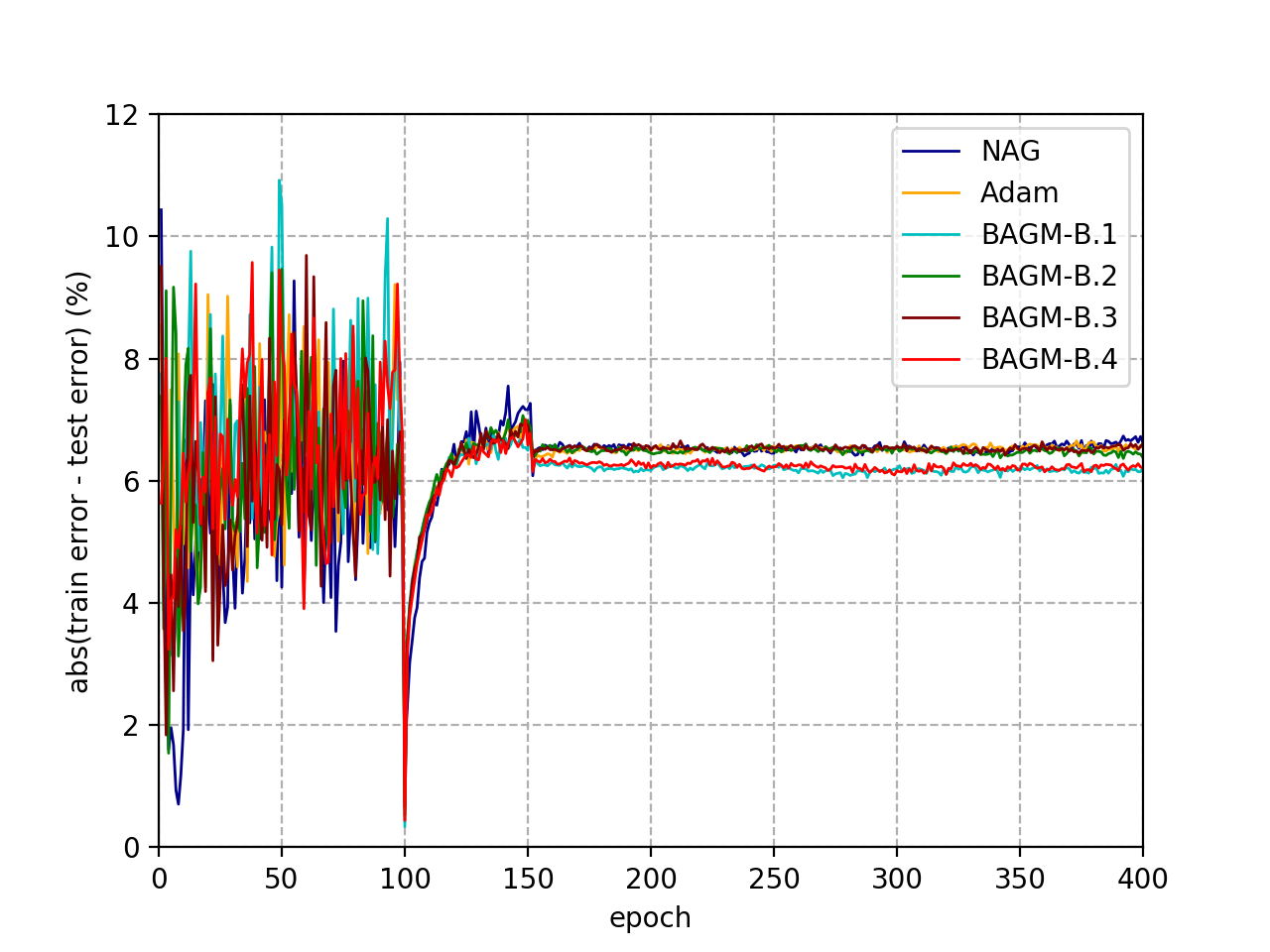}} \\
\caption{Results for all the compared methods on deep residual network. These curves are obtained by running the algorithms with the best hyper-parameters obtained by grid search. 
The training error (\%) is plotted on a logarithmic scale. To reduce statistical variance, results are averaged over $5$ repetitions.}
\label{fig:full_resnet}
\end{center}
\vskip -0.1in
\end{figure}

The CIFAR-10 data set has 50,000 training images and 10,000 testing images. 
As in \cite{he2016deep}, we employ data augmentation for training: 
1) pad the input picture by adding 4 pixels on each side of the image;
2) and then a 32x32 crop is randomly sampled from the padded
image with random horizontal flipping. In this experiment, 
a mini-batch size of $128$ is used. The stepsize is divided by $10$ at the 39k and 59k iterations.  
We use a weight decay of $0.0001$. 

For NAG, the initial learning rate $\eta$ is chosen from $\{0.01, 0.05, 0.1, 0.5, 1\}$, while for the adaptive methods, we have $\eta
\in \{0.0001, 0.0005, 0.001, 0.005, 0.01\}$. The
momentum parameter is searched over $\{0, 0.5, 0.9\}$. 
The learning rate is multiplied by $0.1$ at $100$ and $150$ epochs. 
We grid search the hyper-parameters by running each algorithm for $200$ epochs on ResNet56. 
The hyper-parameters that give the highest accuracy on the validation set are employed. 
The testing performance is obtained by running each algorithm with its best hyperparamters on full training set for $400$ epochs.
The same obtained hyperparameters are then used on training ResNet110. 
When NAG is applied to ResNet110, we use a smaller learning rate in the beginning
to warm up the training. Specifically, the obtained learning rate is divided by
$10$ in the first $4000$ iterations, and then go back to the original one and
continue training. The grid search results are shown in Table~\ref{tab:lr_cifar10}.

\begin{table}[ht]
\begin{center}
\begin{tabular}{c|c|c}
 \hline 
 & $\eta$ &  $\beta$   \\\hline 
NAG & 0.5 & 0.9 \\ 
Adam & 0.005 & 0 \\ \hline 
BAGM-\ref{bs:1} & 0.005 & 0  \\
BAGM-\ref{bs:2} & 0.005 & 0  \\
BAGM-\ref{bs:3} & 0.005 & 0  \\
BAGM-\ref{bs:4} & 0.005 & 0  \\
\hline
\end{tabular}
\end{center}
\caption{The best learning rate $\eta$ and momentum parameter $\beta$ obtained by grid search for each method. }
\label{tab:lr_cifar10}
\end{table}

Figure~\ref{fig:full_resnet} shows that, on ResNet56, 
BAGM converges to a lower training error rate than Adam for all schemes used. 
For the deeper ResNet100 model, 
BAGM-\ref{bs:1} and \ref{bs:4} has faster convergence than Adam, while BAGM-\ref{bs:2} and \ref{bs:3} show the same convergence speed with Adam.  

\subsubsection{Verifying Corollary~\ref{corollary:nonconvex_momentum_comparison}}
\label{sec:verify_corollary}
 
In this experiment,
we use BAGM-\ref{bs:1}, as it shows fastest convergence. 
At the end of each epoch, we perform $10$ full data passes with random shuffle and
data augmentation mentioned in Appendix~\ref{sec:exp_setup_cifar10} 
to compute $\CE[g_{i}^2]$ and $\CE[\|g_{\tilde{\mG}_b}\|_2^2]/d_b$. 
Then, we approximate $\sigma_i^2$ and $\sigma_b^2$ by their empirical maxima over all epochs. 
Let $\bar{v}_{T,d} = \bar{v}_{T,B=d}$ and $\bar{v}_{T,\tilde{B}} = \bar{v}_{T,B=\tilde{B}}$. 
Empirically, we estimate $\bar{v}_{T,B}$ instead of $G_b$, as $C(T)$ is tighter than $\tilde{C}(T)$.   
We estimate $\bar{v}_{T, \tilde{B}}$ using $\max_{1 \leq t \leq T}\max_{b} \hat{v}_{t,b}$. 
We obtain $r_{\min} \approx 1.02$ and $\sqrt{(\bar{v}_{T,d} + \epsilon^2)/(\bar{v}_{T, \tilde{B}} + \epsilon^2)} \approx 3.70$ for ResNet56, 
and $r_{\min} \approx 1.01$ and $\sqrt{(\bar{v}_{T,d} + \epsilon^2)/(\bar{v}_{T, \tilde{B}} + \epsilon^2)} \approx 3.30$ for ResNet110.   
These statistics explain why the proposed blockwise adaptivity leads to faster convergence. 
Figure~\ref{fig:cv_cifar} shows
the coefficient of variation\footnote{The coefficient of variation is defined as the ratio of the standard deviation to the mean.} 
\cite{everitt2006cambridge} 
of $\{\sigma_i^2\}_{i \in \tilde{\mG}_b}$.
The results confirm our hypothesis that $\{\sigma_i^2\}_{i \in \tilde{\mG}_b}$ are under-dispersed. 

\begin{figure}[ht]
\begin{center}
\subfigure[ResNet56.]{\includegraphics[width=0.45\columnwidth]{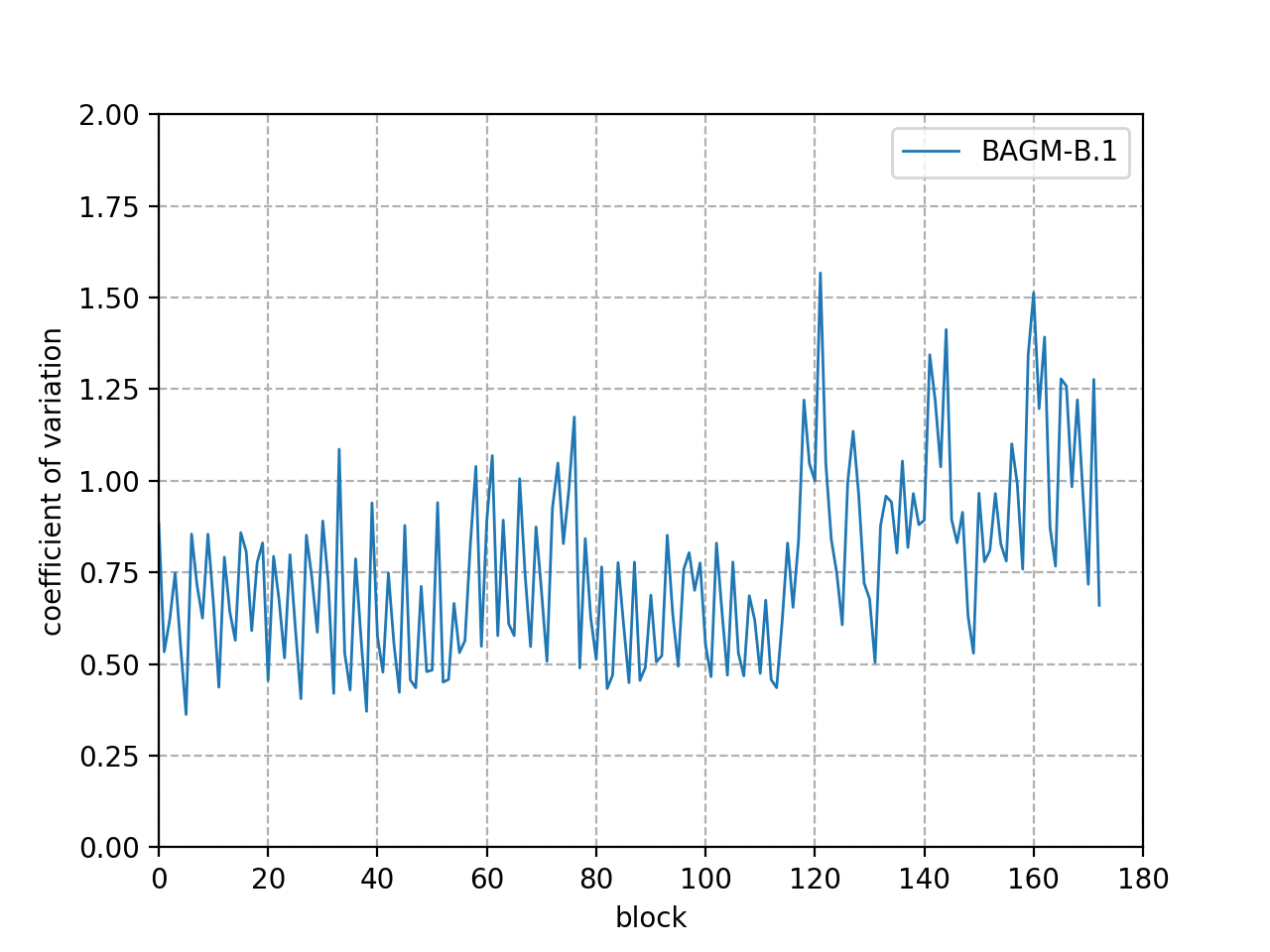}}
\subfigure[ResNet110.]{\includegraphics[width=0.45\columnwidth]{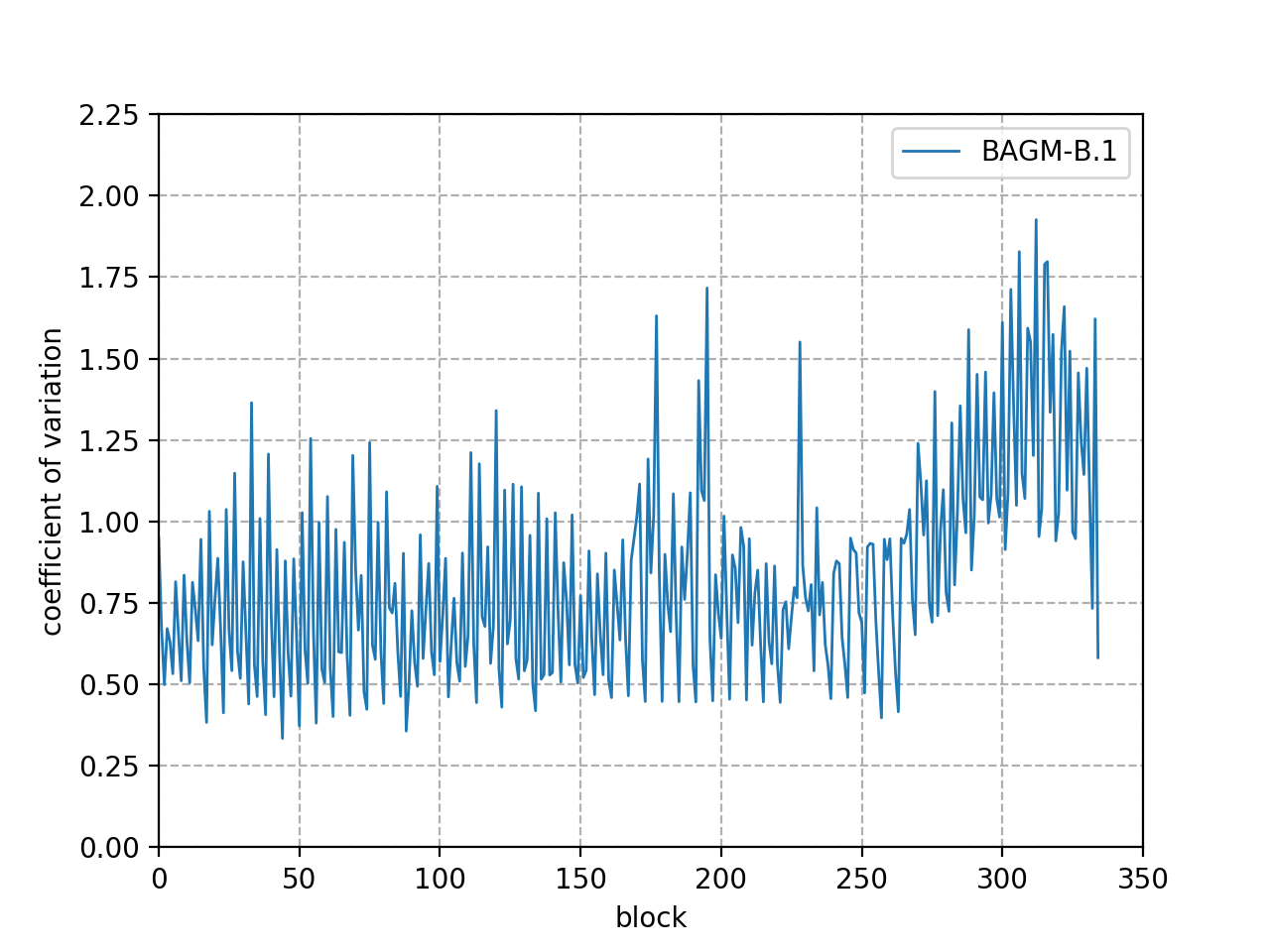}}
\vskip -0.1in
\caption{Coefficient of variation of $\{\sigma_i^2\}_{i \in \tilde{\mG}_b}$ for all the blocks with \ref{bs:1}. 
The blocks with higher indices in the abscissa belong to deeper layers. 
Notice that around $86\%$ 
(resp. $75\%$) of all blocks for ResNet56 (resp. ResNet110) have 
coefficient of variation 
smaller than $1$, indicating that $\{\sigma_i^2\}_{i \in \tilde{\mG}_b}$ have low variance and concentrate around the mean.}
\label{fig:cv_cifar}
\end{center}
\vskip -0.2in
\end{figure}


\subsection{ImageNet Classification}
\label{app:imagenet}

In this experiment, we employ label smoothing and mixup \cite{zhang2018mixup}. The cosine schedule \cite{loshchilov2017sgdr} for learning rate is used. 
A warmup of $5$ epochs is applied. During validation, we use the center crop.  
The hyperparameter tunning is based on the obtained results in Section~\ref{sec:exp_setup_cifar10}. Specifically, 
for NAG, the initial learning rate is chosen from $\{0.4, 0.5\}$, and momentum parameter is fixed to 0.9. 
For Adam and BAGM, we have the initial learning rate $\eta \in \{0.004, 0.005\}$, and we use momentum parameter $\beta = 0$. 
A weight decay of 0.0001 is used (weight decay is not applied to bias vectors, and parameters for batch normalization layers) \footnote{The example script for running NAG with $\eta=0.4$ can be found in \url{https://raw.githubusercontent.com/dmlc/web-data/master/gluoncv/logs/classification/imagenet/resnet50_v1d-mixup.sh}. The details of the data augmentation can be found in \url{https://github.com/dmlc/gluon-cv/blob/master/scripts/classification/imagenet/train_imagenet.py}. }. 
The best learning rates for each method are presented in Table~\ref{tab:lr_imagenet}. 

\begin{table}[ht]
\begin{center}
\begin{tabular}{c|c}
 \hline 
 & $\eta$  \\\hline 
NAG & 0.4 \\ 
Adam & 0.004 \\ \hline 
BAGM-\ref{bs:1} & 0.004   \\
BAGM-\ref{bs:2} & 0.004  \\
BAGM-\ref{bs:3} & 0.004   \\
BAGM-\ref{bs:4} & 0.004  \\
\hline
\end{tabular}
\end{center}
\caption{The best learning rate $\eta$. }
\label{tab:lr_imagenet}
\end{table}


\subsection{Word Language Modeling}
\label{app:lm}

In this experiment, we follow the same setting in \cite{merity2017regularizing}. 
A 3-layer AWD-LSTM is considered. The model is unrolled for 70 steps, and a mini-batch of size 80 is used. 
We clip the norm of the gradients at $0.25$. 
The details of the configuration used in this experiment can be found in \url{https://github.com/dmlc/gluon-nlp/blob/master/scripts/language_model/word_language_model.py}. 
For completeness, we show the model configuration in Table~\ref{tab:gs_lm_summary}. 

\begin{table}[ht]
\begin{center}
\begin{tabular}{c|c}
\hline
 & dimensionality/dropout rate \\
 \hline
Embedding size & 400 \\\hline
Hidden size & 1150 \\\hline
Dropout & 0.4 \\\hline
Dropout for RNN layers & 0.2 \\\hline
Dropout for input embedding layers & 0.65\\\hline
Dropout to remove words from embedding layer &  0.1\\\hline
Weight dropout &  0.5\\
\hline
\end{tabular}
\end{center}
\caption{Model configuration of AWD-LSTM model.}
\label{tab:gs_lm_summary}
\end{table}

As the WikiText-2 data set comes with a validation set, we perform the grid search by evaluating the performance on the validation set. 
For NAG, the initial stepsize is chosen from $\{1, 3, 10, 30\}$. For the adaptive methods, we select stepsize $\eta \in \{0.1, 0.03, 0.01, 0.003\}$. 
The momentum parameters varies in $\{0, 0.5, 0.9\}$. 
The learning rate is multiplied by $0.1$ when the validation performance does not improve for consecutive $30$ epochs. 
We tie the word embeddings and the softmax weights. 
For each algorithm, we employ the iterate averaging scheme proposed in \cite{merity2017regularizing}. 
The model is trained for $750$ epochs. 
The hyper-parameters obtained by the grid search is shown in Table~\ref{tab:lr_lm}. 
In general, \ref{bs:1} and \ref{bs:4} are not suitable for updating the word embedding matrix as word frequency varies a lot and thus the gradient is highly sparse. However, the gradient becomes dense when we use the weight tying. In modern toolkits such as Tensorflow, MXNet, and Pytorch, the weight matrices of the gates of the LSTM are concatenated to speed up the matrix-vector multiplication. We need to apply \ref{bs:1} and \ref{bs:4} to these weight matrices separately. 

\begin{table}[ht]
\begin{center}
\begin{tabular}{c|c|c}
 \hline 
 & $\eta$ & $\beta$ \\\hline 
NAG & 30 & $0$\\ 
Adam & 0.03 & $0.5$ \\ \hline 
BAGM-\ref{bs:1} & 0.03 & $0.9$  \\
BAGM-\ref{bs:2} & 0.03 & $0.5$  \\
BAGM-\ref{bs:4} & 0.03 & $0.5$ \\
\hline
\end{tabular}
\end{center}
\caption{The best learning rate $\eta$ and momentum parameter $\beta$ obtained by grid search for each method.}
\label{tab:lr_lm}
\end{table}




\section{Proof of Proposition~\ref{theorem:nonlinear_large_margin}}
\begin{proof}
In this proof, we use denominator layout for matrix calculus. 
As all the activation functions are bijective and $\{W_k\}_{k=l+1}^L$ are invertible, $\Phi_l$ is bijective and has an inverse function $\Phi_l^{-1}$. 
Specifically, $\Phi_l^{-1}$ is given by
\begin{eqnarray*}
 \Phi_l^{-1}(Y) = \phi_{l}^{-1}(\cdots \phi_{L-2}^{-1}(\phi_{L-1}^{-1}(YW_L^{-1})W_{L-1}^{-1})\cdots W_{l+1}^{-1}).
\end{eqnarray*}
Then, with the assumption that $H_{l-1}$ has full row rank, the nonconvex objective (\ref{eq:nonlinear_sub_ls}) can be reformulated as the following convex problem:
\begin{eqnarray} \label{eq:nonlinear_sub_ls_rw}
\min_{W_l}\|H_{l-1}W_l - \Phi_l^{-1}(Y)\|_2^2.
\end{eqnarray}
It is obvious that its large margin solution is $H_{l-1}^T(H_{l-1}H_{l-1}^{T})^{-1}\Phi_l^{-1}(Y)$. 
In the sequel, we will see that every critical point of (\ref{eq:nonlinear_sub_ls}) is a global optimal solution.  
Let $h_{i, l - 1}$ denotes a column vector that is the $i$-th row of $H_{l-1}$ and $Z_{:, i}$ be the $i$-th column of matrix $Z$.  
The gradient of (\ref{eq:nonlinear_sub_ls}) is 
\begin{eqnarray*}
2H_{l-1}^T\sum_{k=1}^d\text{Diag}(\Phi_l(H_{l-1}W_l)_{:, k} - Y_{:, k})G_{k, l} = H_{l-1}^TE_l,
\end{eqnarray*}
where $G_{k, l} = [\nabla_{x = h_{1, l-1}^TW_{l}}\Phi_l(x)_k; \cdots; \nabla_{x = h_{n, l-1}^TW_{l}}\Phi_l(x)_k] \in \R^{n \times d}$ and $E_l = 2\sum_{k=1}^d\text{Diag}(\Phi_l(H_{l-1}W_l)_{:, k} - Y_{:, k})G_{k, l}$ to be the error matrix. As $H_{l-1}$ has full row rank, then clearly gradient is zero if only and if $E_l = 0$. 
By the definition of $G_{k, l}$, we can see that $E_l= 0$ if only and if $\Phi_l(H_{l-1}W_l) = Y$ when $\nabla_{x = h_{i, l-1}^TW_{l}}\Phi_l(x)$ has full row rank for all $i \in [n]$. Note that the gradient $\nabla_{x = h_{i, l-1}^TW_{l}}\Phi_l(x)$ is of the following form:
\begin{eqnarray*}
\nabla_{x = h_{i, l-1}^TW_{l}}\Phi_l(x) = (W_L\circ \phi_{L-1}'(h_{i, L-2}^TW_{L-1})^T1_d^T)^T\cdots(W_{l+1}\circ \phi_{l}'(h_{i, l-1}^TW_{l})^T1_d^T)^T,
\end{eqnarray*}
where $\circ$ is the Hadamard product. For all $k \in \{l, \dots, L-1\}$, as $W_{k+1}$ has full rank and $\phi'_k(z) \not= 0$ for any $z \in \R$, we have that 
$W_{k+1}\circ \phi_{k}'(h_{i, k-1}^TW_{k})^T1_d^T$ has full rank. Applying the fact that the multiplication of a number of invertible matrices preserves full rank,  
we obtain that $\nabla_{x = h_{i, l-1}^TW_{l}}\Phi_l(x)$ has full rank. 
Therefore, every critical point satisfies $\Phi_l(H_{l-1}W_l) = Y$ and every critical point is a global optimal solution.  

Let $i_t$ be the index chosen at iteration $t$ and $y_{i_t}$ be the $i_t$-th row of $Y$.  
Let us define $e_{t,l} = 2\sum_{k=1}^d(\Phi_l(h_{i_t, l-1}^TW_{t, l})_k - y_{i_t, k})\nabla_{x=h_{i_t, l-1}^TW_{t, l}}\Phi_l(x)_k$.
Now, we prove that if the following update rule applied on (\ref{eq:nonlinear_sub_ls}) finds a critical point, then the iterate converges to the largest margin solution.  
\begin{eqnarray} \label{eq:blockwise-stoc-method}
W_{t+1, l} = W_{t, l} - \eta_{t, l}h_{i_{t}, l-1}e_{t,l} = H_{l-1}^T\left(-\sum_{j=1}^t\eta_{j, l} \tilde{E}_{j, l}\right),
\end{eqnarray}
where we use $W_{l, 1} = 0$, $\eta_{t, l}$ is the stepsize for $l$-th layer at iteration $t$, and $\tilde{E}_{j, l}$ is a matrix in which its $i_k$-th row is $e_{j, l}$ and all the other rows are zeros. Then, the solution found by (\ref{eq:blockwise-stoc-method})  
lies in the span of rows of $H_{l-1}$. In other words, the solution has the following parametric form:
\begin{eqnarray*}
W_l = H_{l-1}^T\alpha_l
\end{eqnarray*}
for some $\alpha_l \in \R^{n}$. Thus, if (\ref{eq:blockwise-stoc-method}) is converging to a critical point in expectation, then we have  
$W_{t, l} \rightarrow W_{*, l}$ as $t \rightarrow \infty$, where $W_{*, l} = H_{l-1}^T\alpha_{*, l}$ for some optimal $\alpha_{*, l}$.  
Since every critical point is an optimal solution, then $W_{*, l}$ is also a solution to (\ref{eq:nonlinear_sub_ls_rw}), and we have 
\begin{eqnarray*}
\Phi_l^{-1}(Y) = H_{l-1}W_{*, l} = H_{l-1}H_{l-1}^T\alpha_{*, l}.
 \end{eqnarray*}
 We solve for $\alpha_{*, l}$ and obtain
 \begin{eqnarray*}
\alpha_{*, l} = (H_{l-1}H_{l-1}^T)^{-1}\Phi_l^{-1}(Y).
 \end{eqnarray*}
 Therefore, $W_{*, l} = H_{l-1}^T(H_{l-1}H_{l-1}^T)^{-1}\Phi_l^{-1}(Y)$. 
\end{proof}


\section{Proof of Proposition~\ref{prop:large_margin_solution}}
\begin{proof}
Let $(x_{i_t}, y_{i_t})$ be the pair of sample selected at iteration $t$. The stochastic gradient of least square problem (\ref{eq:ls}) at the $t$-th iteration is 
\begin{eqnarray*}
2(x_{i_t}^T\theta_t - y_{i_t})x_{i_t} = X^Te_t,
\end{eqnarray*}
where we define $e_t$ to be the error vector with value $2(x_{i_t}^T\theta_t - y_{i_t})$ in the $i_t$-th coordinate and zeros elsewhere.  
For each block $b$, BAG with $\theta_1 = 0$ uses the following 
update rule:
\begin{eqnarray*}
\theta_{t + 1, \mG_b} = \theta_{t, \mG_b} - \eta_{t,b} X_{:, \mG_b}^Te_t = X_{:, \mG_b}^T\left(-\sum_{i=1}^t\eta_{i,b} e_{i}\right),
\end{eqnarray*}
where $\eta_{t,b} = \eta/(\sqrt{\sum_{i=1}^t\|X_{:, \mG_b}^Te_i\|_2^2/d_b} + \epsilon)$. Then, each subvector of the solution found by BAG 
lies in the span of rows of $X_{:, \mG_b}$. In other words, each subvector of the solution is of the following parametric form:
\begin{eqnarray*}
\theta_{\mG_b} = X_{:, \mG_b}^T\alpha_b
\end{eqnarray*}
for some $\alpha_b \in \R^{n}$. Combining with Corollary~\ref{corollary:convex_convergence}, BAG is converging in expectation 
$\frac{1}{t}\sum_{i=1}^t\theta_i \rightarrow \theta_*$ as $t \rightarrow \infty$, where $\theta_{*, \mG_b} = X_{:, \mG_b}^T\alpha_{*, b}$ for some optimal $\alpha_{*, b}$.  
Since $\theta_*$ is a solution to (\ref{eq:ls}), we have
\begin{eqnarray*}
y = X\theta_* = \sum_{b=1}^BX_{:, \mG_b}X_{:, \mG_b}^T\alpha_{*, b}.
 \end{eqnarray*}
Assume that each submatrix $X_{:, \mG_b}$ has full row rank, then $X_{:, \mG_b}X_{:, \mG_b}^T$ is invertible, we can solve for $\alpha_{*, b}$'s  and obtain
\begin{eqnarray*}
\alpha_{*, b} = (X_{:, \mG_b}X_{:, \mG_b}^T)^{-1}u_b
 \end{eqnarray*}
for some $u_b \in \R^n$ and $\sum_{b=1}^Bu_b = y$.
\end{proof}


\section{Proof of Theorem~\ref{theorem:convex_regret}}

\begin{lemma} \label{lemma:partial_regret}
Let $\{\theta_t\}$ be the sequence generated by the Algorithm~\ref{alg:block-adagrad}. Define $s_t = [(\sqrt{v_{t,1}} + \epsilon)1_{d_1}^T, \dots, (\sqrt{v_{t,B}} + \epsilon)1_{d_B}^T]^T$. Let $H_t = \text{Diag}(s_t)$. Then, for any $\theta$, we have
\[f_t(\theta_t) - f_t(\theta) \leq \frac{1}{2\eta}\|\theta_{t} - \theta\|_{H_t}^2 - \frac{1}{2\eta}\|\theta_{t+1} - \theta\|_{H_t}^2  + \frac{\eta}{2}\|g_t\|_{H_t^{-1}}^2.\]
\end{lemma}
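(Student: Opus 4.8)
The plan is to recognize this as the standard per-step (telescoping-ready) regret inequality for adaptive mirror-descent-type updates, and to derive it from convexity plus a weighted polarization identity. The first step is to rewrite the blockwise update of Algorithm~\ref{alg:block-adagrad} in full-vector matrix form. Since $H_t = \text{Diag}(s_t)$ is a positive diagonal matrix whose entries are constant within each block, equal to $\sqrt{v_{t,b}} + \epsilon$ on block $b$, its inverse $H_t^{-1}$ has entries $1/(\sqrt{v_{t,b}} + \epsilon)$ on block $b$, so the per-block rule $\theta_{t+1,\mG_b} = \theta_{t,\mG_b} - \eta g_{t,\mG_b}/(\sqrt{v_{t,b}}+\epsilon)$ is exactly the single expression $\theta_{t+1} = \theta_t - \eta H_t^{-1} g_t$. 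Equivalently, $g_t = \tfrac{1}{\eta} H_t(\theta_t - \theta_{t+1})$, which is the identity I will substitute below. Note $H_t \succ 0$ because $\epsilon > 0$, so $H_t$ is invertible and all the norms $\|\cdot\|_{H_t}$ and $\|\cdot\|_{H_t^{-1}}$ are genuine.

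Next I would invoke convexity. By Assumption~\ref{assumption:convex}, with $g_t \in \partial f_t(\theta_t)$, the subgradient inequality gives $f_t(\theta_t) - f_t(\theta) \leq \langle g_t, \theta_t - \theta\rangle$ for any $\theta$, so it suffices to bound the linear term. Substituting $g_t = \tfrac{1}{\eta} H_t(\theta_t - \theta_{t+1})$ yields
\[
\langle g_t, \theta_t - \theta\rangle = \tfrac{1}{\eta}(\theta_t - \theta_{t+1})^T H_t (\theta_t - \theta).
\]
I then apply the polarization identity for the $H_t$-weighted inner product, $2a^T H_t b = \|a\|_{H_t}^2 + \|b\|_{H_t}^2 - \|a-b\|_{H_t}^2$, with $a = \theta_t - \theta_{t+1}$ and $b = \theta_t - \theta$; here $a - b = \theta - \theta_{t+1}$, so $\|a-b\|_{H_t}^2 = \|\theta_{t+1} - \theta\|_{H_t}^2$. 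This produces the two Bregman-type terms $\tfrac{1}{2\eta}\|\theta_t - \theta\|_{H_t}^2 - \tfrac{1}{2\eta}\|\theta_{t+1} - \theta\|_{H_t}^2$ that telescope when summed over $t$, plus a residual $\tfrac{1}{2\eta}\|\theta_t - \theta_{t+1}\|_{H_t}^2$.

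Finally I would collapse the residual term using the update once more: $\theta_t - \theta_{t+1} = \eta H_t^{-1} g_t$, hence $\|\theta_t - \theta_{t+1}\|_{H_t}^2 = \eta^2 g_t^T H_t^{-1} H_t H_t^{-1} g_t = \eta^2 \|g_t\|_{H_t^{-1}}^2$, so $\tfrac{1}{2\eta}\|\theta_t - \theta_{t+1}\|_{H_t}^2 = \tfrac{\eta}{2}\|g_t\|_{H_t^{-1}}^2$, which is exactly the gradient term in the claim. Assembling the three pieces gives the stated inequality. There is no serious obstacle here; the proof is essentially bookkeeping. The only point requiring a little care is confirming that the block-diagonal structure of $H_t$ makes the blockwise update coincide with the clean matrix update $\theta_{t+1} = \theta_t - \eta H_t^{-1} g_t$ (so that $H_t$ and $H_t^{-1}$ cancel cleanly in the residual), and that $\epsilon > 0$ guarantees $H_t$ is invertible so the Mahalanobis norms are well-defined.
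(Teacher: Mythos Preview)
Your proof is correct and follows essentially the same approach as the paper: convexity of $f_t$, the three-point/polarization identity in the $H_t$-norm, and the explicit update $\theta_{t+1} = \theta_t - \eta H_t^{-1} g_t$. The only cosmetic difference is that the paper splits $\langle g_t, \theta_t - \theta\rangle$ into $\langle g_t, \theta_{t+1} - \theta\rangle + \langle g_t, \theta_t - \theta_{t+1}\rangle$ and bounds the second piece via Fenchel's inequality (which here is an equality), whereas you substitute $g_t = \tfrac{1}{\eta}H_t(\theta_t-\theta_{t+1})$ at the outset and compute the residual $\tfrac{1}{2\eta}\|\theta_t-\theta_{t+1}\|_{H_t}^2$ exactly.
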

\begin{proof}
For any $\theta$, the convexity of $f_t$ indicates that
\begin{eqnarray*}
\lefteqn{f_t(\theta_t) - f_t(\theta)} \\
& \leq & \langle g_t, \theta_t - \theta \rangle \\
& = & \langle g_t, \theta_{t+1} - \theta \rangle + \langle g_t, \theta_{t} - \theta_{t+1} \rangle \\
& = & \frac{1}{\eta}\langle \theta_{t+1} - \theta, H_t(\theta_t - \theta_{t+1}) \rangle + \langle g_t, \theta_{t} - \theta_{t+1} \rangle \\
& =& \frac{1}{2\eta}\|\theta_{t} - \theta\|_{H_t}^2 - \frac{1}{2\eta}\|\theta_{t+1} - \theta\|_{H_t}^2 - \frac{1}{2\eta}\|\theta_{t+1} - \theta_t\|_{H_t}^2
+ \langle g_t, \theta_{t} - \theta_{t+1} \rangle \\
& \leq & \frac{1}{2\eta}\|\theta_{t} - \theta\|_{H_t}^2 - \frac{1}{2\eta}\|\theta_{t+1} - \theta\|_{H_t}^2 - \frac{1}{2\eta}\|\theta_{t+1} - \theta_t\|_{H_t}^2
+  \frac{1}{2\eta}\|\theta_{t+1} - \theta_t\|_{H_t}^2 + \frac{\eta}{2}\|g_t\|_{H_t^{-1}}^2 \\
& = & \frac{1}{2\eta}\|\theta_{t} - \theta\|_{H_t}^2 - \frac{1}{2\eta}\|\theta_{t+1} - \theta\|_{H_t}^2  + \frac{\eta}{2}\|g_t\|_{H_t^{-1}}^2,
\end{eqnarray*}
where the second to last inequality
follows from Fenchel's inequality applied to the conjugate functions $\frac{1}{2\eta}\|\cdot\|_{H_t}^2$ and $\frac{\eta}{2}\|\cdot\|_{H_t^{-1}}^2$.
\end{proof}

\begin{lemma} \label{lemma:sum_inequality}
Considering an arbitrary R-valued sequence $\{a_i\}$ and its vector representation $a_{1:t} = [a_1, \dots, a_t]$, we have
\begin{eqnarray*}
\sum_{t=1}^T\frac{a_t^2}{\|a_{1:t}\|_2} \leq 2\|a_{1:T}\|_2.
\end{eqnarray*}
\end{lemma}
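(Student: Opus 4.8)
The plan is to reduce the claim to a single per-term (telescoping) inequality built on the concavity of the square root. First I would introduce the partial sums of squares $S_t = \sum_{i=1}^t a_i^2 = \|a_{1:t}\|_2^2$, with the convention $S_0 = 0$, so that $a_t^2 = S_t - S_{t-1}$ and $\|a_{1:t}\|_2 = \sqrt{S_t}$. The left-hand side then rewrites as $\sum_{t=1}^T (S_t - S_{t-1})/\sqrt{S_t}$, and the goal becomes bounding this by $2\sqrt{S_T}$.

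The key step is the elementwise bound $(S_t - S_{t-1})/\sqrt{S_t} \leq 2(\sqrt{S_t} - \sqrt{S_{t-1}})$. I would establish it by factoring $\sqrt{S_t} - \sqrt{S_{t-1}} = (S_t - S_{t-1})/(\sqrt{S_t} + \sqrt{S_{t-1}})$ and using the monotonicity $S_{t-1} \leq S_t$, which gives $\sqrt{S_t} + \sqrt{S_{t-1}} \leq 2\sqrt{S_t}$; since $S_t - S_{t-1} = a_t^2 \geq 0$, dividing through yields the claimed inequality. Equivalently, this is just the tangent-line estimate $\sqrt{S_{t-1}} \leq \sqrt{S_t} - (S_t - S_{t-1})/(2\sqrt{S_t})$ coming from concavity of $\sqrt{\cdot}$. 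Summing over $t = 1, \dots, T$ makes the right-hand side telescope to $2(\sqrt{S_T} - \sqrt{S_0}) = 2\sqrt{S_T} = 2\|a_{1:T}\|_2$, which is precisely the desired bound.

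The only subtlety, which is minor rather than a genuine obstacle, is the treatment of vanishing partial sums: if $S_t = 0$ then $a_1 = \cdots = a_t = 0$, the summand $a_t^2/\sqrt{S_t}$ is read as $0$ under the convention $0/0 = 0$, and such terms drop out without affecting the telescoping. An alternative, essentially equivalent route is a short induction on $T$, with base case $a_1^2/|a_1| = |a_1| \leq 2|a_1|$ and the inductive step invoking the same concavity inequality at $t = T$. I expect the telescoping version to be the cleanest, since the per-term inequality is the single idea carrying the entire argument.
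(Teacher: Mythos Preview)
Your proof is correct and essentially the same as the paper's. The paper argues by induction on $T$: assuming the bound for $T-1$, it reduces to showing $2\sqrt{Z-x} + x/\sqrt{Z} \leq 2\sqrt{Z}$ with $Z = \|a_{1:T}\|_2^2$ and $x = a_T^2$, which it establishes by observing the left side is non-increasing in $x$. This is exactly your per-term inequality $(S_t - S_{t-1})/\sqrt{S_t} \leq 2(\sqrt{S_t} - \sqrt{S_{t-1}})$ at $t=T$, and your telescoping is just the unrolled induction you yourself mention; the only cosmetic difference is that you justify the key inequality by factoring $\sqrt{S_t}-\sqrt{S_{t-1}}$ (equivalently, concavity of $\sqrt{\cdot}$) whereas the paper checks the sign of the derivative.
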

\begin{proof}
The lemma can be proved by induction. The lemma trivially holds when $T = 1$. Assume the lemma holds for $T-1$, we get
\begin{eqnarray*}
\sum_{t=1}^T\frac{a_t^2}{\|a_{1:t}\|_2}  &\leq& 2\|a_{1:T-1}\|_2 + \frac{a_T^2}{\|a_{1:T}\|_2} \\
& = & 2\sqrt{Z - x} + \frac{x}{\sqrt{Z}},
\end{eqnarray*}
where we define $Z = \|a_{1:T}\|_2^2$ and $x = a_T^2$. As the RHS is non-increasing for $x \geq 0$. We can set $x=0$ to maximize the bound and obtain $2\sqrt{Z}$.
\end{proof}

\begin{lemma} \label{lemma:grad_sum_inequality}
Let $H_t$ be defined as in Lemma~\ref{lemma:partial_regret}. Denote $g_{1:t, \mG_b} = [g_{1, \mG_b}^T, \dots, g_{t, \mG_b}^T]^T$. We have
\[\sum_{t=1}^T\|g_t\|_{H_t^{-1}}^2 \leq 2\sum_{b=1}^B\sqrt{d_b}\|g_{1:T, \mG_b}\|_2.\]
\end{lemma}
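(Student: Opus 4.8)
The plan is to exploit the block-diagonal structure of $H_t$ to decompose the quadratic form $\|g_t\|_{H_t^{-1}}^2$ into a sum over blocks, and then reduce each block's contribution to a scalar sequence so that Lemma~\ref{lemma:sum_inequality} applies directly. First I would record that, by the accumulation rule $v_{t, b} = v_{t-1, b} + \|g_{t, \mG_b}\|_2^2/d_b$ with $v_{0,b} = 0$, we have $v_{t, b} = \|g_{1:t, \mG_b}\|_2^2/d_b$, and hence $\sqrt{v_{t, b}} = \|g_{1:t, \mG_b}\|_2/\sqrt{d_b}$. Since every diagonal entry of $H_t$ within block $b$ equals $\sqrt{v_{t, b}} + \epsilon$, the Mahalanobis norm splits as
\[
\|g_t\|_{H_t^{-1}}^2 = \sum_{b=1}^B \frac{\|g_{t, \mG_b}\|_2^2}{\sqrt{v_{t, b}} + \epsilon}.
\]

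Next, because $\epsilon > 0$, I would drop it from each denominator to obtain the upper bound
\[
\|g_t\|_{H_t^{-1}}^2 \leq \sum_{b=1}^B \frac{\|g_{t, \mG_b}\|_2^2}{\sqrt{v_{t, b}}} = \sum_{b=1}^B \sqrt{d_b}\,\frac{\|g_{t, \mG_b}\|_2^2}{\|g_{1:t, \mG_b}\|_2},
\]
and then sum over $t$ and interchange the two finite sums to isolate, for each block $b$, the quantity $\sum_{t=1}^T \|g_{t, \mG_b}\|_2^2 / \|g_{1:t, \mG_b}\|_2$.

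The key observation — and the only step that requires a little care — is that this per-block quantity matches the scalar template of Lemma~\ref{lemma:sum_inequality}. Setting $a_t = \|g_{t, \mG_b}\|_2$, we have $a_t^2 = \|g_{t, \mG_b}\|_2^2$ and, by definition of stacking, $\|a_{1:t}\|_2 = \sqrt{\sum_{i=1}^t \|g_{i, \mG_b}\|_2^2} = \|g_{1:t, \mG_b}\|_2$. Thus Lemma~\ref{lemma:sum_inequality} gives $\sum_{t=1}^T \|g_{t, \mG_b}\|_2^2 / \|g_{1:t, \mG_b}\|_2 \leq 2\|g_{1:T, \mG_b}\|_2$ for each $b$. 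Substituting this into the bound above yields
\[
\sum_{t=1}^T \|g_t\|_{H_t^{-1}}^2 \leq \sum_{b=1}^B \sqrt{d_b}\cdot 2\|g_{1:T, \mG_b}\|_2 = 2\sum_{b=1}^B \sqrt{d_b}\,\|g_{1:T, \mG_b}\|_2,
\]
which is the claim. I do not anticipate a genuine obstacle here: the argument is essentially the classical Adagrad telescoping bound, and the blockwise generalization works because each block uses a \emph{single} adaptive scale $\sqrt{v_{t,b}}$, so the vector sequence $\{g_{t, \mG_b}\}$ collapses to the scalar sequence $\{\|g_{t, \mG_b}\|_2\}$ inside Lemma~\ref{lemma:sum_inequality}. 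The one point worth stating explicitly is the identity $v_{t, b} = \|g_{1:t, \mG_b}\|_2^2/d_b$, since the $\sqrt{d_b}$ factors in the final bound trace back precisely to the $1/d_b$ normalization in the variance accumulation.
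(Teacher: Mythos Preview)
Your proposal is correct and follows essentially the same route as the paper: decompose $\|g_t\|_{H_t^{-1}}^2$ blockwise, drop $\epsilon$, swap the order of summation, and apply Lemma~\ref{lemma:sum_inequality} with $a_t = \|g_{t,\mG_b}\|_2$. Your write-up is in fact more explicit than the paper's about the identity $v_{t,b} = \|g_{1:t,\mG_b}\|_2^2/d_b$ and the role of the $\epsilon$-drop, and you state the correct scalar substitution (the paper's ``$a_i = \|g_{i,\mG_b}\|_2^2$'' is a typo for $\|g_{i,\mG_b}\|_2$).
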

\begin{proof}
\begin{eqnarray*}
\sum_{t=1}^T\|g_t\|_{H_t^{-1}}^2 
&\leq& \sum_{t=1}^T \langle g_t, \text{Diag}(s_t)^{-1}g_t\rangle \\
&=& \sum_{t=1}^T\sum_{b=1}^B \frac{\sqrt{d_b}\|g_{t, \mG_b}\|^2_2}{\|g_{1:t, \mG_b}\|_2} \\
&=& \sum_{b=1}^B\sqrt{d_b}\sum_{t=1}^T \frac{\|g_{t, \mG_b}\|^2_2}{\|g_{1:t, \mG_b}\|_2} \\
& \leq & 2\sum_{b=1}^B\sqrt{d_b}\|g_{1:T, \mG_b}\|_2.
\end{eqnarray*}
where the last inequality follows from the Lemma~\ref{lemma:sum_inequality} by setting $a_i = \|g_{i, \mG_b}\|_2^2$.
\end{proof}


\subsection{Proof of Theorem~\ref{theorem:convex_regret}}
\begin{proof}
By summing up the equation in Lemma~\ref{lemma:partial_regret} with $\theta = \theta_*$, we obtain
\begin{eqnarray*}
\sum_{t=1}^Tf_t(\theta_t) - f_t(\theta_*) \leq \frac{1}{2\eta}\|\theta_{1} - \theta_*\|_{H_1}^2 + \frac{1}{2\eta}\sum_{t=1}^{T-1}\left[\|\theta_{t+1} - \theta_*\|_{H_{t+1}}^2 - \|\theta_{t+1} - \theta_*\|_{H_t}^2\right]  + \frac{\eta}{2}\sum_{t=1}^T\|g_t\|_{H_t^{-1}}^2.
\end{eqnarray*}
By the construction of $H_t$, we have that $H_{t+1} \succeq H_t$. Then, we get
\begin{eqnarray*}
\lefteqn{\|\theta_{t+1} - \theta_*\|_{H_{t+1}}^2 - \|\theta_{t+1} - \theta_*\|_{H_t}^2} \\
& = & \langle \theta_{t+1} - \theta_*, \text{Diag}(s_{t+1} - s_t)(\theta_{t+1} - \theta_*)\rangle \\
& = & \sum_{b=1}^B\|\theta_{t+1, \mG_b} - \theta_{*, \mG_b}\|_2^2(\sqrt{v_{t+1, b}} - \sqrt{v_{t, b}}).
\end{eqnarray*}
Given the above result, we have
\begin{eqnarray*}
\lefteqn{\sum_{t=1}^{T-1}\left[\|\theta_{t+1} - \theta_*\|_{H_{t+1}}^2 - \|\theta_{t+1} - \theta_*\|_{H_t}^2\right]}\\
& = & \sum_{b=1}^B\sum_{t=1}^{T-1}\|\theta_{t+1, \mG_b} - \theta_{*, \mG_b}\|_2^2(\sqrt{v_{t+1, b}} - \sqrt{v_{t, b}}) \\
& = & \sum_{b=1}^B\sum_{t=1}^{T-1}\|\theta_{t+1, \mG_b} - \theta_{*, \mG_b}\|_2^2(\sqrt{v_{t+1, b}} - \sqrt{v_{t, b}}) + \sum_{b=1}^B\|\theta_{1, \mG_b} - \theta_{*, \mG_b}\|_2^2(\sqrt{v_{1, b}} - \sqrt{v_{1, b}}) \\
& \leq & \sum_{b=1}^BD_b^2\sqrt{v_{T, b}}  - \sum_{b=1}^B\|\theta_{1, \mG_b} - \theta_{*, \mG_b}\|_2^2\sqrt{v_{1,b}}.
\end{eqnarray*}
Recall that $v_{T,b} = \|g_{1:T, \mG_b}\|_2^2/d_b$. Let $\epsilon=0$. 
Combining Lemma~\ref{lemma:grad_sum_inequality} with the fact that $\|\theta_{1} - \theta_*\|_{H_1}^2 = \sum_{b=1}^B\|\theta_{1,\mG_b} - \theta_{*, \mG_b}\|^2_2\sqrt{v_{1,b}}$, we have
\begin{eqnarray*}
\sum_{t=1}^Tf_t(\theta_t) - f_t(\theta_*) & \leq & \frac{1}{2\eta}\sum_{b=1}^B\frac{D_b^2}{\sqrt{d_b}}\|g_{1:T, \mG_b}\|_2 + \eta\sum_{b=1}^B\sqrt{d_b}\|g_{1:T, \mG_b}\|_2.
\end{eqnarray*}
\end{proof}


\subsection{Proof of Corollary~\ref{corollary:convex_convergence}}
\begin{lemma} \label{lemma:Hoeffding-Azuma}
(Hoeffding-Azuma) Let $Z_1, Z_2, \dots, Z_T$ be a martingale difference sequence s.t. $|Z_i| \leq C$ (w.p. 1). 
For all $\epsilon \geq 0$, 
\begin{eqnarray*}
P\left(\sum_{t=1}^TZ_t \geq \epsilon\right) \leq \exp\left(-\frac{\epsilon^2}{2C^2T}\right).
\end{eqnarray*}
\end{lemma}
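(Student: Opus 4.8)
The plan is to use the Chernoff (exponential-moment) method, exploiting the martingale structure through iterated conditioning. Let $\{\mathcal{F}_t\}$ be the filtration for which each $Z_t$ is $\mathcal{F}_t$-measurable with $\CE[Z_t\mid\mathcal{F}_{t-1}]=0$. For an arbitrary $\lambda>0$ I would first apply Markov's inequality to the nonnegative random variable $e^{\lambda\sum_{t=1}^TZ_t}$:
\[
P\left(\sum_{t=1}^TZ_t\geq\epsilon\right) = P\left(e^{\lambda\sum_{t=1}^TZ_t}\geq e^{\lambda\epsilon}\right) \leq e^{-\lambda\epsilon}\,\CE\left[e^{\lambda\sum_{t=1}^TZ_t}\right].
\]

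The crux is to control the moment generating function on the right. I would peel off the last term using the tower property: since $e^{\lambda\sum_{t=1}^{T-1}Z_t}$ is $\mathcal{F}_{T-1}$-measurable,
\[
\CE\left[e^{\lambda\sum_{t=1}^TZ_t}\right] = \CE\left[e^{\lambda\sum_{t=1}^{T-1}Z_t}\,\CE\left[e^{\lambda Z_T}\mid\mathcal{F}_{T-1}\right]\right].
\]
Because $\CE[Z_T\mid\mathcal{F}_{T-1}]=0$ and $Z_T\in[-C,C]$ almost surely, Hoeffding's lemma bounds the inner conditional factor by the deterministic constant $e^{\lambda^2C^2/2}$. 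Substituting this and iterating the argument $T$ times gives $\CE[e^{\lambda\sum_tZ_t}]\leq e^{T\lambda^2C^2/2}$, so that $P(\sum_tZ_t\geq\epsilon)\leq\exp(-\lambda\epsilon+T\lambda^2C^2/2)$. Optimizing the free parameter by minimizing the exponent over $\lambda>0$ yields $\lambda^\star=\epsilon/(TC^2)$, and back-substitution produces the exponent $-\epsilon^2/(2TC^2)$, which is exactly the claimed bound.

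The main obstacle is Hoeffding's lemma itself, i.e.\ the bound $\CE[e^{\lambda X}\mid\mathcal{F}]\leq e^{\lambda^2C^2/2}$ for a conditionally zero-mean $X$ supported on $[-C,C]$. The standard route is to use convexity of $t\mapsto e^{\lambda t}$ on $[-C,C]$ to dominate $e^{\lambda X}$ by the secant line through the endpoints, take conditional expectations (which annihilates the linear part since $\CE[X\mid\mathcal{F}]=0$), and then show by a second-order Taylor expansion of the resulting log-moment-generating function $\psi(\lambda)$ that $\psi''\leq(2C)^2/4$, whence $\psi(\lambda)\leq\lambda^2(2C)^2/8=\lambda^2C^2/2$. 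The remaining steps—the Chernoff inequality, the conditioning, and the final optimization in $\lambda$—are entirely routine.
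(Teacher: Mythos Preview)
Your argument is the standard and correct Chernoff-method proof of the Azuma--Hoeffding inequality. Note, however, that the paper does not actually supply a proof of this lemma: it simply records the classical inequality (the \texttt{proof} environment that follows it in the appendix is the proof of Corollary~\ref{corollary:convex_convergence}, which \emph{invokes} the lemma). So there is nothing to compare against; your exponential-moment derivation with Hoeffding's lemma and optimization over $\lambda$ is exactly the textbook route and is fine.
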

\begin{proof}
Assume that each $f_t$ is generated in an i.i.d. manner, then we have $F(\theta) = \CE_t[f_t(\theta)]$. 
Let $\theta_{\min} = \arg\min_{\theta}F(\theta)$. 
Let us define $Z_t = F(\theta_t) - f_t(\theta_t) - (F(\theta_{\min}) - f_t(\theta_{\min}))$ and $\mathcal{F}_{t-1} = \{f_1, \dots, f_{t-1}\}$. We get
\begin{eqnarray*}
\CE[Z_t|\mathcal{F}_{t-1}] & = & \CE[F(\theta_t) - f_t(\theta_t)|\mathcal{F}_{t-1}] - \CE[F(\theta_{\min}) - f_t(\theta_{\min})|\mathcal{F}_{t-1}] \\
& = & F(\theta_t) - F(\theta_t) - (F(\theta_{\min}) - F(\theta_{\min})) \\
& = & 0.
\end{eqnarray*}
Then, the process $\{Z_t\}$ is a martingale difference sequence w.r.t. the history $\mathcal{F}_{t-1}$.
\begin{eqnarray*}
\sum_{t=1}^T[F(\theta_t) - F(\theta_{\min})] & = & \sum_{t=1}^T[f_t(\theta_t) - f_t(\theta_{\min}) + Z_t] \\
& \leq & \sum_{t=1}^Tf_t(\theta_t) - \inf_{\theta}\sum_{t=1}^Tf_t(\theta) + \sum_{t=1}^TZ_t \\
& = & R(T) + \sum_{t=1}^TZ_t. 
\end{eqnarray*}
It is clearly that $|Z_i| \leq 2$. Applying Lemma~\ref{lemma:Hoeffding-Azuma}, with probability greater than $1 - \delta$, we have
\begin{eqnarray*}
\sum_{t=1}^TZ_t \leq \sqrt{8T\log(1/\delta)}.
\end{eqnarray*}
Then, with the convexity of $F$ and probability greater than $1 - \delta$, we have
\begin{eqnarray*}
F\left(\frac{1}{T}\sum_{t=1}^T\theta_t\right) - F(\theta_{\min}) \leq \frac{1}{T}\sum_{t=1}^TF(\theta_t) - F(\theta_{\min}) \leq \frac{R(T)}{T} + 2\sqrt{\frac{2\log(1/\delta)}{T}}.
\end{eqnarray*}
\end{proof}



\section{Proof of Theorem~\ref{theorem:nonconvex_momentum}}  

In the sequel, we define $H_t$ as
\[H_t = \text{Diag}(s_t),\]
where
\[s_t = [(\sqrt{\hat{v}_{t,1}} + \epsilon)1_{d_1}^T, \dots, (\sqrt{\hat{v}_{t,B}} + \epsilon)1_{d_B}^T]^T.\]
Let $\delta_t = \theta_{t+1} - \theta_t = -\eta_tm_t/(\sqrt{s_t} + \epsilon)$ and $\sigma_{t, b} = \sqrt{\CE_t[|g_{t, \mG_b}\|^2_2}$. We introduce $\tilde{H}_t$ as 
\begin{eqnarray*}
\tilde{H}_t = \text{Diag}(\tilde{s}_t),
\end{eqnarray*}
where 
\begin{eqnarray*}
\tilde{s}_t &=& [(\sqrt{\tilde{v}_{t, 1}} + \epsilon)1_{d_1}^T, \dots, (\sqrt{\tilde{v}_{t, B}} + \epsilon)1_{d_B}^T]^T, \\
\tilde{v}_{t, b} &=& \frac{1}{A_t}\left(\sum_{i=1}^{t-1}a_i\frac{\|g_{i, \mG_b}\|^2_2}{d_b}+a_t\frac{\sigma_{t, b}^2}{d_b}\right) \forall b \in [B].
\end{eqnarray*}
Assume that $\sigma_{t,b}/\sqrt{d_b} \leq \sigma_b$ for all $t$ and let $\Sigma = \text{Diag}([\sigma_1^21_{d_1}^T, \dots, \sigma_B^21_{d_B}^T]^T)$.

\begin{lemma}\label{lemma:log-seq}
Let $S_t = S_0 + \sum_{i=1}^ta_i$, where $\{a_t\}$ is a non-negative sequence and $S_0 > 0$. We have 
\begin{eqnarray*}
\sum_{t=1}^T\frac{a_t}{S_t} \leq \log(S_T) - \log(S_0)
\end{eqnarray*}
\end{lemma}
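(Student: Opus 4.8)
The plan is to bound each summand $a_t/S_t$ by the increment $\log S_t - \log S_{t-1}$ of the logarithm and then telescope. First I would record the basic positivity facts: because $\{a_t\}$ is non-negative and $S_0 > 0$, the partial sums obey $0 < S_0 \le S_{t-1} \le S_t$ for every $t$, so that $a_t = S_t - S_{t-1} \ge 0$ and every logarithm appearing below is well defined.

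The heart of the argument is the per-term inequality
\[
\frac{a_t}{S_t} \le \log S_t - \log S_{t-1}.
\]
I would justify this by an integral comparison: since $1/s$ is decreasing, $1/S_t \le 1/s$ for every $s \in [S_{t-1}, S_t]$, and therefore
\[
\frac{a_t}{S_t} = \int_{S_{t-1}}^{S_t}\frac{1}{S_t}\,ds \le \int_{S_{t-1}}^{S_t}\frac{1}{s}\,ds = \log S_t - \log S_{t-1}.
\]
Equivalently, one can write $a_t/S_t = 1 - S_{t-1}/S_t$ and invoke the elementary bound $1 - 1/x \le \log x$ applied at $x = S_t/S_{t-1} \ge 1$; both routes give the same per-term estimate.

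Finally I would sum the per-term inequality over $t = 1, \dots, T$. The right-hand side telescopes, collapsing to $\log S_T - \log S_0$, which is precisely the claimed bound $\sum_{t=1}^T a_t/S_t \le \log(S_T) - \log(S_0)$.

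I expect essentially no obstacle in this lemma; it is a standard logarithmic telescoping estimate. The only point requiring care is the assumption $S_0 > 0$ combined with the non-negativity of $\{a_t\}$, which is exactly what guarantees $S_{t-1} > 0$ so that the logarithms and the integral comparison are valid throughout.
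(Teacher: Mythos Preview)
Your proposal is correct and follows essentially the same approach as the paper: bound each term $a_t/S_t$ by $\log S_t - \log S_{t-1}$ and telescope. The only cosmetic difference is that the paper justifies the per-term inequality via the concavity of $\log$ (i.e., $\log b \le \log a + (b-a)/a$, rearranged to $(a-b)/a \le \log a - \log b$), whereas you use the equivalent integral comparison / the bound $1 - 1/x \le \log x$; these are the same elementary fact in different clothing.
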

\begin{proof}
The concavity of $\log$ leads to $\log(b) \leq \log(a) + \frac{1}{a}(b - a)$ for all $a, b > 0$. This suggests that
\begin{eqnarray*}
\frac{a - b}{a} \leq \log(a) - \log(b) = \log\left(\frac{a}{b}\right).
\end{eqnarray*}
Hence, we have
\begin{eqnarray*}
\sum_{t=1}^T\frac{a_t}{S_t} = \sum_{t=1}^T\frac{S_t - S_{t-1}}{S_t} \leq \sum_{t=1}^T\log\left(\frac{S_{t}}{S_{t-1}}\right) = \log(S_T) - \log(S_0).
\end{eqnarray*}
\end{proof}

\begin{lemma}\label{lemma:sum-seq-part}
Let $\{a_t\}$ and $\{s_t\}$ be two real number sequences, and let $S_t = \sum_{i=1}^ts_i$. Then, we have
\begin{eqnarray*}
\sum_{t=1}^Ta_ts_t = \sum_{t=1}^{T-1}(a_t - a_{t+1})S_t + a_TS_T.
\end{eqnarray*}
\end{lemma}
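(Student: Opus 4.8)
The plan is to prove this standard Abel summation-by-parts identity by writing each $s_t$ as the forward difference $S_t - S_{t-1}$ and then reindexing one of the resulting sums. First I would adopt the convention $S_0 = 0$, which is consistent with the definition $S_t = \sum_{i=1}^t s_i$ (the empty sum is zero). Then $s_t = S_t - S_{t-1}$ holds for every $t \geq 1$, so that $\sum_{t=1}^T a_t s_t = \sum_{t=1}^T a_t S_t - \sum_{t=1}^T a_t S_{t-1}$.

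The key step is to reindex the second sum. Shifting the summation index via $t \mapsto t+1$ gives $\sum_{t=1}^T a_t S_{t-1} = \sum_{t=0}^{T-1} a_{t+1} S_t = \sum_{t=1}^{T-1} a_{t+1} S_t$, where the $t=0$ term drops out precisely because $S_0 = 0$. Substituting this back, I would then group the terms sharing the common factor $S_t$ for $1 \leq t \leq T-1$, while peeling off the $t=T$ term $a_T S_T$ separately (since it has no counterpart in the shifted sum). This yields $\sum_{t=1}^T a_t s_t = \sum_{t=1}^{T-1}(a_t - a_{t+1}) S_t + a_T S_T$, which is the claimed identity.

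There is essentially no real obstacle here, as the statement is a purely algebraic rearrangement; the only points requiring a little care are the two boundary terms, namely ensuring the telescoped contribution at $t=0$ vanishes through $S_0 = 0$ and correctly isolating the final term $a_T S_T$ rather than folding it into a difference. As an alternative I could instead verify the identity by induction on $T$: the base case $T=1$ reads $a_1 s_1 = a_1 S_1$, which holds since $S_1 = s_1$, and the inductive step follows by adding $a_{T+1} s_{T+1}$ to both sides and regrouping. I would favor the direct reindexing argument for its brevity and transparency.
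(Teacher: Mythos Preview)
Your proposal is correct and matches the paper's own proof essentially line for line: the paper also sets $S_0=0$, writes $s_t=S_t-S_{t-1}$, splits the sum, reindexes the second piece, and regroups to obtain the identity.
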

\begin{proof}
Let $S_0$ = 0. Expanding the summation, we obtain
\begin{eqnarray*}
\sum_{t=1}^Ta_ts_t & = & \sum_{t=1}^Ta_t(S_t - S_{t-1}) \\
& = & \sum_{t=1}^{T-1}a_tS_t - \sum_{t=1}^{T-1}a_{t+1}S_{t} + a_TS_T \\
& = & \sum_{t=1}^{T-1}(a_t - a_{t+1})S_t + a_tS_T
\end{eqnarray*}
\end{proof}

\begin{lemma}\label{lemma:nonconvex-block-stepsize-ema-stepsize}
Assume $\{a_t\}$ is non-decreasing such that $\{A_{t-1}/A_t\}$ is non-decreasing. Define $w_t = \eta_t/\sqrt{\frac{a_t}{A_t}}$. Assume $w_t$ is "almost" non-increasing. This means there exists another non-increasing sequence $\{z_t\}$ and positive constants $C_1$ and $C_2$ such that $C_1z_t \leq w_t \leq C_2z_t$. Then, 
\begin{eqnarray*}
w_t \leq C_2/C_1 w_i  \hspace{.05in} \text{ and } \hspace{.05in}\eta_t \leq C_2/C_1\eta_i 
\end{eqnarray*}
for all $i < t$.
\end{lemma}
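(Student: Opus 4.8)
The plan is to derive both inequalities directly from the sandwich condition $C_1 z_t \le w_t \le C_2 z_t$ together with the stated monotonicity hypotheses; no induction or summation is needed. The argument splits into two parts, one for each claimed bound, and in both cases the work reduces to chaining the sandwich estimates through a monotone auxiliary sequence.

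For the bound on $w_t$, I would fix $i < t$ and pass through the non-increasing sequence $\{z_t\}$. Since $z$ is non-increasing and $i < t$, we have $z_t \le z_i$. Applying the upper sandwich bound at index $t$ and the lower sandwich bound at index $i$ then gives $w_t \le C_2 z_t \le C_2 z_i \le (C_2/C_1) w_i$, where the final step uses $z_i \le w_i/C_1$. This establishes the first inequality.

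For the bound on $\eta_t$, I would first rewrite $\eta_t = w_t \sqrt{a_t/A_t}$ from the definition $w_t = \eta_t/\sqrt{a_t/A_t}$. The key observation---essentially the only non-mechanical step---is that $\{a_t/A_t\}$ is non-increasing. This follows by writing $a_t/A_t = (A_t - A_{t-1})/A_t = 1 - A_{t-1}/A_t$, using $A_t = A_{t-1} + a_t$; since $\{A_{t-1}/A_t\}$ is non-decreasing by assumption, $\{1 - A_{t-1}/A_t\}$ and hence $\{a_t/A_t\}$ is non-increasing. Consequently $\sqrt{a_t/A_t} \le \sqrt{a_i/A_i}$ for $i < t$. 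Combining the first part with this monotonicity yields $\eta_t = w_t\sqrt{a_t/A_t} \le (C_2/C_1) w_i \sqrt{a_t/A_t} \le (C_2/C_1) w_i \sqrt{a_i/A_i} = (C_2/C_1)\eta_i$, which is the second inequality.

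The hard part is essentially nil beyond spotting the identity $a_t/A_t = 1 - A_{t-1}/A_t$; once $\{a_t/A_t\}$ is recognized as non-increasing, both claims collapse to one-line sandwich estimates. The only thing I would verify carefully is that the quantities $a_t, A_t, z_t, C_1$ are strictly positive so that the divisions and square roots are well defined, which holds since $\{a_t\}$ is a positive non-decreasing weight sequence and $C_1 > 0$.
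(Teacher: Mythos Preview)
Your proposal is correct and follows essentially the same route as the paper's proof: the first inequality is obtained by the chain $w_t \le C_2 z_t \le C_2 z_i \le (C_2/C_1) w_i$, and the second by writing $\eta_t = w_t\sqrt{a_t/A_t}$, using the first part, and observing via $a_t/A_t = 1 - A_{t-1}/A_t$ that $\sqrt{a_t/A_t}$ is non-increasing.
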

\begin{proof}
For any $i < t$,
\begin{eqnarray*}
w_t \leq C_2z_t \leq C_2z_i \leq C_2/C_1w_i.
\end{eqnarray*}
Then,
\begin{eqnarray*}
\eta_t \leq \frac{C_2\sqrt{a_t/A_t}}{C_1\sqrt{a_i/A_i}}\eta_i  = \frac{C_2\sqrt{1 - A_{t-1}/A_t}}{C_1\sqrt{1 - A_{i-1}/A_i}}\eta_i  \leq C_2/C_1\eta_i.
\end{eqnarray*}
\end{proof}

\begin{lemma}\label{lemma:nonconvex-momentum-grad-weighted-bound-no-eta}
Assume that $\{a_t\}$ is non-decreasing.   
For any block diagonal matrix $C = \text{Diag}([c_11_{d_1}^T, \dots, c_B1_{d_B}^T]^T)$ with $c_b \geq 0$ for all $b$, we have 
    \begin{eqnarray*}
 \sum_{t=1}^T\CE\left[\frac{a_t}{A_t}\|H_{t}^{-1}g_t\|_{C}^2\right] 
  & \leq & \sum_{b=1}^Bc_{b}d_b\left[\log\left(\frac{\sigma_b^2}{\epsilon^2} + 1\right) 
  + \log\left(\frac{1}{a_1}\sum_{i=1}^{T}a_i + 1\right)\right].
 \end{eqnarray*}
\end{lemma}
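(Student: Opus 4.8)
The plan is to collapse the block-weighted quadratic form into $B$ independent scalar sums and then bound each one by telescoping a logarithm. Since $C$ and $H_t$ are both block-diagonal with entries constant on each block, we have $\|H_t^{-1}g_t\|_C^2 = \sum_{b=1}^B c_b\|g_{t,\mG_b}\|_2^2/(\sqrt{\hat{v}_{t,b}}+\epsilon)^2$. Substituting this, interchanging the two finite sums over $t$ and $b$, and using linearity of expectation, it suffices to show for each fixed $b$ that $\CE[\sum_{t=1}^T \tfrac{a_t}{A_t}\|g_{t,\mG_b}\|_2^2/(\sqrt{\hat{v}_{t,b}}+\epsilon)^2] \le d_b[\log(\sigma_b^2/\epsilon^2+1)+\log(\tfrac{1}{a_1}\sum_i a_i+1)]$; summing these over $b$ with weights $c_b$ gives the claim.

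The first step is a deterministic (sample-path) bound. Using $(\sqrt{\hat{v}_{t,b}}+\epsilon)^2 \ge \hat{v}_{t,b}+\epsilon^2$ together with $\hat{v}_{t,b} = \tfrac{1}{A_t}\sum_{i=1}^t a_i\|g_{i,\mG_b}\|_2^2/d_b$, the factor $A_t$ cancels and the $t$th summand is at most $a_t\|g_{t,\mG_b}\|_2^2 / (\sum_{i=1}^t a_i\|g_{i,\mG_b}\|_2^2/d_b + A_t\epsilon^2)$. Writing $b_i := a_i\|g_{i,\mG_b}\|_2^2/d_b$, this is exactly $d_b\, b_t/(\sum_{i\le t}b_i + A_t\epsilon^2)$. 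The key move is to make the offset constant in $t$: since $\{a_t\}$ is non-decreasing and non-negative, $A_t \ge A_1 = a_1$, so $A_t\epsilon^2 \ge a_1\epsilon^2$ and the summand is at most $d_b\, b_t/(\sum_{i\le t}b_i + a_1\epsilon^2)$. This is precisely the form handled by Lemma~\ref{lemma:log-seq} with $S_0 = a_1\epsilon^2 > 0$ and the non-negative sequence $\{b_t\}$, yielding the pointwise bound $\sum_{t=1}^T \tfrac{a_t}{A_t}\|g_{t,\mG_b}\|_2^2/(\sqrt{\hat{v}_{t,b}}+\epsilon)^2 \le d_b\log(1+\tfrac{1}{a_1\epsilon^2}\sum_{i=1}^T a_i\|g_{i,\mG_b}\|_2^2/d_b)$, valid for every realization of the gradients.

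Finally I would take expectations. Because $\log$ is concave and its argument is affine in the random quantities $\|g_{i,\mG_b}\|_2^2$, Jensen's inequality pushes $\CE$ inside the logarithm; Assumption~\ref{assumption:variance}, applied via the tower property so that $\CE[\|g_{i,\mG_b}\|_2^2]/d_b \le \sigma_b^2$, then replaces each gradient norm by $\sigma_b^2$. This produces $d_b\log(1 + \tfrac{\sigma_b^2}{\epsilon^2}\tfrac{A_T}{a_1})$ with $A_T = \sum_i a_i$. The elementary inequality $1+xy \le (1+x)(1+y)$ for $x,y\ge 0$, with $x=\sigma_b^2/\epsilon^2$ and $y=A_T/a_1$, splits this into the two advertised logarithms, and summing over $b$ finishes the proof.

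I expect the only delicate point to be the constant-offset reduction $A_t\epsilon^2 \ge a_1\epsilon^2$, which is what unlocks the telescoping Lemma~\ref{lemma:log-seq}; everything afterward (Jensen plus the $\log$-product split) is routine. The one thing to be careful about is ordering: the deterministic telescoping bound must be established \emph{before} taking expectations, so that Jensen is applied to the already-summed logarithm rather than term by term.
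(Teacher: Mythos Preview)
Your proof is correct and follows essentially the same route as the paper: decompose the weighted norm block by block, use $(\sqrt{\hat v_{t,b}}+\epsilon)^2\ge \hat v_{t,b}+\epsilon^2$, reduce the offset via $A_t\ge a_1$ to enable Lemma~\ref{lemma:log-seq}, then apply Jensen and the factorization $\log(1+xy)\le \log(1+x)+\log(1+y)$. The ordering concern you flag (telescoping before expectation) is exactly how the paper proceeds as well.
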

\begin{proof}
 \begin{eqnarray*}
 \sum_{t=1}^T\frac{a_t}{A_t}\|H_{t}^{-1}g_t\|_{C}^2 & = & 
  \sum_{t=1}^T\sum_{b=1}^B\frac{c_{b}\frac{a_t}{A_t}\|g_{t, \mG_b}\|_2^2}{(\sqrt{\hat{v}_{t,b}} + \epsilon)^2} \\
  & \leq &  \sum_{t=1}^T\sum_{b=1}^B\frac{c_{b}\frac{a_t}{A_t}\|g_{t, \mG_b}\|_2^2}{\hat{v}_{t,b} + \epsilon^2} \\
  & = &  \sum_{b=1}^B\sum_{t=1}^T\frac{c_{b}a_t\|g_{t, \mG_b}\|_2^2}{\sum_{i=1}^{t}a_i\|g_{i,\mathcal{G}_b}\|_2^2/d_b + A_t\epsilon^2} \\
   & = &  \sum_{b=1}^Bc_{b}d_b\sum_{t=1}^T\frac{a_t\|g_{t, \mG_b}\|_2^2}{\sum_{i=1}^{t}a_i\|g_{i,\mathcal{G}_b}\|_2^2  + d_bA_t\epsilon^2} \\
   & \leq &  \sum_{b=1}^Bc_{b}d_b\sum_{t=1}^T\frac{a_t\|g_{t, \mG_b}\|_2^2}{\sum_{i=1}^{t}a_i\|g_{i,\mathcal{G}_b}\|_2^2 + d_ba_1\epsilon^2}.
 \end{eqnarray*}
Hence,
\begin{eqnarray*}
  \sum_{t=1}^T\frac{a_t}{A_t}\|H_{t}^{-1}g_t\|_{C}^2
   & \leq &  \sum_{b=1}^Bc_{b}d_b\left[\log\left(\sum_{i=1}^{T}a_i\|g_{i,\mathcal{G}_b}\|_2^2 + d_ba_1\epsilon^2\right) - \log(d_ba_1\epsilon^2)\right],
 \end{eqnarray*}
 where the inequality follows from Lemma~\ref{lemma:log-seq}. Using Jensen's inequality, we get
   \begin{eqnarray*}
 \sum_{t=1}^T\CE\left[\frac{a_t}{A_t}\|H_{t}^{-1}g_t\|_{C}^2\right] 
 & \leq & \sum_{b=1}^Bc_{b}d_b\CE\left[\log\left(\sum_{i=1}^{T}a_i\|g_{i,\mathcal{G}_b}\|_2^2 + d_ba_1\epsilon^2\right) - \log(d_ba_1\epsilon^2)\right] \\
  & \leq & \sum_{b=1}^Bc_{b}d_b\left[\log\left(\sum_{i=1}^{T}a_i\CE[\|g_{i,\mathcal{G}_b}\|_2^2] + d_ba_1\epsilon^2\right) - \log(d_ba_1\epsilon^2)\right] \\
  & \leq & \sum_{b=1}^Bc_{b}d_b\left[\log\left(d_b\sigma_b^2\sum_{i=1}^{T}a_i + d_ba_1\epsilon^2\right) - \log(d_ba_1\epsilon^2)\right] \\
  & = & \sum_{b=1}^Bc_{b}d_b\log\left(\frac{\sigma_b^2}{a_1\epsilon^2}\sum_{i=1}^{T}a_i + 1\right).
 \end{eqnarray*}
 Using the inequality $\log(1 + ab) \leq \log(1 + a + b + ab) = \log(1 + a) + \log(1 + b)$ for $a, b \geq 0$, we have
    \begin{eqnarray*}
 \sum_{t=1}^T\CE\left[\frac{a_t}{A_t}\|H_{t}^{-1}g_t\|_{C}^2\right] 
  & \leq & \sum_{b=1}^Bc_{b}d_b\left[\log\left(\frac{\sigma_b^2}{\epsilon^2} + 1\right) 
  + \log\left(\frac{1}{a_1}\sum_{i=1}^{T}a_i + 1\right)\right].
 \end{eqnarray*}
\end{proof}

\begin{lemma}\label{lemma:nonconvex-momentum-grad-weighted-bound}
Assume that $\{a_t\}$ is non-decreasing. Define $w_t = \eta_t/\sqrt{\frac{a_t}{A_t}}$. 
Assume $w_t$ is "almost" non-increasing. This means there exists another non-increasing sequence $\{z_t\}$ and positive constants $C_1$ and $C_2$ such that $C_1z_t \leq w_t \leq C_2z_t$ for all $t$.  
For any block diagonal matrix $C = \text{Diag}([c_11_{d_1}^T, \dots, c_B1_{d_B}^T]^T)$ with $c_b \geq 0$ for all $b$, we have 
  \begin{eqnarray*}
 \sum_{t=1}^T\eta_t\CE\left[\sqrt{\frac{a_t}{A_t}}\|H_{t}^{-1}g_t\|_{C}^2\right]
    & \leq & \frac{C_2}{C_1}\left[w_1 \sum_{b=1}^Bc_{b}d_b\log\left(\frac{\sigma_b^2}{\epsilon^2} + 1\right) 
  + \sum_{b=1}^Bc_{b}d_b\sum_{t=1}^{T}\eta_t\sqrt{\frac{a_t}{A_t}}\frac{A_t}{A_{t-1} + a_1}\right].
 \end{eqnarray*}
 \end{lemma}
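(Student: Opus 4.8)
The plan is to reduce the weighted sum back to the unweighted bound of Lemma~\ref{lemma:nonconvex-momentum-grad-weighted-bound-no-eta} by a summation-by-parts argument, absorbing the stepsize weights $w_t$ through the ``almost'' monotonicity property $C_1z_t \le w_t \le C_2z_t$.

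First I would rewrite the summand. Since $w_t = \eta_t/\sqrt{a_t/A_t}$, we have $\eta_t\sqrt{a_t/A_t} = w_t(a_t/A_t)$, so the left-hand side equals $\sum_{t=1}^T w_t\,\CE[\frac{a_t}{A_t}\|H_t^{-1}g_t\|_C^2]$. I would then define the partial sums $P_t = \sum_{i=1}^t \CE[\frac{a_i}{A_i}\|H_i^{-1}g_i\|_C^2]$ with $P_0 = 0$; each increment $P_t - P_{t-1}$ is non-negative because $a_t \ge 0$ and $C \succeq 0$. Applying Lemma~\ref{lemma:nonconvex-momentum-grad-weighted-bound-no-eta} at horizon $t$ (its proof does not use $T$ specially) gives $P_t \le Q_\epsilon + Q_t$, where $Q_\epsilon = \sum_b c_b d_b\log(\sigma_b^2/\epsilon^2 + 1)$ is constant in $t$, and $Q_t = \sum_b c_b d_b\log\frac{A_t + a_1}{a_1}$ satisfies $Q_0 = 0$ since $A_0 = 0$.

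Next I would dispose of the weights. Using $w_t \le C_2 z_t$ with $\{z_t\}$ non-increasing and $P_t - P_{t-1} \ge 0$, I bound $\sum_t w_t(P_t - P_{t-1}) \le C_2\sum_t z_t(P_t - P_{t-1})$. Summation by parts (Lemma~\ref{lemma:sum-seq-part}) turns this into $C_2[\,z_T P_T + \sum_{t=1}^{T-1}(z_t - z_{t+1})P_t\,]$, where now every coefficient $z_T$ and $z_t - z_{t+1}$ is non-negative, so the bound $P_t \le Q_\epsilon + Q_t$ can be substituted termwise. The constant part collapses by telescoping: $z_T Q_\epsilon + \sum_{t=1}^{T-1}(z_t - z_{t+1})Q_\epsilon = Q_\epsilon z_1$, and $z_1 \le w_1/C_1$ produces the first target term $\frac{C_2}{C_1}w_1\sum_b c_b d_b\log(\sigma_b^2/\epsilon^2 + 1)$. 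For the growing part I would reverse the same Abel identity to write $z_T Q_T + \sum_{t=1}^{T-1}(z_t - z_{t+1})Q_t = \sum_{t=1}^T z_t(Q_t - Q_{t-1})$ (valid because $Q_0 = 0$), bound each increment by $Q_t - Q_{t-1} = \sum_b c_b d_b\log\frac{A_t + a_1}{A_{t-1} + a_1} \le \sum_b c_b d_b\frac{a_t}{A_{t-1} + a_1}$ using $\log(x/y) \le (x-y)/y$, and finally use $z_t \le w_t/C_1$ together with the identity $w_t\frac{a_t}{A_{t-1}+a_1} = \eta_t\sqrt{a_t/A_t}\frac{A_t}{A_{t-1}+a_1}$ to recover the second target term.

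The main obstacle is exactly the ``almost'' non-increasing nature of $\{w_t\}$: a direct summation by parts on $\sum_t w_t(P_t - P_{t-1})$ yields coefficients $w_t - w_{t+1}$ of indefinite sign, which blocks the termwise substitution of the bound on $P_t$. The key maneuver is to first majorize $w_t$ by the genuinely monotone surrogate $C_2 z_t$ (paying a factor $C_2$), perform Abel summation where all coefficients are non-negative, and only at the end reinstate the true weights through $z_t \le w_t/C_1$ (paying a factor $1/C_1$); the two payments combine into the stated $C_2/C_1$. A secondary point worth verifying carefully is the clean split of the unweighted bound into its constant piece $Q_\epsilon$, which telescopes to $z_1$, and its $t$-dependent piece $Q_t$, which after the reverse Abel step reproduces precisely the deterministic increment sum $\sum_t \eta_t\sqrt{a_t/A_t}\,\frac{A_t}{A_{t-1}+a_1}$.
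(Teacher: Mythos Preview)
Your proposal is correct and follows essentially the same route as the paper's own proof: rewrite the sum as $\sum_t w_t\xi_t$, pass to the monotone surrogate $C_2z_t$, apply Abel summation (Lemma~\ref{lemma:sum-seq-part}), substitute the unweighted bound from Lemma~\ref{lemma:nonconvex-momentum-grad-weighted-bound-no-eta} termwise, reverse the Abel identity, and finish with $\log(x/y)\le (x-y)/y$ and $z_t\le w_t/C_1$. The only cosmetic differences are that you take expectations up front (so your $P_t$ is deterministic, whereas the paper's $\zeta_t$ is random with expectation taken later) and that you split the bound into its constant piece $Q_\epsilon$ and growing piece $Q_t$ explicitly, which the paper does implicitly by isolating the $z_1M_0$ term after the reverse Abel step.
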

 \begin{proof}
 Let $\xi_t = \frac{a_t}{A_t}\|H_{t}^{-1}g_t\|_{C}^2$, then $\zeta_t = \sum_{i=1}^t\xi_i$. Lemma~\ref{lemma:sum-seq-part} indicates that we have
  \begin{eqnarray*}
 \sum_{t=1}^T\eta_t\sqrt{\frac{a_t}{A_t}}\|H_{t}^{-1}g_t\|_{C}^2 & = & 
  \sum_{t=1}^Tw_t\xi_t \\
  & \leq & C_2\sum_{t=1}^Tz_t\xi_t \\
  & = & \sum_{t=1}^{T-1}(z_t - z_{t+1})\zeta_t + z_T\zeta_T.
 \end{eqnarray*}
 Define $M_t = \sum_{b=1}^Bc_{b}d_b\left[\log\left(\frac{\sigma_b^2}{\epsilon^2} + 1\right) 
  + \log\left(\frac{1}{a_1}\sum_{i=1}^{t}a_i + 1\right)\right]$. 
 By Lemma~\ref{lemma:nonconvex-momentum-grad-weighted-bound-no-eta}, we have $\CE[\zeta_t] \leq M_t$.  
 Then, 
   \begin{eqnarray*}
 \sum_{t=1}^T\eta_t\CE\left[\sqrt{\frac{a_t}{A_t}}\|H_{t}^{-1}g_t\|_{C}^2\right]
  & \leq & C_2\left[\sum_{t=1}^{T-1}(z_t - z_{t+1})\CE[\zeta_t] + z_T\CE[\zeta_T]\right] \\
  & \leq & C_2\left[\sum_{t=1}^{T-1}(z_t - z_{t+1})M_t + z_TM_T\right],
 \end{eqnarray*}
 where the last inequality follows from the assumption that $z_t \geq z_{t+1}$. Then,
 \begin{eqnarray*}
 \sum_{t=1}^T\eta_t\CE\left[\sqrt{\frac{a_t}{A_t}}\|H_{t}^{-1}g_t\|_{C}^2\right]
  & \leq & C_2\left[\sum_{t=1}^{T-1}(z_t - z_{t+1})M_t + z_TM_T\right] \\
  & = & C_2\left[\sum_{t=1}^{T}z_t(M_t - M_{t-1}) + z_1M_0\right] \\
  & = & C_2\left[z_1\sum_{b=1}^Bc_{b}d_b\log\left(\frac{\sigma_b^2}{\epsilon^2} + 1\right) 
  + \sum_{t=1}^{T}z_t\sum_{b=1}^Bc_{b}d_b\log\left(\frac{A_t + a_1}{A_{t-1} + a_1}\right)\right] \\
  & \leq & \frac{C_2}{C_1}\left[w_1\sum_{b=1}^Bc_{b}d_b\log\left(\frac{\sigma_b^2}{\epsilon^2} + 1\right) 
  + \sum_{t=1}^{T}w_t\sum_{b=1}^Bc_{b}d_b\log\left(\frac{A_t + a_1}{A_{t-1} + a_1}\right)\right].
 \end{eqnarray*}
 As $\log(1 + x) \leq x$ for $x > -1$ and the fact that $A_t \geq A_{t-1}$, we get
  \begin{eqnarray*}
 \log\left(\frac{A_t + a_1}{A_{t-1} + a_1}\right) = \log\left(1 + \frac{A_t + a_1}{A_{t-1} + a_1} - 1\right) \leq \frac{A_t + a_1}{A_{t-1} + a_1} - 1 = \frac{a_t}{A_{t-1} + a_1}.  
 \end{eqnarray*}
 Hence, 
  \begin{eqnarray*}
 \sum_{t=1}^T\eta_t\CE\left[\sqrt{\frac{a_t}{A_t}}\|H_{t}^{-1}g_t\|_{C}^2\right]
  & \leq &  \frac{C_2}{C_1}\left[w_1\sum_{b=1}^Bc_{b}d_b\log\left(\frac{\sigma_b^2}{\epsilon^2} + 1\right) 
  + \sum_{t=1}^{T}w_t\sum_{b=1}^Bc_{b}d_b\log\left(\frac{A_t + a_1}{A_{t-1} + a_1}\right)\right] \\
    & \leq & \frac{C_2}{C_1}\left[w_1 \sum_{b=1}^Bc_{b}d_b\log\left(\frac{\sigma_b^2}{\epsilon^2} + 1\right) 
  + \sum_{b=1}^Bc_{b}d_b\sum_{t=1}^{T}\eta_t\sqrt{\frac{a_t}{A_t}}\frac{A_t}{A_{t-1} + a_1}\right].
 \end{eqnarray*}
 \end{proof}

\begin{lemma}\label{lemma:nonconvex-block-stepsize-ema-iterate-diff}
Let $\tilde{\eta}_{t, b} = \frac{\eta_t}{\sqrt{\tilde{v}_{t,b}} + \epsilon}$. For each block $b$ and $t \geq 2$, we have
\begin{eqnarray*}
\left(\delta_t - \frac{\beta_t\eta_t}{\sqrt{1 - a_t/A_t}\eta_{t-1}}\delta_{t-1}\right)_{\mathcal{G}_b} = 
- (1 - \beta_t)\tilde{\eta}_{t, b}g_{t,\mathcal{G}_b} + \tilde{\eta}_{t, b}\frac{\frac{a_t}{A_td_b}\|g_{t,\mathcal{G}_b}\|_2^2}{\sqrt{\hat{v}_{t,b}} + \epsilon}X_{t,b}
+\tilde{\eta}_{t, b}\frac{\sigma_{t,b}}{\sqrt{d_b}}Y_{t,b} + Z_{t,b},
\end{eqnarray*}
where
\begin{eqnarray*}
X_{t,b} &=& \frac{\beta_tm_{t-1, \mathcal{G}_b}}{\sqrt{\hat{v}_{t, b}} + \sqrt{A_{t-1}\hat{v}_{t-1, b}/A_t}} + \frac{(1 - \beta_t)g_{t,\mathcal{G}_b}}{\sqrt{\tilde{v}_{t,b}} + \sqrt{\hat{v}_{t,b}}}, \\
Y_{t,b} &=& 
\frac{\frac{a_t}{A_t\sqrt{d_b}}\|g_{t,\mathcal{G}_b}\|_2}{\sqrt{\hat{v}_{t,b}} + \epsilon}\frac{\beta_tm_{t-1, \mathcal{G}_b}}{\sqrt{A_{t-1}\hat{v}_{t-1,b}/A_t} + \epsilon}\frac{\sqrt{\frac{a_t}{A_td_b}}\|g_{t,\mathcal{G}_b}\|_2}{\sqrt{\hat{v}_{t, b}} + \sqrt{A_{t-1}\hat{v}_{t-1, b}/A_t}}\frac{\sqrt{\frac{a_t}{A_td_b}}\sigma_{t,b}}{\sqrt{\tilde{v}_{t, b}} + \sqrt{A_{t-1}\hat{v}_{t-1, b}/A_t}} \\
&& - \frac{\frac{a_t}{A_t}g_{t,\mathcal{G}_b}}{\sqrt{\hat{v}_{t,b}} + \epsilon}\frac{(1 - \beta_t)\frac{\sigma_{t,b}}{\sqrt{d_b}}}{\sqrt{\tilde{v}_{t,b}} + \sqrt{\hat{v}_{t,b}}}, \\
Z_{t,b} & = & \beta_t\eta_tm_{t-1, \mathcal{G}_b}\frac{\left(1 - \sqrt{A_{t-1}/A_t}\right)\epsilon}{(\sqrt{A_{t-1}\hat{v}_{t-1, b}/A_t} + \epsilon)(\sqrt{A_{t-1}\hat{v}_{t-1, b}/A_t} + \sqrt{A_{t-1}/A_t}\epsilon)} .
\end{eqnarray*}
\end{lemma}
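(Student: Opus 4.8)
The claim is a deterministic algebraic identity, so the plan is direct verification: expand the left-hand side using the update rules and reassemble it into the four stated pieces. The two facts that drive everything are the one-step recursions $\hat{v}_{t,b} = \frac{A_{t-1}}{A_t}\hat{v}_{t-1,b} + \frac{a_t}{A_t}\frac{\|g_{t,\mG_b}\|_2^2}{d_b}$ and $\tilde{v}_{t,b} = \frac{A_{t-1}}{A_t}\hat{v}_{t-1,b} + \frac{a_t}{A_t}\frac{\sigma_{t,b}^2}{d_b}$, which follow from $\hat{v}_{t,b}=v_{t,b}/A_t$, $v_{t,b}=v_{t-1,b}+a_t\|g_{t,\mG_b}\|_2^2/d_b$, and the definition of $\tilde{v}_{t,b}$. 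In particular they give $\hat{v}_{t,b}-\frac{A_{t-1}}{A_t}\hat{v}_{t-1,b}=\frac{a_t}{A_td_b}\|g_{t,\mG_b}\|_2^2$, $\tilde{v}_{t,b}-\frac{A_{t-1}}{A_t}\hat{v}_{t-1,b}=\frac{a_t}{A_td_b}\sigma_{t,b}^2$, and $\hat{v}_{t,b}-\tilde{v}_{t,b}=\frac{a_t}{A_td_b}(\|g_{t,\mG_b}\|_2^2-\sigma_{t,b}^2)$, which are exactly the numerators that surface after rationalizing square-root differences.

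First I would substitute $\delta_{t,\mG_b}=-\eta_tm_{t,\mG_b}/(\sqrt{\hat{v}_{t,b}}+\epsilon)$ and $\delta_{t-1,\mG_b}=-\eta_{t-1}m_{t-1,\mG_b}/(\sqrt{\hat{v}_{t-1,b}}+\epsilon)$ into the left-hand side. Using $1-a_t/A_t=A_{t-1}/A_t$, the prefactor $\beta_t\eta_t/(\sqrt{1-a_t/A_t}\,\eta_{t-1})$ cancels $\eta_{t-1}$ and turns the second denominator into $\sqrt{A_{t-1}\hat{v}_{t-1,b}/A_t}+\sqrt{A_{t-1}/A_t}\,\epsilon$. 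Then I apply the momentum recursion $m_{t,\mG_b}=\beta_tm_{t-1,\mG_b}+(1-\beta_t)g_{t,\mG_b}$ and regroup by the vector each coefficient multiplies, leaving an $m_{t-1,\mG_b}$-part equal to $\beta_t\eta_tm_{t-1,\mG_b}$ times a difference of reciprocals, and a $g_{t,\mG_b}$-part equal to $-(1-\beta_t)\eta_tg_{t,\mG_b}/(\sqrt{\hat{v}_{t,b}}+\epsilon)$.

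The heart of the proof is telescoping each reciprocal through intermediate denominators built from $\sqrt{\tilde{v}_{t,b}}+\epsilon$, and rationalizing via $\sqrt{a}-\sqrt{b}=(a-b)/(\sqrt{a}+\sqrt{b})$. For the $g_{t,\mG_b}$-part I write $\frac{1}{\sqrt{\hat{v}_{t,b}}+\epsilon}=\frac{1}{\sqrt{\tilde{v}_{t,b}}+\epsilon}+\big(\frac{1}{\sqrt{\hat{v}_{t,b}}+\epsilon}-\frac{1}{\sqrt{\tilde{v}_{t,b}}+\epsilon}\big)$: the first summand is exactly the descent term $-(1-\beta_t)\tilde{\eta}_{t,b}g_{t,\mG_b}$, and the residual, after using $\hat{v}_{t,b}-\tilde{v}_{t,b}$, splits into its $\|g_{t,\mG_b}\|_2^2$ part (the second term of $X_{t,b}$) and its $\sigma_{t,b}^2$ part (the second term of $Y_{t,b}$). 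For the $m_{t-1,\mG_b}$-part I telescope through $\sqrt{A_{t-1}\hat{v}_{t-1,b}/A_t}+\epsilon$ and then $\sqrt{\tilde{v}_{t,b}}+\epsilon$: the first step isolates the pure $\epsilon$-discrepancy $(1-\sqrt{A_{t-1}/A_t})\epsilon$ as $Z_{t,b}$; rationalizing $\sqrt{\hat{v}_{t,b}}-\sqrt{A_{t-1}\hat{v}_{t-1,b}/A_t}$ produces the $\frac{a_t}{A_td_b}\|g_{t,\mG_b}\|_2^2$ factor together with the first term of $X_{t,b}$; and the final step through $\sqrt{\tilde{v}_{t,b}}+\epsilon$, via $\tilde{v}_{t,b}-\frac{A_{t-1}}{A_t}\hat{v}_{t-1,b}$, yields the first (product) term of $Y_{t,b}$. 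Grouping the two $\|g_{t,\mG_b}\|_2^2$-coefficient contributions under the common factor $\tilde{\eta}_{t,b}\frac{a_t\|g_{t,\mG_b}\|_2^2/(A_td_b)}{\sqrt{\hat{v}_{t,b}}+\epsilon}$ gives $X_{t,b}$, and grouping the two $\sigma_{t,b}$-contributions under $\tilde{\eta}_{t,b}\sigma_{t,b}/\sqrt{d_b}$ gives $Y_{t,b}$.

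The main obstacle is the bookkeeping in the $m_{t-1,\mG_b}$-part: it requires a three-way telescoping and careful tracking of which $\epsilon$-variant ($\epsilon$ versus $\sqrt{A_{t-1}/A_t}\,\epsilon$) sits in each denominator, since $Z_{t,b}$ carries the mixed product $(\sqrt{A_{t-1}\hat{v}_{t-1,b}/A_t}+\epsilon)(\sqrt{A_{t-1}\hat{v}_{t-1,b}/A_t}+\sqrt{A_{t-1}/A_t}\epsilon)$ while the first term of $Y_{t,b}$ is a product of four fractions whose denominators must be matched exactly. Once the intermediate denominators are inserted consistently and each square-root difference is rationalized with the correct recursion, the numerators combine into the stated $\|g_{t,\mG_b}\|_2^2$- and $\sigma_{t,b}$-weighted factors, and the identity follows.
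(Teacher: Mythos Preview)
Your proposal is correct and follows essentially the same route as the paper's proof: expand $\delta_t$ and $\delta_{t-1}$ block-wise, apply the momentum recursion to split into a $g_{t,\mG_b}$-part and an $m_{t-1,\mG_b}$-part, isolate $Z_{t,b}$ from the $\epsilon$-mismatch in the $m_{t-1}$-denominator, and then telescope each remaining reciprocal through $\sqrt{\tilde{v}_{t,b}}+\epsilon$ (and, for the $m_{t-1}$-part, also through $\sqrt{A_{t-1}\hat{v}_{t-1,b}/A_t}+\epsilon$), rationalizing each square-root difference via the recursions for $\hat{v}_{t,b}$ and $\tilde{v}_{t,b}$. The paper carries out exactly these steps in the same order, so there is no substantive difference.
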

\begin{proof}
Let $\tilde{s}_t = [(\sqrt{\tilde{v}_{t,1}} + \epsilon)1_{d_1}^T, \dots, (\sqrt{\tilde{v}_{t,B}} + \epsilon)1_{d_B}^T]^T$. For any $t \geq 2$, 
\begin{eqnarray}
\lefteqn{\delta_t - \frac{\beta_t\eta_t}{\sqrt{A_{t-1}/A_t}\eta_{t-1}}\delta_{t-1}} \nonumber\\
 & = & -\frac{\eta_tm_t}{\sqrt{s_t} + \epsilon} + \frac{\beta_t\eta_tm_{t-1}}{\sqrt{A_{t-1}s_{t-1}/A_t} + \sqrt{A_{t-1}/A_t}\epsilon} \nonumber\\
& =& -\eta_t\left[\frac{m_t}{\sqrt{s_t} + \epsilon} - \frac{\beta_tm_{t-1}}{\sqrt{A_{t-1}s_{t-1}/A_t} + \sqrt{A_{t-1}/A_t}\epsilon}\right] \nonumber\\
& = & -\frac{(1 - \beta_t)\eta_tg_t}{\sqrt{s_t} + \epsilon} - \beta_t\eta_tm_{t-1}\left[\frac{1}{\sqrt{s_t} + \epsilon} - \frac{1}{\sqrt{A_{t-1}s_{t-1}/A_t} + \epsilon}\right] \nonumber \\
&& - \beta_t\eta_tm_{t-1}\left[\frac{1}{\sqrt{A_{t-1}s_{t-1}/A_t} + \epsilon} - \frac{1}{\sqrt{A_{t-1}s_{t-1}/A_t} + \sqrt{A_{t-1}/A_t}\epsilon}\right] \nonumber\\
& = & -\frac{(1 - \beta_t)\eta_tg_t}{\sqrt{s_t} + \epsilon} - \beta_t\eta_tm_{t-1}\left[\frac{1}{\sqrt{s_t} + \epsilon} - \frac{1}{\sqrt{A_{t-1}s_{t-1}/A_t} + \epsilon}\right]   \label{eq:nonconvex-bema-1} \\
&& + \beta_t\eta_tm_{t-1}\frac{\left(1 - \sqrt{A_{t-1}/A_t}\right)\epsilon}{(\sqrt{A_{t-1}s_{t-1}/A_t} + \epsilon)(\sqrt{A_{t-1}s_{t-1}/A_t} + \sqrt{A_{t-1}/A_t}\epsilon)} \nonumber
\end{eqnarray}
Let expand the first term of (\ref{eq:nonconvex-bema-1}) as
\begin{eqnarray*}
\frac{(1 - \beta_t)\eta_tg_t}{\sqrt{s_t} + \epsilon} & =& \frac{(1 - \beta_t)\eta_tg_t}{\sqrt{\tilde{s}_t} + \epsilon} 
+ (1 - \beta_t)\eta_tg_t\left[\frac{1}{\sqrt{s_t} + \epsilon} - \frac{1}{\sqrt{\tilde{s}_t} + \epsilon}\right] \\
& =& \frac{(1 - \beta_t)\eta_tg_t}{\sqrt{\tilde{s}_t} + \epsilon} 
+ (1 - \beta_t)\eta_tg_t\frac{\tilde{s}_t - s_t}{(\sqrt{s_t} + \epsilon)(\sqrt{\tilde{s}_t} + \epsilon)(\sqrt{\tilde{s}_t} + \sqrt{s_t} )}.
\end{eqnarray*}
For each block $b$, we have
\begin{eqnarray}
\lefteqn{\frac{(1 - \beta_t)\eta_tg_{t, \mathcal{G}_b}}{\sqrt{\hat{v}_{t,b}} + \epsilon}} \nonumber\\
& =& \frac{(1 - \beta_t)\eta_tg_{t, \mathcal{G}_b}}{\sqrt{\tilde{v}_{t,b}} + \epsilon} 
+ (1 - \beta_t)\eta_tg_{t,\mathcal{G}_b}\frac{\frac{a_t}{A_td_b}(\sigma_{t,b}^2 - \|g_{t,\mathcal{G}_b}\|^2_2)}{(\sqrt{\hat{v}_{t,b}} + \epsilon)(\sqrt{\tilde{v}_{t,b}} + \epsilon)(\sqrt{\tilde{v}_{t,b}} + \sqrt{\hat{v}_{t,b}} )} \nonumber\\
& =& (1 - \beta_t)\tilde{\eta}_{t, b}g_{t,\mathcal{G}_b}
+ \tilde{\eta}_{t, b}\frac{\sigma_{t,b}}{\sqrt{d_b}}\frac{\frac{a_t}{A_t}g_{t,\mathcal{G}_b}}{\sqrt{\hat{v}_{t,b}} + \epsilon}\frac{(1 - \beta_t)\frac{\sigma_{t,b}}{\sqrt{d_b}}}{\sqrt{\tilde{v}_{t,b}} + \sqrt{\hat{v}_{t,b}}} - \tilde{\eta}_{t, b}\frac{\frac{a_t}{A_td_b}\|g_{t,\mathcal{G}_b}\|_2^2}{\sqrt{\hat{v}_{t,b}} + \epsilon}\frac{(1 - \beta_t)g_{t,\mathcal{G}_b}}{\sqrt{\tilde{v}_{t,b}} + \sqrt{\hat{v}_{t,b}}}. \label{eq:nonconvex-bema-2}
\end{eqnarray}
Then, we expand the second term of (\ref{eq:nonconvex-bema-1}):
\begin{eqnarray*}
 \lefteqn{\beta_t\eta_tm_{t-1}\left[\frac{1}{\sqrt{s_t} + \epsilon} - \frac{1}{\sqrt{A_{t-1}s_{t-1}/A_t} + \epsilon}\right]}\\
 & = & \beta_t\eta_tm_{t-1}\frac{\sqrt{A_{t-1}s_{t-1}/A_t} - \sqrt{s_t}}{(\sqrt{s_t} + \epsilon)(\sqrt{A_{t-1}s_{t-1}/A_t} + \epsilon)} \\
  & = & \beta_t\eta_tm_{t-1}\frac{A_{t-1}s_{t-1}/A_t - s_t}{(\sqrt{s_t} + \epsilon)(\sqrt{A_{t-1}s_t/A_t} + \epsilon)(\sqrt{s_{t}} + \sqrt{A_{t-1}s_{t-1}/A_t})}.
\end{eqnarray*}
Similarly, for each block $b$, we have
\begin{eqnarray}
 \lefteqn{\beta_t\eta_tm_{t-1, \mathcal{G}_b}\left[\frac{1}{\sqrt{\hat{v}_{t,b}} + \epsilon} - \frac{1}{\sqrt{A_{t-1}\hat{v}_{t-1, b}/A_t} + \epsilon}\right]} \nonumber\\
  & = & -\beta_t\eta_tm_{t-1, \mathcal{G}_b}\frac{a_t\|g_{t,\mathcal{G}_b}\|_2^2/(A_td_b)}{(\sqrt{\hat{v}_{t,b}} + \epsilon)(\sqrt{A_{t-1}\hat{v}_{t-1,b}/A_t} + \epsilon)(\sqrt{\hat{v}_{t, b}} + \sqrt{A_{t-1}\hat{v}_{t-1, b}/A_t})}  \nonumber\\
    & = & -\beta_t\eta_tm_{t-1, \mathcal{G}_b}\left[\frac{a_t\|g_{t,\mathcal{G}_b}\|_2^2/(A_td_b)}{(\sqrt{\hat{v}_{t,b}} + \epsilon)(\sqrt{\tilde{v}_{t,b}} + \epsilon)(\sqrt{\hat{v}_{t, b}} + \sqrt{A_{t-1}\hat{v}_{t-1, b}/A_t})} \right. \nonumber\\
  && \left. + \frac{a_t\|g_{t,\mathcal{G}_b}\|_2^2/(A_td_b)}{(\sqrt{\hat{v}_{t,b}} + \epsilon)(\sqrt{\hat{v}_{t, b}} + \sqrt{A_{t-1}\hat{v}_{t-1, b}/A_t})}\left[\frac{1}{\sqrt{A_{t-1}\hat{v}_{t-1,b}/A_t} + \epsilon} - \frac{1}{\sqrt{\tilde{v}_{t,b}} + \epsilon}\right]\right] \nonumber\\
      & = & -\tilde{\eta}_{t,b}\frac{\frac{a_t}{A_td_b}\|g_{t,\mathcal{G}_b}\|_2^2}{\sqrt{\hat{v}_{t,b}} + \epsilon}\frac{\beta_tm_{t-1, \mathcal{G}_b}}{\sqrt{\hat{v}_{t, b}} + \sqrt{A_{t-1}\hat{v}_{t-1, b}/A_t}} \nonumber\\
       && -\tilde{\eta}_{t,b}\frac{\sigma_{t,b}}{\sqrt{d_b}}\left[\frac{\frac{a_t}{A_t\sqrt{d_b}}\|g_{t,\mathcal{G}_b}\|_2}{\sqrt{\hat{v}_{t,b}} + \epsilon}\frac{\beta_tm_{t-1, \mathcal{G}_b}}{\sqrt{A_{t-1}\hat{v}_{t-1,b}/A_t} + \epsilon}\frac{\sqrt{\frac{a_t}{A_td_b}}\|g_{t,\mathcal{G}_b}\|_2}{\sqrt{\hat{v}_{t, b}} + \sqrt{A_{t-1}\hat{v}_{t-1, b}/A_t}}\frac{\sqrt{\frac{a_t}{A_td_b}}\sigma_{t,b}}{\sqrt{\tilde{v}_{t, b}} + \sqrt{A_{t-1}\hat{v}_{t-1, b}/A_t}}\right]. \label{eq:nonconvex-bema-3}
\end{eqnarray}
Combining (\ref{eq:nonconvex-bema-2}) and (\ref{eq:nonconvex-bema-3}) into (\ref{eq:nonconvex-bema-1}), we obtain the result.
\end{proof}

\begin{lemma}\label{lemma:nonconvex-block-stepsize-ema-exp-bound}
Suppose that $\{a_t\}$ is a non-decreasing sequence and $A_t = \sum_{i=1}^ta_t$ such that $\{A_t/A_{t+1}\}$ is non-decreasing and $\lim_{t \rightarrow \infty}\frac{A_{t}}{A_{t+1}} = p > 0$. Let $\hat{A}_{t, i} = \prod_{j = i + 1}^t\frac{A_{j - 1}}{A_j}$ for $1 \leq i < t$ and $\hat{A}_{t, t} = 1$. For a fixed constant $\tilde{p}$ such that $\beta^2 <  \tilde{p} < p$, we have
\begin{eqnarray*}
\hat{A}_{t, i} \geq C_a\tilde{p}^{t - i} \hspace{.05in},
\end{eqnarray*}
where $C_a = \left(\prod_{j = 2}^N\frac{A_{j - 1}}{A_j\tilde{p}}\right)$ and 
N is the maximum of the indices for which $A_{j - 1}/A_j < \tilde{p}$. 
When there are no such indices, i.e., $A_1/A_2 \geq \tilde{p}$, we use $C_a = 1$ by convention. 
\end{lemma}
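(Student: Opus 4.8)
The plan is to work directly with the ratios $r_j := A_{j-1}/A_j$, so that $\hat{A}_{t,i} = \prod_{j=i+1}^t r_j$ and $C_a = \prod_{j=2}^N (r_j/\tilde{p})$. First I would record the two structural facts that drive everything. Since $A_t = \sum_{i\le t} a_i$ with $a_i > 0$, each factor $r_j$ lies in $(0,1)$; and the hypothesis that $\{A_t/A_{t+1}\}$ is non-decreasing says precisely that $\{r_j\}$ is non-decreasing, with $r_j \to p$. Combined with the definition of $N$ as the largest index for which $r_N < \tilde{p}$, monotonicity yields the clean dichotomy $r_j < \tilde{p}$ for every $j \le N$ (since $r_j \le r_N < \tilde{p}$) and $r_j \ge \tilde{p}$ for every $j > N$ (otherwise a larger index would violate the maximality of $N$).

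Then I would factor $\hat{A}_{t,i} = \tilde{p}^{\,t-i}\prod_{j=i+1}^t (r_j/\tilde{p})$, reducing the claim to showing $\prod_{j=i+1}^t (r_j/\tilde{p}) \ge C_a$. Splitting the index range $\{i+1,\dots,t\}$ at $N$, the factors with $j > N$ each satisfy $r_j/\tilde{p} \ge 1$, so their subproduct is $\ge 1$ and may be dropped. The remaining factors, those with $j \le N$, range over a subset $S \subseteq \{2,\dots,N\}$ (here $i \ge 1$ forces $i+1 \ge 2$), and each such factor lies in $(0,1)$. Since $C_a$ is the product over all of $\{2,\dots,N\}$ of exactly these same sub-unit factors, omitting any of them can only increase the product; hence $\prod_{j \in S}(r_j/\tilde{p}) \ge C_a$, and multiplying back by the discarded part (which is $\ge 1$) preserves the inequality. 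Multiplying through by $\tilde{p}^{\,t-i} > 0$ gives the result.

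Nothing here is computationally deep, so the work is entirely in the bookkeeping. The main thing to get right is the case analysis on the positions of $i$ and $t$ relative to $N$, together with the empty-product conventions: when $i \ge N$ the subset $S$ is empty, so $\prod_{j\in S}(r_j/\tilde{p}) = 1 \ge C_a$ using $C_a \le 1$ (a product of factors in $(0,1)$); when $t \le N$ there are no large-index factors; and the stated convention $C_a = 1$ for the case $A_1/A_2 \ge \tilde{p}$ is simply $N$ empty, which the same estimate absorbs. The one genuinely load-bearing step is the monotonicity-driven claim that $r_j < \tilde{p}$ for all $j \le N$; everything else reduces to the elementary observation that deleting factors lying in $(0,1)$ raises a positive product.
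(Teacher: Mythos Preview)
Your proof is correct and follows essentially the same approach as the paper: factor out $\tilde{p}^{t-i}$, bound the factors with $j>N$ below by $1$, and then use that the remaining sub-unit factors range over a subset of $\{2,\dots,N\}$ so that extending to the full product can only decrease it to $C_a$. Your write-up is in fact more careful than the paper's, which presents a one-line chain of inequalities implicitly assuming the generic case $i<N<t$ without separately discussing the edge cases you spell out.
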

\begin{proof}
\begin{eqnarray*}
\hat{A}_{t, i} = \prod_{j = i + 1}^t\frac{A_{j - 1}}{A_j} \geq \left(\prod_{j = i + 1}^N\frac{A_{j - 1}}{A_j}\right)\tilde{p}^{t - N}
 = \left(\prod_{j = i + 1}^N\frac{A_{j - 1}}{A_j\tilde{p}}\right)\tilde{p}^{t - i} \geq \left(\prod_{j = 2}^N\frac{A_{j - 1}}{A_j\tilde{p}}\right)\tilde{p}^{t - i}.
\end{eqnarray*}
\end{proof}

\begin{lemma}\label{lemma:nonconvex-block-stepsize-ema-ratio-bound}
Suppose that $0 \leq \beta_t \leq \beta < 1$ for all $t$. Let $\rho := \frac{\beta^2}{\tilde{p}}$, where $\tilde{p}$ is defined in Lemma~\ref{lemma:nonconvex-block-stepsize-ema-exp-bound}. Then, for all $t$, we have
\begin{eqnarray*}
\|m_{t, \mathcal{G}_b}\|_2^2 \leq \frac{1}{C_aa_t/(A_td_b)(1 - \rho)}\hat{v}_{t, b},
\end{eqnarray*}
where $C_a$ is defined in Lemma~\ref{lemma:nonconvex-block-stepsize-ema-exp-bound}.
\end{lemma}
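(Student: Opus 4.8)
The plan is to unroll the momentum recursion into an explicit weighted sum of past gradients, apply a weighted Cauchy--Schwarz inequality, and then collapse the resulting coefficient sum into a geometric series whose ratio is exactly $\rho=\beta^2/\tilde{p}$ by invoking Lemma~\ref{lemma:nonconvex-block-stepsize-ema-exp-bound}. First I would expand $m_{t,\mG_b}=\beta_t m_{t-1,\mG_b}+(1-\beta_t)g_{t,\mG_b}$ with $m_{0,\mG_b}=0$ to obtain $m_{t,\mG_b}=\sum_{i=1}^t c_{t,i}\,g_{i,\mG_b}$, where $c_{t,i}=\left(\prod_{j=i+1}^t\beta_j\right)(1-\beta_i)\ge 0$. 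By Assumption~\ref{assumption:beta}, each factor satisfies $\beta_j\le\beta$ and $1-\beta_i\le 1$, so $c_{t,i}\le\beta^{t-i}$ and hence $c_{t,i}^2\le\beta^{2(t-i)}$.

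Next I would bound $\|m_{t,\mG_b}\|_2^2$ by the triangle inequality followed by Cauchy--Schwarz with the weights $a_i$, obtaining
\begin{eqnarray*}
\|m_{t,\mG_b}\|_2^2 \le \left(\sum_{i=1}^t c_{t,i}\|g_{i,\mG_b}\|_2\right)^2 \le \left(\sum_{i=1}^t \frac{c_{t,i}^2}{a_i}\right)\left(\sum_{i=1}^t a_i\|g_{i,\mG_b}\|_2^2\right).
\end{eqnarray*}
The second factor is precisely $A_t d_b\,\hat{v}_{t,b}$ by the definition of $\hat{v}_{t,b}$, so it remains to show $\sum_{i=1}^t c_{t,i}^2/a_i \le 1/\big(C_a a_t(1-\rho)\big)$, after which substitution yields the stated bound directly.

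For the coefficient sum I would first use Assumption~\ref{assumption:weight}(ii) (monotonicity of $\{A_{t-1}/A_t\}$), which gives $a_i/A_i\ge a_t/A_t$ and hence $1/a_i\le A_t/(a_t A_i)$, and then invoke Lemma~\ref{lemma:nonconvex-block-stepsize-ema-exp-bound} in its telescoped form $A_i/A_t=\hat{A}_{t,i}\ge C_a\tilde{p}^{t-i}$, i.e. $1/A_i\le 1/(C_a A_t\tilde{p}^{t-i})$. Combining these with $c_{t,i}^2\le\beta^{2(t-i)}$ would turn the sum into
\begin{eqnarray*}
\sum_{i=1}^t \frac{c_{t,i}^2}{a_i} \le \frac{1}{C_a a_t}\sum_{i=1}^t \left(\frac{\beta^2}{\tilde{p}}\right)^{t-i} = \frac{1}{C_a a_t}\sum_{k=0}^{t-1}\rho^k \le \frac{1}{C_a a_t(1-\rho)},
\end{eqnarray*}
where the last step uses $\rho<1$, which holds since $\beta^2<\tilde{p}$. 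Plugging this and $\sum_{i=1}^t a_i\|g_{i,\mG_b}\|_2^2=A_t d_b\hat{v}_{t,b}$ into the Cauchy--Schwarz estimate gives the claim.

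The step requiring the most care is the choice of Cauchy--Schwarz weights: using exactly $a_i$ is what makes the $\beta^{2(t-i)}$ from the momentum coefficients and the $\tilde{p}^{-(t-i)}$ from Lemma~\ref{lemma:nonconvex-block-stepsize-ema-exp-bound} align into a clean geometric series with ratio $\rho$. Any other weighting would leave a residual factor that fails to telescope, so identifying this alignment, together with the monotonicity step $a_i/A_i\ge a_t/A_t$ that converts $1/a_i$ into the $1/A_i$ needed to apply the previous lemma, is the crux; the remaining manipulations are routine.
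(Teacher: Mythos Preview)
Your proof is correct and follows essentially the same approach as the paper: unroll the momentum, apply Cauchy--Schwarz with weights $a_i$, identify the second factor as $A_t d_b\,\hat{v}_{t,b}$, and bound the coefficient sum via $a_i/A_i\ge a_t/A_t$ together with $\hat{A}_{t,i}\ge C_a\tilde{p}^{t-i}$ from Lemma~\ref{lemma:nonconvex-block-stepsize-ema-exp-bound} to obtain the geometric series in $\rho$. The only cosmetic difference is that the paper applies Cauchy--Schwarz directly to the vector sum, whereas you first pass through the triangle inequality and then apply scalar Cauchy--Schwarz; both routes land on the identical bound $\big(\sum_i c_{t,i}^2/a_i\big)\big(\sum_i a_i\|g_{i,\mG_b}\|_2^2\big)$.
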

\begin{proof}
Let $\hat{\beta}_{t, i} = \prod_{j=i + 1}^t\beta_j$ for $i < t$ and $\hat{\beta}_{t, t} = 1$
\begin{eqnarray}
\|m_{t, \mathcal{G}_b}\|_2^2 &=& \left\|\sum_{i=1}^t(1 - \beta_i)\hat{\beta}_{t, i}g_{i, \mathcal{G}_b}\right\|^2_2 \nonumber\\
& = & \left\|\sum_{i=1}^t\frac{(1 - \beta_i)\hat{\beta}_{t, i}}{\sqrt{\frac{a_i}{A_td_b}}}\sqrt{\frac{a_i}{A_td_b}}g_{i, \mathcal{G}_b}\right\|^2_2 \nonumber\\
& \leq & \left(\sum_{i=1}^t\frac{(1 - \beta_i)^2\hat{\beta}_{t, i}^2}{\frac{a_i}{A_td_b}}\right)\left(\sum_{i=1}^t\frac{a_i}{A_td_b}\left\|g_{i, \mathcal{G}_b}\right\|^2_2\right) \nonumber\\
& = & \left(\sum_{i=1}^t\frac{(1 - \beta_i)^2\hat{\beta}_{t, i}^2}{\frac{a_i}{A_td_b}}\right)\hat{v}_{t,b}. \label{eq:nonconvex-bema-exp-1}
\end{eqnarray}
Then, with Lemma~\ref{lemma:nonconvex-block-stepsize-ema-exp-bound},  we get
\begin{eqnarray}
\sum_{i=1}^t\frac{(1 - \beta_i)^2\hat{\beta}_{t, i}^2}{\frac{a_i}{A_td_b}} = \sum_{i=1}^t\frac{(1 - \beta_i)^2\hat{\beta}_{t, i}^2}{\frac{a_i}{A_id_b}\hat{A}_{t, i} }  \leq \frac{1}{C_aa_t/(A_td_b)}\sum_{i=1}^t\left(\frac{\beta^{2}}{\tilde{p}}\right)^{t-i} \leq \frac{1}{C_aa_t/(A_td_b)(1 - \rho)}. \label{eq:nonconvex-bema-exp-2}
\end{eqnarray}
Then, combining (\ref{eq:nonconvex-bema-exp-1}) and (\ref{eq:nonconvex-bema-exp-2}), we obtain the result.
\end{proof}

\begin{lemma}\label{lemma:nonconvex-block-stepsize-ema-iterate-recursive}
Assume $F$ is $L$-smooth, $\{a_t\}$ is non-decreasing such that $\{A_{t-1}/A_t\}$ is non-decreasing and $\lim_{t \rightarrow \infty}\frac{A_{t}}{A_{t+1}} = p > 0$. Let $\tilde{p}$ be a constant such that $\beta^2 <  \tilde{p} < p$.  Assume $\CE_t[\|g_{t, \mG_b}\|^2_2] = \sigma_{t,b}^2 \leq d_b\sigma_b^2$.  
Define $w_t = \eta_t/\sqrt{\frac{a_t}{A_t}}$. Assume $w_t$ is "almost" non-increasing. This means there exists another non-increasing sequence $\{z_t\}$ and positive constants $C_1$ and $C_2$ such that $C_1z_t \leq w_t \leq C_2z_t$ for all $t$. Assume $0 \leq \beta_t \leq \beta < 1$ for all $t$. Define following Lyapunov function:
\begin{eqnarray*}
M_t = \CE[\langle \nabla F(\theta_t), \delta_t\rangle + L\|\delta_t\|_2^2].
\end{eqnarray*}
Let $C_3 \equiv \left[\frac{\beta/(1 - \beta)}{\sqrt{C_aA_{1}/A_2(1 - \rho)}}  + 1\right]$, where $\rho := \frac{\beta^2}{\tilde{p}}$. Then, for any $t\geq 2$, we have
\begin{eqnarray}
M_t
& \leq & \frac{\beta_t\eta_t}{\sqrt{A_{t-1}/A_t}\eta_{t-1}}M_{t-1} -\frac{1 - \beta_t}{2}\eta_t\CE\left[\left\|\nabla F(\theta_t)\right\|_{\tilde{H}_t^{-1}}^2\right] + 2w_tC_3^2\CE\left[\frac{a_t}{A_t}\|H^{-1}_tg_{t}\|_{\Sigma^{1/2}}^2\right] \nonumber\\
&& + L\CE[\|\delta_t\|_2^2] + \frac{\beta w_t}{\sqrt{C_a(1 - \rho)}}\left(\sqrt{\frac{A_t}{A_{t-1}}} - 1\right)\sum_{b=1}^B\sigma_bd_b, \label{eq:nonconvex-block-stepsize-ema-iterate-recursive-1}
\end{eqnarray} 
and for $t = 1$, we have
\begin{eqnarray}
M_1
& \leq &  -\frac{1 - \beta_1}{2}\eta_1\CE\left[\left\|\nabla F(\theta_1)\right\|_{\tilde{H}_1^{-1}}^2\right] + 2w_1C_3^2\CE\left[\frac{a_1}{A_1}\|H_1^{-1}g_{1}\|_{\Sigma^{1/2}}^2\right] + L\CE[\|\delta_1\|_2^2]. \label{eq:nonconvex-block-stepsize-ema-iterate-recursive-2}
\end{eqnarray} 
\end{lemma}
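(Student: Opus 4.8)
The plan is to run a one-step descent argument on the Lyapunov function $M_t=\CE[\langle\nabla F(\theta_t),\delta_t\rangle+L\|\delta_t\|_2^2]$, using the ``lagged'' increment $\delta_t-\Lambda_t\delta_{t-1}$ with $\Lambda_t=\frac{\beta_t\eta_t}{\sqrt{A_{t-1}/A_t}\,\eta_{t-1}}$, for which Lemma~\ref{lemma:nonconvex-block-stepsize-ema-iterate-diff} already supplies an explicit blockwise decomposition. First I would split $\langle\nabla F(\theta_t),\delta_t\rangle=\Lambda_t\langle\nabla F(\theta_t),\delta_{t-1}\rangle+\langle\nabla F(\theta_t),\delta_t-\Lambda_t\delta_{t-1}\rangle$. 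For the lagged piece I invoke $L$-smoothness (Assumption~\ref{assumption:smooth}) in the form $\langle\nabla F(\theta_t)-\nabla F(\theta_{t-1}),\delta_{t-1}\rangle\le L\|\delta_{t-1}\|_2^2$, so that $\Lambda_t\CE[\langle\nabla F(\theta_t),\delta_{t-1}\rangle]\le\Lambda_t\,\CE[\langle\nabla F(\theta_{t-1}),\delta_{t-1}\rangle+L\|\delta_{t-1}\|_2^2]=\Lambda_t M_{t-1}$, which is exactly the leading term of the claimed recursion. The $L\CE[\|\delta_t\|_2^2]$ term is kept untouched, so the task reduces to bounding $\CE[\langle\nabla F(\theta_t),\delta_t-\Lambda_t\delta_{t-1}\rangle]$.

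Next I substitute the four blockwise terms of Lemma~\ref{lemma:nonconvex-block-stepsize-ema-iterate-diff} and take the conditional expectation $\CE_t[\cdot]$. The key point is that $\tilde v_{t,b}$ (hence $\tilde\eta_{t,b}$ and $\tilde H_t$) is measurable with respect to the past, since it replaces the random $\|g_{t,\mG_b}\|_2^2$ by its conditional mean $\sigma_{t,b}^2$; combined with unbiasedness $\CE_t[g_{t,\mG_b}]=\nabla_{\mG_b}F(\theta_t)$, the leading term $-(1-\beta_t)\tilde\eta_{t,b}g_{t,\mG_b}$ contributes, after dotting with $\nabla_{\mG_b}F(\theta_t)$ and summing over blocks, exactly $-(1-\beta_t)\eta_t\sum_b\frac{\|\nabla_{\mG_b}F(\theta_t)\|_2^2}{\sqrt{\tilde v_{t,b}}+\epsilon}=-(1-\beta_t)\eta_t\|\nabla F(\theta_t)\|_{\tilde H_t^{-1}}^2$.

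The remaining three terms (those carrying $X_{t,b}$, $Y_{t,b}$, and $Z_{t,b}$) are controlled with Cauchy--Schwarz and Young's inequality, where I would spend half of the descent term above against the cross terms --- this is precisely the origin of the factor $\tfrac12$ in $-\frac{1-\beta_t}{2}\eta_t\|\nabla F(\theta_t)\|_{\tilde H_t^{-1}}^2$. Three ingredients drive the estimates: (i) the elementary inequality $\sqrt{A_{t-1}\hat v_{t-1,b}/A_t}\le\sqrt{\hat v_{t,b}}$, immediate from the update $\hat v_{t,b}=\frac{A_{t-1}}{A_t}\hat v_{t-1,b}+\frac{a_t}{A_td_b}\|g_{t,\mG_b}\|_2^2$; (ii) the momentum norm bound $\|m_{t-1,\mG_b}\|_2^2\le\hat v_{t-1,b}/(C_a\,a_{t-1}/(A_{t-1}d_b)(1-\rho))$ of Lemma~\ref{lemma:nonconvex-block-stepsize-ema-ratio-bound}; and (iii) the variance bound $\sigma_{t,b}\le\sigma_b\sqrt{d_b}$. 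Bounding the $X_{t,b}$ factor through (i) and (ii) produces the constant $C_3$ --- the ``$+1$'' coming from the $g_{t,\mG_b}$ part of $X_{t,b}$ and the $\frac{\beta/(1-\beta)}{\sqrt{C_aA_1/A_2(1-\rho)}}$ from the momentum part, where the worst case over $t$ uses monotonicity of $\{A_{t-1}/A_t\}$ to replace $A_{t-1}/A_t$ by $A_1/A_2$ --- and, after converting $\eta_t\sqrt{a_t/A_t}=w_t a_t/A_t$, yields the weighted gradient term $2w_tC_3^2\CE[\frac{a_t}{A_t}\|H_t^{-1}g_t\|_{\Sigma^{1/2}}^2]$. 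The $\epsilon$-correction $Z_{t,b}$ is handled using the identity $1-\sqrt{A_{t-1}/A_t}=\sqrt{A_{t-1}/A_t}\,(\sqrt{A_t/A_{t-1}}-1)$, which cancels the $\epsilon$ against the denominators and, together with (ii)--(iii), produces the residual $\frac{\beta w_t}{\sqrt{C_a(1-\rho)}}(\sqrt{A_t/A_{t-1}}-1)\sum_b\sigma_b d_b$.

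For the base case $t=1$ the argument simplifies: since $m_0=0$ there is no lagged increment, $\delta_1=-(1-\beta_1)\eta_1 g_1/(\sqrt{s_1}+\epsilon)$, and only the leading term and the $Y$-type variance correction survive (there is no $X$- or $Z$-type term); the latter, bounded with the same constant $C_3\ge1$, yields the $2w_1C_3^2$ gradient term, so that (\ref{eq:nonconvex-block-stepsize-ema-iterate-recursive-2}) follows with no $M_0$ and no $(\sqrt{A_t/A_{t-1}}-1)$ residual. I expect the main obstacle to be exactly the constant bookkeeping of the previous paragraph: each of $X_{t,b},Y_{t,b},Z_{t,b}$ is a product of several ratios of the form (gradient or momentum)/(accumulated variance), and choosing the Young splitting coefficients so that precisely $C_3$, $\frac{\beta}{\sqrt{C_a(1-\rho)}}$, and the weight $2w_t$ emerge --- rather than a looser combination --- requires patient repeated use of (i)--(iii) rather than any new idea.
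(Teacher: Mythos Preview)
Your proposal is correct and follows essentially the same route as the paper's proof: the same lagged splitting via $\Lambda_t$, the same use of $L$-smoothness to recover $\Lambda_t M_{t-1}$, the same blockwise substitution from Lemma~\ref{lemma:nonconvex-block-stepsize-ema-iterate-diff}, and the same Young-inequality bookkeeping with ingredients (i)--(iii) to extract $C_3$ and the $(\sqrt{A_t/A_{t-1}}-1)$ residual. One small inaccuracy in your $t=1$ description: besides the $Y$-type correction, the $g$-component of $X_{t,b}$ (the third term of (\ref{eq:nonconvex-bema-2})) also survives, but since it is bounded exactly as for $t\ge2$ with constant $\le C_3$, this does not affect the argument or the final bound.
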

\begin{proof}
For any $t \geq 2$,
\begin{eqnarray}
\CE[\langle \nabla F(\theta_t), \delta_t\rangle] = \frac{\beta_t\eta_t}{\sqrt{A_{t-1}/A_t}\eta_{t-1}}\CE[\langle \nabla F(\theta_t), \delta_{t-1}\rangle] + 
\CE\left[\left\langle \nabla F(\theta_t), \delta_t - \frac{\beta_t\eta_t}{\sqrt{A_{t-1}/A_t}\eta_{t-1}}\delta_{t-1}\right\rangle\right]. \label{eq:nonconvex-bema-4}
\end{eqnarray}
Then, for the first term of (\ref{eq:nonconvex-bema-4}), we have
\begin{eqnarray*}
\langle \nabla F(\theta_t), \delta_{t-1}\rangle & = & \langle \nabla F(\theta_{t-1}), \delta_{t-1}\rangle + \langle \nabla F(\theta_t) - \nabla F(\theta_{t-1}), \delta_{t-1}\rangle \\
& \leq & \langle \nabla F(\theta_{t-1}), \delta_{t-1}\rangle + L\|\theta_t - \theta_{t-1}\|_2\|\delta_{t-1}\|_2 \\
& = &  \langle \nabla F(\theta_{t-1}), \delta_{t-1}\rangle + L\|\delta_{t-1}\|_2^2,
\end{eqnarray*}
where the first inequality follows from Schwartz inequality and the smoothness of the function $F$. Hence, we have
\begin{eqnarray*}
\frac{\beta_t\eta_t}{\sqrt{A_{t-1}/A_t}\eta_{t-1}}\CE[\langle \nabla F(\theta_t), \delta_{t-1}\rangle] & \leq & \frac{\beta_t\eta_t}{\sqrt{A_{t-1}/A_t}\eta_{t-1}}\CE\left[\langle \nabla F(\theta_{t-1}), \delta_{t-1}\rangle + L\|\delta_{t-1}\|_2^2\right] \\
& = & \frac{\beta_t\eta_t}{\sqrt{A_{t-1}/A_t}\eta_{t-1}}M_{t-1}.
\end{eqnarray*}
Now, we estimate the second term of (\ref{eq:nonconvex-bema-4}). By Lemma~\ref{lemma:nonconvex-block-stepsize-ema-iterate-diff}, for each block $b$, we get
\begin{eqnarray}
\lefteqn{\CE\left[\left\langle \nabla_{\mathcal{G}_b} F(\theta_t), \delta_{t, \mathcal{G}_b} - \frac{\beta_t\eta_t}{\sqrt{A_{t-1}/A_t}\eta_{t-1}}\delta_{t-1, \mathcal{G}_b}\right\rangle\right]} \nonumber\\
& =& - (1 - \beta_t)\CE[\langle \nabla_{\mathcal{G}_b} F(\theta_t), \tilde{\eta}_{t, b}g_{t,\mathcal{G}_b}\rangle] + \CE\left[\left\langle \nabla_{\mathcal{G}_b} F(\theta_t), \tilde{\eta}_{t, b}\frac{\frac{a_t}{A_td_b}\|g_{t,\mathcal{G}_b}\|_2^2}{\sqrt{\hat{v}_{t,b}} + \epsilon}X_{t,b}\right\rangle\right] \nonumber\\
&& +\CE\left[\left\langle \nabla_{\mathcal{G}_b} F(\theta_t), \tilde{\eta}_{t, b}\frac{\sigma_{t,b}}{\sqrt{d_b}}Y_{t,b}\right\rangle\right] + \CE[\langle \nabla_{\mathcal{G}_b} F(\theta_t), Z_{t,b} \rangle]. \label{eq:nonconvex-bema-5}
\end{eqnarray}
For the first term of (\ref{eq:nonconvex-bema-5}), we have
\begin{eqnarray}
- (1 - \beta_t)\CE[\langle \nabla_{\mathcal{G}_b} F(\theta_t), \tilde{\eta}_{t, b}g_{t,\mathcal{G}_b}\rangle]
& = & - (1 - \beta_t)\CE[\langle \nabla_{\mathcal{G}_b} F(\theta_t), \tilde{\eta}_{t, b}\nabla_{\mathcal{G}_b} F(\theta_t)\rangle] \nonumber\\
& = & - (1 - \beta_t)\tilde{\eta}_{t, b}\CE[\|\nabla_{\mathcal{G}_b} F(\theta_t)\|_2^2]. \label{eq:nonconvex-bema-6}
\end{eqnarray}
For the second term of (\ref{eq:nonconvex-bema-5}), we have
\begin{eqnarray}
\lefteqn{\CE\left[\left\langle \nabla_{\mathcal{G}_b} F(\theta_t), \tilde{\eta}_{t, b}\frac{\frac{a_t}{A_td_b}\|g_{t,\mathcal{G}_b}\|_2^2}{\sqrt{\hat{v}_{t,b}} + \epsilon}X_{t,b}\right\rangle\right]} \nonumber\\
& \leq &  \CE\left[\frac{\sqrt{\tilde{\eta}_{t, b}}\|\nabla_{\mathcal{G}_b} F(\theta_t)\|_2\|g_{t,\mathcal{G}_b}\|_2/\sqrt{d_b}}{\sigma_{t, b}/\sqrt{d_b}}\frac{\sqrt{\tilde{\eta}_{t, b}}\frac{a_t}{A_t}\|g_{t,\mathcal{G}_b}\|_2/\sqrt{d_b}\sigma_{t, b}/\sqrt{d_b}\|X_{t,b}\|_2}{\sqrt{\hat{v}_{t,b}} + \epsilon}\right]. \label{eq:nonconvex-bema-7}
\end{eqnarray}
Note that 
\begin{eqnarray}
\sqrt{\tilde{\eta}_{t, b}}\sigma_{t, b}/\sqrt{d_b} = \sqrt{\frac{\eta_t\sigma_{t,b}^2/d_b}{\sqrt{\tilde{v}_{t,b}} + \epsilon}} \leq \sqrt{\frac{\eta_t\sigma_{t,b}^2/d_b}{\sqrt{a_t/A_t\sigma_{t,b}^2/d_b}}} \leq \sqrt{\frac{\eta_t\sigma_{b}}{\sqrt{a_t/A_t}}} = \sqrt{w_t\sigma_b}. \label{eq:nonconvex-bema-8}
\end{eqnarray}
Besides, we have 
\begin{eqnarray*}
\|X_{t,b}\|_2 &=& \left\|\frac{\beta_tm_{t-1, \mathcal{G}_b}}{\sqrt{\hat{v}_{t, b}} + \sqrt{A_{t-1}\hat{v}_{t-1, b}/A_t}} + \frac{(1 - \beta_t)g_{t,\mathcal{G}_b}}{\sqrt{\tilde{v}_{t,b}} + \sqrt{\hat{v}_{t,b}}}\right\|_2 \\
& \leq & \left\|\frac{\beta_tm_{t-1, \mathcal{G}_b}}{\sqrt{\hat{v}_{t, b}} + \sqrt{A_{t-1}\hat{v}_{t-1, b}/A_t}}\right\|_2 + \left\|\frac{(1 - \beta_t)g_{t,\mathcal{G}_b}}{\sqrt{\tilde{v}_{t,b}} + \sqrt{\hat{v}_{t,b}}}\right\|_2.
\end{eqnarray*}
With Lemma~\ref{lemma:nonconvex-block-stepsize-ema-ratio-bound}, we have
\begin{eqnarray}
\left\|\frac{m_{t-1, \mathcal{G}_b}}{\sqrt{\hat{v}_{t, b}} + \sqrt{A_{t-1}\hat{v}_{t-1, b}/A_t}}\right\|_2 \leq 
\left\|\frac{m_{t-1, \mathcal{G}_b}}{\sqrt{A_{t-1}\hat{v}_{t-1, b}/A_t}}\right\|_2 \leq \frac{1}{\sqrt{C_aA_{t-1}/A_ta_t/(A_td_b)(1 - \rho)}}, \label{eq:nonconvex-bema-10}\\
\left\|\frac{g_{t,\mathcal{G}_b}}{\sqrt{\tilde{v}_{t,b}} + \sqrt{\hat{v}_{t,b}}}\right\|_2 \leq  \left\|\frac{g_{t,\mathcal{G}_b}}{\sqrt{\hat{v}_{t,b}}}\right\|_2
\leq \left\|\frac{g_{t,\mathcal{G}_b}}{\sqrt{a_t/A_t\|g_{t,\mathcal{G}_b}\|_2^2/d_b}}\right\|_2 = \frac{\sqrt{d_b}}{\sqrt{a_t/A_t}}  \label{eq:nonconvex-bema-11}. 
\end{eqnarray}
Then, we get
\begin{eqnarray*}
\|X_{t, b}\|_2 & \leq &\frac{\beta_t}{\sqrt{C_aA_{t-1}/A_ta_t/(A_td_b)(1 - \rho)}} + \frac{(1 - \beta_t)\sqrt{d_b}}{\sqrt{a_t/A_t}} \\
& = & \left[\frac{\beta_t/(1 - \beta_t)}{\sqrt{C_aA_{t-1}/A_t(1 - \rho)}}  + 1\right]\frac{(1 - \beta_t)\sqrt{d_b}}{\sqrt{a_t/A_t}} \\
& \leq & \left[\frac{\beta/(1 - \beta)}{\sqrt{C_aA_{t-1}/A_t(1 - \rho)}}  + 1\right]\frac{(1 - \beta_t)\sqrt{d_b}}{\sqrt{a_t/A_t}} \\
& \leq & \left[\frac{\beta/(1 - \beta)}{\sqrt{C_aA_{1}/A_2(1 - \rho)}}  + 1\right]\frac{(1 - \beta_t)\sqrt{d_b}}{\sqrt{a_t/A_t}} := C_3\frac{(1 - \beta_t)\sqrt{d_b}}{\sqrt{a_t/A_t}},
\end{eqnarray*}
where the last-to-second inequality follows from the assumption that $\beta_t \leq \beta$, and the last inequality holds as we assume $\{a_t\}$ is chosen such that $\{A_{t-1}/A_t\}$ is non-decreasing for all $t$. Hence, combining the above result with (\ref{eq:nonconvex-bema-8}) and (\ref{eq:nonconvex-bema-7}), we have
\begin{eqnarray}
\lefteqn{\CE\left[\left\langle \nabla_{\mathcal{G}_b} F(\theta_t), \tilde{\eta}_{t, b}\frac{\frac{a_t}{A_td_b}\|g_{t,\mathcal{G}_b}\|_2^2}{\sqrt{\hat{v}_{t,b}} + \epsilon}X_{t,b}\right\rangle\right]} \nonumber\\
& \leq &  \CE\left[\frac{\sqrt{\tilde{\eta}_{t, b}}\|\nabla_{\mathcal{G}_b} F(\theta_t)\|_2\|g_{t,\mathcal{G}_b}\|_2/\sqrt{d_b}}{\sigma_{t, b}/\sqrt{d_b}}\sqrt{w_t\sigma_{b}}C_3(1 - \beta_t)\frac{\sqrt{\frac{a_t}{A_t}}\|g_{t,\mathcal{G}_b}\|_2}{\sqrt{\hat{v}_{t,b}} + \epsilon}\right] \nonumber \nonumber\\
& \leq & \CE\left[\frac{1 - \beta_t}{4}\frac{\tilde{\eta}_{t, b}\|\nabla_{\mathcal{G}_b} F(\theta_t)\|_2^2\|g_{t,\mathcal{G}_b}\|_2^2/d_b}{\sigma_{t, b}^2/d_b} 
+ w_t\sigma_{b}C_3^2(1 - \beta_t)\frac{\frac{a_t}{A_t}\|g_{t,\mathcal{G}_b}\|_2^2}{(\sqrt{\hat{v}_{t,b}} + \epsilon)^2}\right] \nonumber\\
& \leq & \CE\left[\frac{1 - \beta_t}{4}\frac{\tilde{\eta}_{t, b}\|\nabla_{\mathcal{G}_b} F(\theta_t)\|_2^2\CE_t[\|g_{t,\mathcal{G}_b}\|_2]^2/d_b}{\sigma_{t, b}^2/d_b} 
+ w_t\sigma_{b}C_3^2(1 - \beta_t)\frac{\frac{a_t}{A_t}\|g_{t,\mathcal{G}_b}\|_2^2}{(\sqrt{\hat{v}_{t,b}} + \epsilon)^2}\right] \nonumber\\
& \leq & \CE\left[\frac{1 - \beta_t}{4}\tilde{\eta}_{t, b}\|\nabla_{\mathcal{G}_b} F(\theta_t)\|_2^2
+ w_t\sigma_{b}C_3^2\frac{\frac{a_t}{A_t}\|g_{t,\mathcal{G}_b}\|_2^2}{(\sqrt{\hat{v}_{t,b}} + \epsilon)^2}\right], \label{eq:nonconvex-bema-9}
\end{eqnarray}
where the second inequality follows from $ab \leq \frac{a^2}{2c} + \frac{cb^2}{2}$ for any $c > 0$. Now, we estimate the third term of (\ref{eq:nonconvex-bema-5}):
\begin{eqnarray*}
\CE\left[\left\langle \nabla_{\mathcal{G}_b} F(\theta_t), \tilde{\eta}_{t, b}\frac{\sigma_{t,b}}{\sqrt{d_b}}Y_{t,b}\right\rangle\right]  & \leq &
\CE\left[\sqrt{\tilde{\eta}_{t, b}}\left\|\nabla_{\mathcal{G}_b} F(\theta_t)\right\|_2\sqrt{\tilde{\eta}_{t, b}}\frac{\sigma_{t,b}}{\sqrt{d_b}}\left\|Y_{t,b}\right\|_2\right].
\end{eqnarray*}
Similarly, with (\ref{eq:nonconvex-bema-10}) and (\ref{eq:nonconvex-bema-11}), by expanding $\|Y_{t,b}\|_2$, we have
\begin{eqnarray*}
\|Y_{t,b}\|_2 &\leq& \frac{\frac{a_t}{A_t\sqrt{d_b}}\|g_{t,\mathcal{G}_b}\|_2}{\sqrt{\hat{v}_{t,b}} + \epsilon}\frac{\beta_t\|m_{t-1, \mathcal{G}_b}\|_2}{\sqrt{A_{t-1}\hat{v}_{t-1,b}/A_t} + \epsilon}\frac{\sqrt{\frac{a_t}{A_td_b}}\|g_{t,\mathcal{G}_b}\|_2}{\sqrt{\hat{v}_{t, b}} + \sqrt{A_{t-1}\hat{v}_{t-1, b}/A_t}}\frac{\sqrt{\frac{a_t}{A_td_b}}\sigma_{t,b}}{\sqrt{\tilde{v}_{t, b}} + \sqrt{A_{t-1}\hat{v}_{t-1, b}/A_t}}  \\
&& + \frac{\frac{a_t}{A_t}\|g_{t,\mathcal{G}_b}\|_2}{\sqrt{\hat{v}_{t,b}} + \epsilon}\frac{(1 - \beta_t)\frac{\sigma_{t,b}}{\sqrt{d_b}}}{\sqrt{\tilde{v}_{t,b}} + \sqrt{\hat{v}_{t,b}}} \\
&\leq& \frac{\frac{a_t}{A_t\sqrt{d_b}}\|g_{t,\mathcal{G}_b}\|_2}{\sqrt{\hat{v}_{t,b}} + \epsilon}\frac{\beta_t}{\sqrt{C_aA_{t-1}/A_ta_t/(A_td_b)(1 - \rho)}}  + \frac{\frac{a_t}{A_t}\|g_{t,\mathcal{G}_b}\|_2}{\sqrt{\hat{v}_{t,b}} + \epsilon}\frac{1 - \beta_t}{\sqrt{a_t/A_t}} \\
&=& \frac{\sqrt{\frac{a_t}{A_t}}\|g_{t,\mathcal{G}_b}\|_2}{\sqrt{\hat{v}_{t,b}} + \epsilon}\frac{\beta_t}{\sqrt{C_aA_{t-1}/A_t(1 - \rho)}}  + \frac{\sqrt{\frac{a_t}{A_t}}\|g_{t,\mathcal{G}_b}\|_2}{\sqrt{\hat{v}_{t,b}} + \epsilon}(1 - \beta_t) \\
&=& \frac{\sqrt{\frac{a_t}{A_t}}\|g_{t,\mathcal{G}_b}\|_2}{\sqrt{\hat{v}_{t,b}} + \epsilon}\left[\frac{\beta_t/(1 - \beta_t)}{\sqrt{C_aA_{t-1}/A_t(1 - \rho)}}  + 1\right](1 - \beta_t) \\
&\leq& \frac{\sqrt{\frac{a_t}{A_t}}\|g_{t,\mathcal{G}_b}\|_2}{\sqrt{\hat{v}_{t,b}} + \epsilon}C_3(1 - \beta_t),
\end{eqnarray*}
where $C_3$ is the constant defined above. Hence, together with (\ref{eq:nonconvex-bema-8}), we obtain
\begin{eqnarray}
\CE\left[\left\langle \nabla_{\mathcal{G}_b} F(\theta_t), \tilde{\eta}_{t, b}\frac{\sigma_{t,b}}{\sqrt{d_b}}Y_{t,b}\right\rangle\right]  & \leq &
\CE\left[\sqrt{\tilde{\eta}_{t, b}}\left\|\nabla_{\mathcal{G}_b} F(\theta_t)\right\|_2\sqrt{\tilde{\eta}_{t, b}}\frac{\sigma_{t,b}}{\sqrt{d_b}}\frac{\sqrt{\frac{a_t}{A_t}}\|g_{t,\mathcal{G}_b}\|_2}{\sqrt{\hat{v}_{t,b}} + \epsilon}C_3(1 - \beta_t)\right] \nonumber\\
& \leq & \frac{1 - \beta_t}{4}\tilde{\eta}_{t, b}\CE\left[\left\|\nabla_{\mathcal{G}_b} F(\theta_t)\right\|_2^2\right] + w_t\sigma_bC_3^2(1 - \beta_t)\CE\left[\frac{\frac{a_t}{A_t}\|g_{t,\mathcal{G}_b}\|_2^2}{(\sqrt{\hat{v}_{t,b}} + \epsilon)^2}\right] \nonumber\\
& \leq & \frac{1 - \beta_t}{4}\tilde{\eta}_{t, b}\CE\left[\left\|\nabla_{\mathcal{G}_b} F(\theta_t)\right\|_2^2\right] + w_t\sigma_bC_3^2\CE\left[\frac{\frac{a_t}{A_t}\|g_{t,\mathcal{G}_b}\|_2^2}{(\sqrt{\hat{v}_{t,b}} + \epsilon)^2}\right]. \label{eq:nonconvex-bema-13}
\end{eqnarray}
The last term of (\ref{eq:nonconvex-bema-5}) can be bounded as follows
\begin{eqnarray*}
\CE[\langle \nabla_{\mathcal{G}_b} F(\theta_t), Z_{t,b} \rangle] \leq \CE[\| \nabla_{\mathcal{G}_b} F(\theta_t)\|_2\|Z_{t,b}\|_2] \leq \CE[\sigma_b\sqrt{d_b}\|Z_{t,b}\|_2],
\end{eqnarray*}
and with (\ref{eq:nonconvex-bema-10}), we get
\begin{eqnarray*}
\|Z_{t,b}\|_2 & \leq & \frac{\beta_t\eta_t\left\|m_{t-1, \mathcal{G}_b}\right\|_2}{\sqrt{A_{t-1}\hat{v}_{t-1, b}/A_t} + \sqrt{A_{t-1}/A_t}\epsilon}\frac{\left(1 - \sqrt{A_{t-1}/A_t}\right)\epsilon}{(\sqrt{A_{t-1}\hat{v}_{t-1, b}/A_t} + \epsilon)} \\
& \leq & \frac{\beta_t\eta_t}{\sqrt{C_aA_{t-1}/A_ta_t/(A_td_b)(1 - \rho)}}\left(1 - \sqrt{A_{t-1}/A_t}\right) \\
& \leq & \frac{\beta\eta_t\sqrt{A_td_b/a_t}}{\sqrt{C_a(1 - \rho)}}\left(\sqrt{\frac{A_t}{A_{t-1}}} - 1\right) \\
& = & \frac{\beta w_t\sqrt{d_b}}{\sqrt{C_a(1 - \rho)}}\left(\sqrt{\frac{A_t}{A_{t-1}}} - 1\right).
\end{eqnarray*}
Hence, 
\begin{eqnarray} \label{eq:nonconvex-bema-12}
\CE[\langle \nabla_{\mathcal{G}_b} F(\theta_t), Z_{t,b} \rangle] \leq \frac{\beta\sigma_bd_bw_t}{\sqrt{C_a(1 - \rho)}}\left(\sqrt{\frac{A_t}{A_{t-1}}} - 1\right).
\end{eqnarray}
Combining (\ref{eq:nonconvex-bema-5}), (\ref{eq:nonconvex-bema-6}), (\ref{eq:nonconvex-bema-9}), (\ref{eq:nonconvex-bema-13}), and (\ref{eq:nonconvex-bema-12}), we get
\begin{eqnarray*}
\lefteqn{\CE\left[\left\langle \nabla_{\mathcal{G}_b} F(\theta_t), \delta_{t, \mathcal{G}_b} - \frac{\beta_t\eta_t}{\sqrt{A_{t-1}/A_t}\eta_{t-1}}\delta_{t-1, \mathcal{G}_b}\right\rangle\right]} \\
& \leq & -\frac{1 - \beta_t}{2}\tilde{\eta}_{t, b}\CE\left[\left\|\nabla_{\mathcal{G}_b} F(\theta_t)\right\|_2^2\right] + 2w_t\sigma_bC_3^2\CE\left[\frac{\frac{a_t}{A_t}\|g_{t,\mathcal{G}_b}\|_2^2}{(\sqrt{\hat{v}_{t,b}} + \epsilon)^2}\right] + \frac{\beta\sigma_bd_bw_t}{\sqrt{C_a(1 - \rho)}}\left(\sqrt{\frac{A_t}{A_{t-1}}} - 1\right).
\end{eqnarray*}
Summing from $b=1$ to $B$, we obtain
\begin{eqnarray*}
\lefteqn{\CE\left[\left\langle \nabla F(\theta_t), \delta_{t} - \frac{\beta_t\eta_t}{\sqrt{A_{t-1}/A_t}\eta_{t-1}}\delta_{t-1}\right\rangle\right]} \\
& \leq & -\frac{1 - \beta_t}{2}\eta_t\CE\left[\left\|\nabla F(\theta_t)\right\|_{\tilde{H}_t^{-1}}^2\right] + 2w_tC_3^2\CE\left[\frac{a_t}{A_t}\|H^{-1}_tg_{t}\|_{\Sigma^{1/2}}^2\right] \\
&& + \frac{\beta w_t}{\sqrt{C_a(1 - \rho)}}\left(\sqrt{\frac{A_t}{A_{t-1}}} - 1\right)\sum_{b=1}^B\sigma_bd_b. 
\end{eqnarray*}
Then, with (\ref{eq:nonconvex-bema-4}), we have
\begin{eqnarray*}
\lefteqn{\CE[\langle \nabla F(\theta_t), \delta_t\rangle]}\\ 
& \leq & \frac{\beta_t\eta_t}{\sqrt{A_{t-1}/A_t}\eta_{t-1}}M_{t-1} -\frac{1 - \beta_t}{2}\eta_t\CE\left[\left\|\nabla F(\theta_t)\right\|_{\tilde{H}_t^{-1}}^2\right] + 2w_tC_3^2\CE\left[\frac{a_t}{A_t}\|H^{-1}_tg_{t}\|_{\Sigma^{1/2}}^2\right] \\
&& + \frac{\beta w_t}{\sqrt{C_a(1 - \rho)}}\left(\sqrt{\frac{A_t}{A_{t-1}}} - 1\right)\sum_{b=1}^B\sigma_bd_b.
\end{eqnarray*}
We obtain (\ref{eq:nonconvex-block-stepsize-ema-iterate-recursive-1}) by adding the term $L\CE[\|\delta_t\|_2^2]$ to both sides of the above equation. 
When $t = 1$, we have
\begin{eqnarray}
M_1  & = & \CE[-\langle \nabla F(\theta_1), \eta_1m_1/(\sqrt{\hat{v}_1} + \epsilon)\rangle + L\|\delta_1\|_2^2] \nonumber\\
& = & \CE[-\langle \nabla F(\theta_1), \eta_1(1 - \beta_1)g_1/(\sqrt{\hat{v}_1} + \epsilon)\rangle + L\|\delta_1\|_2^2]. \label{eq:nonconvex-bema-14}
\end{eqnarray}
Then, following the derivation of (\ref{eq:nonconvex-bema-2}), for each block $b$, we have
\begin{eqnarray*}
\frac{(1 - \beta_1)\eta_1g_{1, \mathcal{G}_b}}{\sqrt{\hat{v}_{1,b}} + \epsilon}
& =& (1 - \beta_1)\tilde{\eta}_{1, b}g_{1,\mathcal{G}_b}
+ \tilde{\eta}_{1, b}\frac{\sigma_{1,b}}{\sqrt{d_b}}\frac{\frac{a_1}{A_1}g_{1,\mathcal{G}_b}}{\sqrt{\hat{v}_{1,b}} + \epsilon}\frac{(1 - \beta_1)\frac{\sigma_{1,b}}{\sqrt{d_b}}}{\sqrt{\tilde{v}_{1,b}} + \sqrt{\hat{v}_{1,b}}} \\
&& - \tilde{\eta}_{1, b}\frac{\frac{a_1}{A_1d_b}\|g_{1,\mathcal{G}_b}\|_2^2}{\sqrt{\hat{v}_{1,b}} + \epsilon}\frac{(1 - \beta_1)g_{1,\mathcal{G}_b}}{\sqrt{\tilde{v}_{1,b}} + \sqrt{\hat{v}_{1,b}}}. 
\end{eqnarray*}
Hence, with similar argument, we get
\begin{eqnarray*}
\CE\left[-\left\langle \nabla F(\theta_1), \eta_1\frac{(1 - \beta_1)g_1}{\sqrt{\hat{v}_1} + \epsilon}\right\rangle\right] 
& \leq & -\frac{1 - \beta_1}{2}\eta_1\CE\left[\left\|\nabla F(\theta_1)\right\|_{\tilde{H}_1^{-1}}^2\right] + 2w_1C_3^2\CE\left[\frac{a_1}{A_1}\|H^{-1}_1g_{1}\|_{\Sigma^{1/2}}^2\right].
\end{eqnarray*}
Combining above with (\ref{eq:nonconvex-bema-14}), and adding $L\CE[\|\delta\|_2^2]$, we obtain (\ref{eq:nonconvex-block-stepsize-ema-iterate-recursive-2}).
\end{proof}

\begin{lemma}\label{lemma:nonconvex-block-stepsize-ema-iterate-delta-sum}
With the same assumptions in Lemma~\ref{lemma:nonconvex-block-stepsize-ema-iterate-recursive}, we have
\begin{eqnarray*}
\sum_{t=1}^T\|\delta_t\|_2^2 \leq \frac{C_2^2/C_1^2w_1}{C_a(1 - \sqrt{\rho})^2}\sum_{t=1}^Tw_t\frac{a_t}{A_t}\|H_t^{-1}g_t\|_2^2.
\end{eqnarray*} 
\end{lemma}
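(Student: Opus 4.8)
The plan is to bound each block's contribution $\sum_{t=1}^T\|\delta_{t,\mG_b}\|_2^2$ separately and then sum over $b$, since both $\delta_t$ and $H_t^{-1}g_t$ decompose blockwise. First I would write $\delta_{t,\mG_b}=-\eta_t m_{t,\mG_b}/(\sqrt{\hat v_{t,b}}+\epsilon)$ and expand the momentum as $m_{t,\mG_b}=\sum_{i=1}^t(1-\beta_i)\hat\beta_{t,i}g_{i,\mG_b}$, where $\hat\beta_{t,i}=\prod_{j=i+1}^t\beta_j$ as in the proof of Lemma~\ref{lemma:nonconvex-block-stepsize-ema-ratio-bound}. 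The triangle inequality then bounds $\|\delta_{t,\mG_b}\|_2$ by $\eta_t\sum_{i=1}^t(1-\beta_i)\hat\beta_{t,i}\|g_{i,\mG_b}\|_2/(\sqrt{\hat v_{t,b}}+\epsilon)$, in which the denominator sits at the current time $t$ while the gradients come from all earlier times $i\le t$. The core idea is to transfer that denominator back to time $i$.

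To do so I would use two monotonicity facts. The cumulative second moment satisfies $A_i\hat v_{i,b}\le A_t\hat v_{t,b}$ for $i\le t$ (both equal $\sum_{j\le\cdot}a_j\|g_{j,\mG_b}\|_2^2/d_b$), giving $\sqrt{\hat v_{t,b}}+\epsilon\ge\sqrt{A_i/A_t}(\sqrt{\hat v_{i,b}}+\epsilon)$ and hence $1/(\sqrt{\hat v_{t,b}}+\epsilon)\le\sqrt{A_t/A_i}\,/(\sqrt{\hat v_{i,b}}+\epsilon)$. Next, the telescoping identity $\hat A_{t,i}=\prod_{j=i+1}^tA_{j-1}/A_j=A_i/A_t$ together with Lemma~\ref{lemma:nonconvex-block-stepsize-ema-exp-bound} yields $A_i/A_t\ge C_a\tilde p^{\,t-i}$, so $\sqrt{A_t/A_i}\le\tilde p^{\,-(t-i)/2}/\sqrt{C_a}$. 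Combining this with $\hat\beta_{t,i}\le\beta^{t-i}$ collapses the product $\hat\beta_{t,i}\sqrt{A_t/A_i}$ into the clean geometric factor $(\sqrt\rho)^{t-i}/\sqrt{C_a}$, since $\sqrt\rho=\beta/\sqrt{\tilde p}$. Writing $u_{i,b}=\|g_{i,\mG_b}\|_2/(\sqrt{\hat v_{i,b}}+\epsilon)$ and dropping $1-\beta_i\le 1$, this bounds $\|\delta_{t,\mG_b}\|_2$ by $(\eta_t/\sqrt{C_a})\sum_{i=1}^t(\sqrt\rho)^{t-i}u_{i,b}$.

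I would then square, apply the weighted Cauchy--Schwarz inequality with weights $(\sqrt\rho)^{t-i}$ (whose sum is at most $1/(1-\sqrt\rho)$) to obtain $\|\delta_{t,\mG_b}\|_2^2\le\frac{\eta_t^2}{C_a(1-\sqrt\rho)}\sum_{i=1}^t(\sqrt\rho)^{t-i}u_{i,b}^2$, sum over $t$, and swap the order of summation into $\sum_{i=1}^T u_{i,b}^2\sum_{t=i}^T\eta_t^2(\sqrt\rho)^{t-i}$. The last ingredient is to bound the inner geometric-weighted sum of $\eta_t^2$: since $\eta_t^2=w_t^2\,a_t/A_t$, I would use that $a_t/A_t=1-A_{t-1}/A_t$ is non-increasing (Assumption~\ref{assumption:weight}) to replace $a_t/A_t$ by $a_i/A_i$, and the ``almost non-increasing'' property of $w_t$ (Lemma~\ref{lemma:nonconvex-block-stepsize-ema-stepsize}) in the forms $w_t\le(C_2/C_1)w_1$ and $w_t\le(C_2/C_1)w_i$ to write $w_t^2\le(C_2^2/C_1^2)w_1w_i$. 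This gives $\sum_{t=i}^T\eta_t^2(\sqrt\rho)^{t-i}\le\frac{(C_2^2/C_1^2)w_1}{1-\sqrt\rho}\,w_i\frac{a_i}{A_i}$, and substituting back produces exactly $\frac{(C_2^2/C_1^2)w_1}{C_a(1-\sqrt\rho)^2}\sum_{i=1}^Tw_i\frac{a_i}{A_i}u_{i,b}^2$ for block $b$; summing over $b$ recombines $\sum_b u_{i,b}^2=\|H_i^{-1}g_i\|_2^2$ and finishes the claim.

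I expect the main obstacle to be organizing the two transfer steps cleanly: moving the denominator from time $t$ to time $i$ while simultaneously absorbing the momentum decay, so that the exponents of $\beta$ and of $\tilde p$ merge into a single $(\sqrt\rho)^{t-i}$. Once that geometric structure is exposed, the Cauchy--Schwarz step, the summation swap, and the comparison of $\eta_t$ across times are all routine.
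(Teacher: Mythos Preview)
Your proposal is correct and follows essentially the same route as the paper: expand the momentum, transfer the denominator from time $t$ to time $i$ via $\hat v_{t,b}\ge \hat A_{t,i}\hat v_{i,b}\ge C_a\tilde p^{\,t-i}\hat v_{i,b}$ so that $\hat\beta_{t,i}/(\sqrt{\hat v_{t,b}}+\epsilon)$ collapses to the geometric factor $(\sqrt\rho)^{t-i}$, then apply Cauchy--Schwarz with these weights, swap the sums, and bound $\eta_t^2=w_t^2\,a_t/A_t$ using the non-increase of $a_t/A_t$ together with $w_t^2\le (C_2^2/C_1^2)w_1w_i$. The only cosmetic difference is that the paper performs the replacement $a_t/A_t\le a_i/A_i$ before squaring whereas you do it after the summation swap; both orderings are valid and yield the identical constant.
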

\begin{proof}
For each block $b$,
\begin{eqnarray*}
\|m_{t, \mathcal{G}_b}\|_2 & = & \left\|\sum_{i=1}^t\left(\prod_{j=i+1}^t\beta_j\right)(1-\beta_i)g_{i, \mathcal{G}_b}\right\|_2 \\
&\leq& \sum_{i=1}^t\left(\prod_{j=i+1}^t\beta_j(1-\beta_i)\right)\left\|g_{i, \mathcal{G}_b}\right\|_2 \\
&\leq& \sum_{i=1}^t\beta^{t-i}\left\|g_{i, \mathcal{G}_b}\right\|_2.
\end{eqnarray*}
Then, 
\begin{eqnarray*}
\frac{\|m_{t, \mathcal{G}_b}\|_2}{\sqrt{\hat{v}}_{t,b} + \epsilon} 
 \leq  \sum_{i=1}^t\frac{\beta^{t-i}\left\|g_{i, \mathcal{G}_b}\right\|_2}{\sqrt{\hat{v}}_{t,b} + \epsilon}.
\end{eqnarray*}
Since $\hat{v}_{t,b} \geq A_{t-1}\hat{v}_{t-1,b}/A_t$, we have $\hat{v}_{t,b} \geq \left(\prod_{j=i+1}^tA_{j-1}/A_j\right)\hat{v}_{i,b} = \hat{A}_{t, i}\hat{v}_{i,b} \geq C_a\tilde{p}^{t-i}\hat{v}_{i,b}$ by Lemma~\ref{lemma:nonconvex-block-stepsize-ema-exp-bound}. It follows that
\begin{eqnarray*}
\frac{\|m_{t, \mathcal{G}_b}\|_2}{\sqrt{\hat{v}}_{t,b} + \epsilon}  \leq \sum_{i=1}^t\frac{\beta^{t-i}\left\|g_{i, \mathcal{G}_b}\right\|_2}{\sqrt{\hat{v}}_{t,b} + \epsilon} \leq \frac{1}{\sqrt{C_a}}\sum_{i=1}^t\left(\frac{\beta}{\sqrt{\tilde{p}}}\right)^{t-i}\frac{\left\|g_{i, \mathcal{G}_b}\right\|_2}{\sqrt{\hat{v}}_{i,b} + \epsilon} =  \frac{1}{\sqrt{C_a}}\sum_{i=1}^t\sqrt{\rho}^{t-i}\frac{\left\|g_{i, \mathcal{G}_b}\right\|_2}{\sqrt{\hat{v}}_{i,b} + \epsilon}.
\end{eqnarray*}
Then, as $a_t/A_t = 1 - A_{t-1}/A_t$ is non-decreasing, we have
\begin{eqnarray*}
\|\delta_t\|_2^2 & = & \sum_{b=1}^B\left\|\frac{\eta_tm_{t, \mathcal{G}_b}}{\sqrt{\hat{v}_{t,b}} + \epsilon}\right\|_2^2
= \sum_{b=1}^B\left\|\frac{w_t\sqrt{a_t/A_t}m_{t, \mathcal{G}_b}}{\sqrt{\hat{v}_{t,b}} + \epsilon}\right\|_2^2
  \leq  \frac{w_t^2}{C_a}\sum_{b=1}^B\left(\sum_{i=1}^t\sqrt{\rho}^{t-i}\frac{\sqrt{a_t/A_t}\left\|g_{i, \mathcal{G}_b}\right\|_2}{\sqrt{\hat{v}_{i,b}} + \epsilon}\right)^2\\
  & \leq & \frac{w_t^2}{C_a}\sum_{b=1}^B\left(\sum_{i=1}^t\sqrt{\rho}^{t-i}\frac{\sqrt{a_i/A_i}\left\|g_{i, \mathcal{G}_b}\right\|_2}{\sqrt{\hat{v}_{i,b}} + \epsilon}\right)^2 \\
   &\leq& \frac{w_t^2}{C_a}\sum_{b=1}^B\left(\sum_{j=1}^t\sqrt{\rho}^{t-j}\right)^2\left(\sum_{i=1}^t\frac{\sqrt{\rho}^{t-i}}{\left(\sum_{j=1}^t\sqrt{\rho}^{t-j}\right)}\frac{\sqrt{a_i/A_i}\left\|g_{i, \mathcal{G}_b}\right\|_2}{\sqrt{\hat{v}_{i,b}} + \epsilon}\right)^2\\
  & \leq & \frac{w_t^2}{C_a}\sum_{b=1}^B\left(\sum_{j=1}^t\sqrt{\rho}^{t-j}\right)\sum_{i=1}^t\sqrt{\rho}^{t-i}\frac{a_i/A_i\left\|g_{i, \mathcal{G}_b}\right\|_2^2}{(\sqrt{\hat{v}_{i,b}} + \epsilon)^2}  \leq  \frac{w_t^2}{C_a(1 - \sqrt{\rho})}\sum_{b=1}^B\sum_{i=1}^t\sqrt{\rho}^{t-i}\frac{a_i/A_i\left\|g_{i, \mathcal{G}_b}\right\|_2^2}{(\sqrt{\hat{v}_{i,b}} + \epsilon)^2}\\
  &=& \frac{w_t^2}{C_a(1 - \sqrt{\rho})}\sum_{i=1}^t\sqrt{\rho}^{t-i}\frac{a_i}{A_i}\|H_i^{-1}g_i\|_2^2.
\end{eqnarray*} 
As $w_t \leq C_2/C_1w_i$ for any $i \leq t$ by Lemma~\ref{lemma:nonconvex-block-stepsize-ema-stepsize}, then we have
\begin{eqnarray*}
\|\delta_t\|_2^2 
 \leq \frac{C_2^2/C_1^2w_1}{C_a(1 - \sqrt{\rho})}\sum_{i=1}^t\sqrt{\rho}^{t-i}w_i\frac{a_i}{A_i}\|H_i^{-1}g_i\|_2^2.
\end{eqnarray*} 
Hence,
\begin{eqnarray*}
\sum_{t=1}^T\|\delta_t\|_2^2 
 & \leq & \frac{C_2^2/C_1^2w_1}{C_a(1 - \sqrt{\rho})}\sum_{t=1}^T\sum_{i=1}^t\sqrt{\rho}^{t-i}w_i\frac{a_i}{A_i}\|H_i^{-1}g_i\|_2^2\\
  & = & \frac{C_2^2/C_1^2w_1}{C_a(1 - \sqrt{\rho})}\sum_{i=1}^T\sum_{t=i}^T\sqrt{\rho}^{t-i}w_i\frac{a_i}{A_i}\|H_i^{-1}g_i\|_2^2 \\
    & \leq & \frac{C_2^2/C_1^2w_1}{C_a(1 - \sqrt{\rho})^2}\sum_{i=1}^Tw_i\frac{a_i}{A_i}\|H_i^{-1}g_i\|_2^2.
\end{eqnarray*} 
\end{proof}

\begin{lemma}\label{lemma:nonconvex-block-stepsize-ema-iterate-recursive-sum}
With the same assumptions in Lemma~\ref{lemma:nonconvex-block-stepsize-ema-iterate-recursive}, let $M_t = \CE[\langle \nabla F(\theta_t), \delta_t\rangle + L\|\delta_t\|_2^2]$, we have
\begin{eqnarray*}
\sum_{t=1}^TM_t 
& \leq & \frac{C_2}{C_1\sqrt{C_a}(1 - \sqrt{\rho})}\left[2C_3^2\sum_{t=1}^Tw_t\CE\left[\frac{a_t}{A_t}\|H^{-1}_tg_{t}\|_{\Sigma^{1/2}}^2\right] + \frac{LC_2^2/C_1^2w_1}{C_a(1 - \sqrt{\rho})^2}\sum_{t=1}^Tw_t\CE\left[\frac{a_t}{A_t}\|H_t^{-1}g_t\|_2^2\right] \right. \\
&& \left. + \frac{\beta}{\sqrt{C_a(1 - \rho)}}\sum_{b=1}^B\sigma_bd_b\sum_{t=2}^Tw_t\left(\sqrt{\frac{A_t}{A_{t-1}}} - 1\right)\right] 
- \frac{1 - \beta}{2}\sum_{t=1}^T\eta_t\CE\left[\left\|\nabla F(\theta_t)\right\|_{\tilde{H}_t^{-1}}^2\right].
\end{eqnarray*} 
\end{lemma}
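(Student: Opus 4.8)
The plan is to sum the one-step recursion from Lemma~\ref{lemma:nonconvex-block-stepsize-ema-iterate-recursive} and to control the accumulated contraction factors by a geometric series. I would first rewrite (\ref{eq:nonconvex-block-stepsize-ema-iterate-recursive-1}) compactly as $M_t \leq \lambda_t M_{t-1} + R_t - G_t$, where $\lambda_t = \beta_t\eta_t/(\sqrt{A_{t-1}/A_t}\,\eta_{t-1}) \geq 0$ is the contraction factor, $G_t = \tfrac{1-\beta_t}{2}\eta_t\CE[\|\nabla F(\theta_t)\|_{\tilde{H}_t^{-1}}^2] \geq 0$ is the negated gradient term, and $R_t \geq 0$ collects the three remaining nonnegative driving terms (the $\Sigma^{1/2}$ term, the $L\CE[\|\delta_t\|_2^2]$ term, and the $Z$-term, the last being nonnegative since $A_t \geq A_{t-1}$). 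For $t=1$ the convention $M_0 = 0$ reproduces (\ref{eq:nonconvex-block-stepsize-ema-iterate-recursive-2}).

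First I would unroll the recursion. Because each $\lambda_t \geq 0$, substituting the upper bound for $M_{t-1}$ preserves the inequality, giving $M_t \leq \sum_{i=1}^t \Lambda_{t,i}(R_i - G_i)$ with $\Lambda_{t,i} = \prod_{j=i+1}^t \lambda_j$ and $\Lambda_{t,t}=1$. The central estimate is a geometric bound on $\Lambda_{t,i}$. Telescoping the stepsize ratios collapses $\prod_{j=i+1}^t \eta_j/\eta_{j-1}$ to $\eta_t/\eta_i$, the curvature ratios collapse to $1/\sqrt{\hat{A}_{t,i}}$, and $\prod_{j=i+1}^t\beta_j \leq \beta^{t-i}$. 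Applying Lemma~\ref{lemma:nonconvex-block-stepsize-ema-exp-bound} (so that $\hat{A}_{t,i} \geq C_a\tilde{p}^{\,t-i}$) and Lemma~\ref{lemma:nonconvex-block-stepsize-ema-stepsize} (so that $\eta_t/\eta_i \leq C_2/C_1$) then yields $\Lambda_{t,i} \leq \tfrac{C_2}{C_1\sqrt{C_a}}(\beta/\sqrt{\tilde{p}})^{t-i} = \tfrac{C_2}{C_1\sqrt{C_a}}\sqrt{\rho}^{\,t-i}$, where $\rho = \beta^2/\tilde{p} < 1$.

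Next I would sum over $t$ and swap the order of summation, $\sum_{t=1}^T M_t \leq \sum_{i=1}^T (R_i - G_i)\sum_{t=i}^T \Lambda_{t,i}$. Here lies the one delicate point: the factor $\sum_{t=i}^T\Lambda_{t,i}$ must be bounded \emph{from above} for the nonnegative terms $R_i$ but \emph{from below} for the terms $G_i$, which carry a minus sign. For $R_i$ I use the geometric upper bound $\sum_{t=i}^T\Lambda_{t,i} \leq \tfrac{C_2}{C_1\sqrt{C_a}}\sum_{k\geq 0}\sqrt{\rho}^{\,k} = \tfrac{C_2}{C_1\sqrt{C_a}(1-\sqrt{\rho})}$; for $G_i$ I use the trivial lower bound $\sum_{t=i}^T\Lambda_{t,i} \geq \Lambda_{i,i} = 1$, combined with $1-\beta_i \geq 1-\beta$, so that $-G_i\sum_{t\geq i}\Lambda_{t,i} \leq -\tfrac{1-\beta}{2}\eta_i\CE[\|\nabla F(\theta_i)\|_{\tilde{H}_i^{-1}}^2]$. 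This asymmetric treatment is exactly what produces the geometric prefactor on the positive terms while retaining the full negative gradient term with coefficient $\tfrac{1-\beta}{2}$.

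Finally, among the $R_i$ contributions the $\Sigma^{1/2}$ and $Z$ terms already match the claimed bound verbatim, while the remaining $L\sum_t\CE[\|\delta_t\|_2^2]$ is rewritten by invoking Lemma~\ref{lemma:nonconvex-block-stepsize-ema-iterate-delta-sum}, which bounds $\sum_t\|\delta_t\|_2^2$ (and hence its expectation) by $\tfrac{C_2^2/C_1^2 w_1}{C_a(1-\sqrt{\rho})^2}\sum_t w_t\CE[\tfrac{a_t}{A_t}\|H_t^{-1}g_t\|_2^2]$, giving precisely the second bracketed term. The hard part will be the geometric estimate of $\Lambda_{t,i}$ in the second step — correctly telescoping the mixed stepsize/curvature ratios and feeding them through the two helper lemmas — together with the sign-sensitive bookkeeping in the third step, where one must resist bounding the gradient term with the same geometric factor used for the driving terms and instead keep only the $\Lambda_{i,i}=1$ contribution.
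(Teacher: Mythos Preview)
Your proposal is correct and follows essentially the same approach as the paper's proof: unroll the recursion of Lemma~\ref{lemma:nonconvex-block-stepsize-ema-iterate-recursive}, bound the products $\Lambda_{t,i}$ geometrically via Lemmas~\ref{lemma:nonconvex-block-stepsize-ema-stepsize} and~\ref{lemma:nonconvex-block-stepsize-ema-exp-bound}, sum the resulting geometric series, and invoke Lemma~\ref{lemma:nonconvex-block-stepsize-ema-iterate-delta-sum} for the $\|\delta_t\|_2^2$ contribution. The only cosmetic difference is that the paper drops the gradient term $G_i$ \emph{before} unrolling (so each $M_t$ retains only $-G_t$), whereas you keep all $G_i$ through the unrolling and then use $\sum_{t\geq i}\Lambda_{t,i}\geq \Lambda_{i,i}=1$; both routes land on the same final inequality.
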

\begin{proof}
Let define following quantity
\begin{eqnarray*}
N_t &=& 2w_tC_3^2\CE\left[\frac{a_t}{A_t}\|H^{-1}_tg_{t}\|_{\Sigma^{1/2}}^2\right] + L\CE[\|\delta_t\|_2^2] + \frac{\beta w_t}{\sqrt{C_a(1 - \rho)}}\left(\sqrt{\frac{A_t}{A_{t-1}}} - 1\right)\sum_{b=1}^B\sigma_bd_b, \hspace{.05in} \forall t \geq 2, \\
N_1 & = & 2w_1C_3^2\CE\left[\frac{a_1}{A_1}\|H^{-1}_1g_{1}\|_{\Sigma^{1/2}}^2\right] + L\CE[\|\delta_1\|_2^2].
\end{eqnarray*}  
Then, by Lemma~\ref{lemma:nonconvex-block-stepsize-ema-iterate-recursive}, for any $t \geq 2$, we have
\begin{eqnarray*}
M_t & \leq & \frac{\beta_t\eta_t}{\sqrt{A_{t-1}/A_t}\eta_{t-1}}M_{t-1} -\frac{1 - \beta_t}{2}\eta_t\CE\left[\left\|\nabla F(\theta_t)\right\|_{\tilde{H}_t^{-1}}^2\right] + N_t \\
& \leq & \frac{\beta_t\eta_t}{\sqrt{A_{t-1}/A_t}\eta_{t-1}}M_{t-1}  + N_t 
\end{eqnarray*} 
and $M_1 \leq N_1$. Then, by recursively applying above relation, we get
\begin{eqnarray*}
M_t & \leq & \frac{\hat{\beta}_{t, 1}\eta_t}{\sqrt{\hat{A}_{t, 1}}\eta_1}M_1 + \sum_{i=2}^t \frac{\hat{\beta}_{t, i}\eta_t}{\sqrt{\hat{A}_{t, i}}\eta_i}N_i -\frac{1 - \beta_t}{2}\eta_t\CE\left[\left\|\nabla F(\theta_t)\right\|_{\tilde{H}_t^{-1}}^2\right] \\
& \leq & \sum_{i=1}^t \frac{\hat{\beta}_{t, i}\eta_t}{\sqrt{\hat{A}_{t, i}}\eta_i}N_i -\frac{1 - \beta_t}{2}\eta_t\CE\left[\left\|\nabla F(\theta_t)\right\|_{\tilde{H}_t^{-1}}^2\right],
\end{eqnarray*} 
where $\hat{\beta}_{t, i} = \prod_{j=i + 1}^t\beta_j$ for $i < t$ and $\hat{\beta}_{t, t} = 1$ and $\hat{A}_{t, i} = \prod_{j = i + 1}^t\frac{A_{j - 1}}{A_j}$ for $i < t$ and $\hat{A}_{t, t} = 1$. 
Note that $\hat{\beta}_{t, i}  \leq \beta^{t- i}$, and $\eta_t \leq C_2/C_1\eta_i$. By Lemma~\ref{lemma:nonconvex-block-stepsize-ema-exp-bound}, we have  $\hat{A}_{t, i} \geq C_a\tilde{p}^{t - i} \hspace{.05in}$. Then,
\begin{eqnarray*}
M_t & \leq & \frac{C_2}{C_1\sqrt{C_a}}\sum_{i=1}^t \left(\frac{\beta}{\sqrt{\tilde{p}}}\right)^{t- i}N_i -\frac{1 - \beta_t}{2}\eta_t\CE\left[\left\|\nabla F(\theta_t)\right\|_{\tilde{H}_t^{-1}}^2\right] \\
& = & \frac{C_2}{C_1\sqrt{C_a}}\sum_{i=1}^t\sqrt{\rho}^{t- i}N_i -\frac{1 - \beta_t}{2}\eta_t\CE\left[\left\|\nabla F(\theta_t)\right\|_{\tilde{H}_t^{-1}}^2\right].
\end{eqnarray*} 
It can be verified that the above inequality holds for $t = 1$ as $C_2/(C_1\sqrt{C_a}) \geq 1$. Then, summing from $t=1$ to $t = T$, we obtain
\begin{eqnarray}
\sum_{t=1}^TM_t & \leq &  \frac{C_2}{C_1\sqrt{C_a}}\sum_{t=1}^T\sum_{i=1}^t\sqrt{\rho}^{t- i}N_i - \sum_{t=1}^T\frac{1 - \beta_t}{2}\eta_t\CE\left[\left\|\nabla F(\theta_t)\right\|_{\tilde{H}_t^{-1}}^2\right] \nonumber\\
& = & \frac{C_2}{C_1\sqrt{C_a}}\sum_{i=1}^T\sum_{t=i}^T\sqrt{\rho}^{t- i}N_i - \sum_{t=1}^T\frac{1 - \beta_t}{2}\eta_t\CE\left[\left\|\nabla F(\theta_t)\right\|_{\tilde{H}_t^{-1}}^2\right] \nonumber\\
& \leq &  \frac{C_2}{C_1\sqrt{C_a}(1 - \sqrt{\rho})}\sum_{t=1}^TN_t - \frac{1 - \beta}{2}\sum_{t=1}^T\eta_t\CE\left[\left\|\nabla F(\theta_t)\right\|_{\tilde{H}_t^{-1}}^2\right]. \label{eq:nonconvex-block-stepsize-ema-iterate-recursive-sum-1}
\end{eqnarray} 
With Lemma~\ref{lemma:nonconvex-block-stepsize-ema-iterate-delta-sum}, we get
\begin{eqnarray*}
\sum_{t=1}^TN_t & = & \sum_{t=1}^T\left[2w_tC_3^2\CE\left[\frac{a_t}{A_t}\|H^{-1}_tg_{t}\|_{\Sigma^{1/2}}^2\right] + L\CE[\|\delta_t\|_2^2]\right] + \sum_{t=2}^T\frac{\beta w_t}{\sqrt{C_a(1 - \rho)}}\left(\sqrt{\frac{A_t}{A_{t-1}}} - 1\right)\sum_{b=1}^B\sigma_bd_b \\
& \leq & 2C_3^2\sum_{t=1}^Tw_t\CE\left[\frac{a_t}{A_t}\|H^{-1}_tg_{t}\|_{\Sigma^{1/2}}^2\right] + \frac{LC_2^2/C_1^2w_1}{C_a(1 - \sqrt{\rho})^2}\sum_{t=1}^Tw_t\CE\left[\frac{a_t}{A_t}\|H_t^{-1}g_t\|_2^2\right]\\
&& + \frac{\beta}{\sqrt{C_a(1 - \rho)}}\sum_{b=1}^B\sigma_bd_b\sum_{t=2}^Tw_t\left(\sqrt{\frac{A_t}{A_{t-1}}} - 1\right).
\end{eqnarray*} 
Combining the above with (\ref{eq:nonconvex-block-stepsize-ema-iterate-recursive-sum-1}), we obtain the result.
\end{proof}

\begin{lemma}\label{lemma:nonconvex-block-stepsize-grad-norm}
Assume $\{a_t\}$ is non-decreasing such that $\{A_{t-1}/A_t\}$ is non-decreasing. Define $w_t = \eta_t/\sqrt{\frac{a_t}{A_t}}$. Assume $w_t$ is "almost" non-increasing. This means there exists another non-increasing sequence $\{z_t\}$ and positive constants $C_1$ and $C_2$ such that $C_1z_t \leq w_t \leq C_2z_t$. 
Assume $\CE_t[\|g_{t, \mG_b}\|^2_2] = \sigma_{t,b}^2 \leq d_b\sigma_b^2$. We have
\begin{eqnarray*}
\frac{1}{T}\sum_{t=1}^T\left(\CE[\|\nabla F(\theta_t)\|^{4/3}_2]\right)^{3/2} 
& \leq & \frac{\sqrt{2\left(\max_{1 \leq t \leq T}\CE\left[\max_b \tilde{v}_{t,b}\right] + \epsilon^2\right)}}{C_1/C_2\eta_T T}\sum_{t=1}^T\eta_t\CE\left[\|\nabla F(\theta_t)\|_{\tilde{H}^{-1}_t}^2\right].
\end{eqnarray*}
\end{lemma}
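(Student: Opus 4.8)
The plan is to reduce the weighted norm $\|\nabla F(\theta_t)\|_{\tilde H_t^{-1}}^2$ to the ordinary Euclidean norm block by block, and then convert the resulting second-moment relation into the stated $4/3$-power bound by a single application of H\"older's inequality, before summing over $t$. First I would note that, since $\tilde H_t^{-1}$ is block diagonal with the $b$th block equal to $(\sqrt{\tilde v_{t,b}}+\epsilon)^{-1}I_{d_b}$,
\begin{eqnarray*}
\|\nabla F(\theta_t)\|_{\tilde H_t^{-1}}^2 = \sum_{b=1}^B\frac{\|\nabla_{\mG_b}F(\theta_t)\|_2^2}{\sqrt{\tilde v_{t,b}}+\epsilon} \geq \frac{\|\nabla F(\theta_t)\|_2^2}{\max_b(\sqrt{\tilde v_{t,b}}+\epsilon)}.
\end{eqnarray*}
Using $(\sqrt{a}+\epsilon)^2\leq 2(a+\epsilon^2)$ (which follows from $2\epsilon\sqrt a\leq a+\epsilon^2$), the denominator is at most $\sqrt{2(\max_b\tilde v_{t,b}+\epsilon^2)}$, giving the sample-wise bound $\|\nabla F(\theta_t)\|_2^2 \leq \sqrt{2(\max_b\tilde v_{t,b}+\epsilon^2)}\,\|\nabla F(\theta_t)\|_{\tilde H_t^{-1}}^2$.

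Second, write $V_t = 2(\max_b\tilde v_{t,b}+\epsilon^2)$ and $N_t = \|\nabla F(\theta_t)\|_{\tilde H_t^{-1}}^2$, so that $\|\nabla F(\theta_t)\|_2^2\leq V_t^{1/2}N_t$. Raising to the power $2/3$ gives $\|\nabla F(\theta_t)\|_2^{4/3}\leq V_t^{1/3}N_t^{2/3}$, and then H\"older's inequality with exponents $3$ and $3/2$ yields
\begin{eqnarray*}
\CE\left[\|\nabla F(\theta_t)\|_2^{4/3}\right] \leq \left(\CE[V_t]\right)^{1/3}\left(\CE[N_t]\right)^{2/3}.
\end{eqnarray*}
Raising to the power $3/2$ turns this into $\left(\CE[\|\nabla F(\theta_t)\|_2^{4/3}]\right)^{3/2}\leq (\CE[V_t])^{1/2}\,\CE[N_t]$, and since $\CE[V_t]=2(\CE[\max_b\tilde v_{t,b}]+\epsilon^2)\leq 2(\bar v_{T,B}+\epsilon^2)$, this is exactly a per-iterate version of the claim with prefactor $\sqrt{2(\bar v_{T,B}+\epsilon^2)}$.

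Finally I would sum over $t$ and insert the stepsize factor. Dividing by $T$ gives $\frac1T\sum_t(\CE[\|\nabla F(\theta_t)\|_2^{4/3}])^{3/2}\leq \frac{\sqrt{2(\bar v_{T,B}+\epsilon^2)}}{T}\sum_t\CE[N_t]$. To introduce $\eta_t$ and $\eta_T$, I invoke Lemma~\ref{lemma:nonconvex-block-stepsize-ema-stepsize}, which gives $\eta_T\leq (C_2/C_1)\eta_t$, i.e. $\eta_t/((C_1/C_2)\eta_T)\geq 1$ for every $t\leq T$; multiplying each term $\CE[N_t]$ by this quantity only increases it, producing $\sum_t\CE[N_t]\leq \frac{1}{(C_1/C_2)\eta_T}\sum_t\eta_t\CE[N_t]$ and hence the stated bound.

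The only genuinely delicate step is the exponent bookkeeping in the middle paragraph: the H\"older exponents must be chosen so that $N_t$ appears to the first power and $V_t$ to the power $1/2$ after the final $3/2$ exponentiation, which is precisely what forces the $4/3$-power on the left-hand side and makes $\CE[V_t]^{1/2}$ collapse to $\sqrt{2(\bar v_{T,B}+\epsilon^2)}$. Everything else---the block-diagonal lower bound, the elementary inequality $(\sqrt a+\epsilon)^2\leq2(a+\epsilon^2)$, and the stepsize monotonicity---is routine once that scaling is fixed.
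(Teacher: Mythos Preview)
Your proof is correct and follows essentially the same approach as the paper: both rely on the block-diagonal lower bound $\|\nabla F(\theta_t)\|_{\tilde H_t^{-1}}^2\ge \|\nabla F(\theta_t)\|_2^2/(\sqrt{\max_b\tilde v_{t,b}}+\epsilon)$, H\"older's inequality with exponents $3$ and $3/2$, the elementary bound $(\sqrt a+\epsilon)^2\le 2(a+\epsilon^2)$, and the stepsize comparison $\eta_T\le(C_2/C_1)\eta_t$ from Lemma~\ref{lemma:nonconvex-block-stepsize-ema-stepsize}. The only cosmetic difference is ordering: the paper applies H\"older to the decomposition $\|\nabla F(\theta_t)\|_2^{4/3}=\bigl(\|\nabla F(\theta_t)\|_2^2/(\sqrt{\max_b\tilde v_{t,b}}+\epsilon)\bigr)^{2/3}\bigl(\sqrt{\max_b\tilde v_{t,b}}+\epsilon\bigr)^{2/3}$ first and then bounds each factor inside the expectation, whereas you first combine the pointwise bounds into $\|\nabla F(\theta_t)\|_2^2\le V_t^{1/2}N_t$ and then apply H\"older to $V_t^{1/3}N_t^{2/3}$---both routes land on the same $(\CE[V_t])^{1/2}\CE[N_t]$.
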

\begin{proof}
By H$\ddot{\text{o}}$lder's inequality, 
we have $\CE[|XY|] \leq (\CE[|X|^p])^{1/p}(\CE[|Y|^q])^{1/q}$ for any $0 < p, q < 1$ with $1/p + 1/q = 1$. Taking $p=3/2$, $q=3$, and
\begin{eqnarray*}
X = \left(\frac{\|\nabla F(\theta_t)\|^2_2}{\sqrt{\max_b \tilde{v}_{t,b}} + \epsilon}\right)^{2/3}, \hspace{.1in} Y = \left(\sqrt{\max_b \tilde{v}_{t,b}} + \epsilon\right)^{2/3},
\end{eqnarray*}
we obtain
\begin{eqnarray*}
\CE[\|\nabla F(\theta_t)\|^{4/3}_2] \leq \left(\CE\left[\frac{\|\nabla F(\theta_t)\|^2_2}{\sqrt{\max_b\tilde{v}_{t,b}} + \epsilon}\right]\right)^{2/3}\left(\CE\left[\left(\sqrt{\max_b\tilde{v}_{t,b}} + \epsilon\right)^{2}\right]\right)^{1/3}.
\end{eqnarray*}
Hence,
\begin{eqnarray*}
\left(\CE[\|\nabla F(\theta_t)\|^{4/3}_2]\right)^{3/2} \leq \left(\CE\left[\frac{\|\nabla F(\theta_t)\|^2_2}{\sqrt{\max_b\tilde{v}_{t,b}} + \epsilon}\right]\right)\left(\CE\left[\left(\sqrt{\max_b\tilde{v}_{t,b}} + \epsilon\right)^{2}\right]\right)^{1/2}.
\end{eqnarray*}
Note that 
\begin{eqnarray*}
\frac{\|\nabla F(\theta_t)\|^2_2}{\sqrt{\max_b\tilde{v}_{t,b}} + \epsilon} 
& = & \sum_{i=1}^B\frac{\|\nabla_{\mathcal{G}_i} F(\theta_t)\|^2_2}{\sqrt{\max_b\tilde{v}_{t,b}} + \epsilon} \\
& \leq & \sum_{b=1}^B\frac{\|\nabla_{\mathcal{G}_b} F(\theta_t)\|^2_2}{\sqrt{\tilde{v}_{t,b}} + \epsilon} \\
& = & \|\nabla F(\theta_t)\|_{\tilde{H}^{-1}_t}^2.
\end{eqnarray*}
We also have
\begin{eqnarray*}
\CE\left[\left(\sqrt{\max_b\tilde{v}_{t,b}} + \epsilon\right)^{2}\right]
& \leq & 2\CE\left[\left(\max_b\tilde{v}_{t,b} + \epsilon^2\right)\right] \\
& =& 2\left(\CE\left[\max_b \tilde{v}_{t,b}\right] + \epsilon^2\right).
\end{eqnarray*}
Then, for any $t \leq T$, we get
\begin{eqnarray*}
\left(\CE[\|\nabla F(\theta_t)\|^{4/3}_2]\right)^{3/2} &\leq& \sqrt{2\left(\CE\left[\max_b \tilde{v}_{t,b}\right] + \epsilon^2\right)}\CE\left[\|\nabla F(\theta_t)\|_{\tilde{H}^{-1}_t}^2\right]\\
& = & \frac{\sqrt{2\left(\CE\left[\max_b \tilde{v}_{t,b}\right] + \epsilon^2\right)}}{\eta_t}\eta_t\CE\left[\|\nabla F(\theta_t)\|_{\tilde{H}^{-1}_t}^2\right] \\
& \leq & \frac{\sqrt{2\left(\max_{1 \leq t \leq T}\CE\left[\max_b \tilde{v}_{t,b}\right] + \epsilon^2\right)}}{C_1/C_2\eta_T}\eta_t\CE\left[\|\nabla F(\theta_t)\|_{\tilde{H}^{-1}_t}^2\right],
\end{eqnarray*}
where the last inequality follows from Lemma~\ref{lemma:nonconvex-block-stepsize-ema-stepsize}. Taking average from $t=1$ to $T$, we get
\begin{eqnarray*}
\frac{1}{T}\sum_{t=1}^T\left(\CE[\|\nabla F(\theta_t)\|^{4/3}_2]\right)^{3/2} 
& \leq & \frac{\sqrt{2\left(\max_{1 \leq t \leq T}\CE\left[\max_b \tilde{v}_{t,b}\right] + \epsilon^2\right)}}{C_1/C_2\eta_T T}\sum_{t=1}^T\eta_t\CE\left[\|\nabla F(\theta_t)\|_{\tilde{H}^{-1}_t}^2\right].
\end{eqnarray*}
\end{proof}


\subsection{Proof of Theorem~\ref{theorem:nonconvex_momentum}}
\begin{proof}
As $F$ is $L$-smooth, then we have
\begin{eqnarray*}
F(\theta_{t+1}) \leq  F(\theta_t) + \langle \nabla F(\theta_t), \theta_{t+1} - \theta_t \rangle + \frac{L}{2}\|\theta_{t+1} - \theta_t\|^2.
\end{eqnarray*}
Recursively applying the above relation, we get
\begin{eqnarray*}
F(\theta_*) \leq \CE[F(\theta_{T+1})] \leq F(\theta_1) + \sum_{t=1}^TM_t,
\end{eqnarray*}
where $M_t = \CE[\langle \nabla F(\theta_t), \delta_t\rangle + L\|\delta_t\|_2^2]$. By Lemma~\ref{lemma:nonconvex-block-stepsize-ema-iterate-recursive-sum}, we have
\begin{eqnarray*}
\lefteqn{\frac{1 - \beta}{2}\sum_{t=1}^T\eta_t\CE\left[\left\|\nabla F(\theta_t)\right\|_{\tilde{H}_t^{-1}}^2\right]} \\
& \leq & F(\theta_1) - F(\theta_*) + \frac{C_2}{C_1\sqrt{C_a}(1 - \sqrt{\rho})}\left[2C_3^2\sum_{t=1}^Tw_t\CE\left[\frac{a_t}{A_t}\|H^{-1}_tg_{t}\|_{\Sigma^{1/2}}^2\right]  \right.\\
&& \left. + \frac{LC_2^2/C_1^2w_1}{C_a(1 - \sqrt{\rho})^2}\sum_{t=1}^Tw_t\CE\left[\frac{a_t}{A_t}\|H_t^{-1}g_t\|_2^2\right] 
+ \frac{\beta}{\sqrt{C_a(1 - \rho)}}\sum_{b=1}^B\sigma_bd_b\sum_{t=2}^Tw_t\left(\sqrt{\frac{A_t}{A_{t-1}}} - 1\right)\right] \\
& = & F(\theta_1) - F(\theta_*) + \frac{C_2}{C_1\sqrt{C_a}(1 - \sqrt{\rho})}\left[2C_3^2\sum_{t=1}^T\eta_t\CE\left[\sqrt{\frac{a_t}{A_t}}\|H^{-1}_tg_{t}\|_{\Sigma^{1/2}}^2\right] \right. \\ 
&& \left. + \frac{LC_2^2/C_1^2w_1}{C_a(1 - \sqrt{\rho})^2}\sum_{t=1}^T\eta_t\CE\left[\sqrt{\frac{a_t}{A_t}}\|H_t^{-1}g_t\|_2^2\right]  
 + \frac{\beta}{\sqrt{C_a(1 - \rho)}}\sum_{b=1}^B\sigma_bd_b\sum_{t=2}^Tw_t\left(\sqrt{\frac{A_t}{A_{t-1}}} - 1\right)\right].
\end{eqnarray*} 
Applying Lemma~\ref{lemma:nonconvex-momentum-grad-weighted-bound}, we have
\begin{eqnarray*}
\lefteqn{\frac{1 - \beta}{2}\sum_{t=1}^T\eta_t\CE\left[\left\|\nabla F(\theta_t)\right\|_{\tilde{H}_t^{-1}}^2\right]} \\
& \leq & F(\theta_1) - F(\theta_*) \\
&& + \frac{C_2}{C_1\sqrt{C_a}(1 - \sqrt{\rho})}\left[2C_3^2\frac{C_2}{C_1}\left[w_1 \sum_{b=1}^B\sigma_{b}d_b\log\left(\frac{\sigma_b^2}{\epsilon^2} + 1\right)  + \sum_{b=1}^B\sigma_{b}d_b\sum_{t=1}^{T}\eta_t\sqrt{\frac{a_t}{A_t}}\frac{A_t}{A_{t-1} + a_1}\right] \right. \\
 && \left.
+ \frac{LC_2^3/C_1^3w_1}{C_a(1 - \sqrt{\rho})^2}\left[w_1 \sum_{b=1}^Bd_b\log\left(\frac{\sigma_b^2}{\epsilon^2} + 1\right) 
  + \sum_{b=1}^Bd_b\sum_{t=1}^{T}\eta_t\sqrt{\frac{a_t}{A_t}}\frac{A_t}{A_{t-1} + a_1}\right] \right. \\
&& \left. + \frac{\beta}{\sqrt{C_a(1 - \rho)}}\sum_{b=1}^B\sigma_bd_b\sum_{t=2}^Tw_t\left(\sqrt{\frac{A_t}{A_{t-1}}} - 1\right)\right] \\
& \leq & F(\theta_1) - F(\theta_*) \\
&& + \frac{C_2}{C_1\sqrt{C_a}(1 - \sqrt{\rho})}\left[2C_3^2\frac{C_2}{C_1}\left[w_1 \sum_{b=1}^B\sigma_{b}d_b\log\left(\frac{\sigma_b^2}{\epsilon^2} + 1\right) 
  + \omega\sum_{b=1}^B\sigma_{b}d_b\sum_{t=1}^{T}\eta_t\sqrt{\frac{a_t}{A_t}}\right] \right. \\
 && \left.
+ \frac{LC_2^3/C_1^3w_1}{C_a(1 - \sqrt{\rho})^2}\left[w_1 \sum_{b=1}^Bd_b\log\left(\frac{\sigma_b^2}{\epsilon^2} + 1\right) 
  + \omega\sum_{b=1}^Bd_b\sum_{t=1}^{T}\eta_t\sqrt{\frac{a_t}{A_t}}\right] \right. \\
&& \left. + \frac{\beta}{\sqrt{C_a(1 - \rho)}}\sum_{b=1}^B\sigma_bd_b\sum_{t=2}^Tw_t\left(\sqrt{\frac{A_t}{A_{t-1}}} - 1\right)\right] \\
& = & F(\theta_1) - F(\theta_*) + \frac{C_2}{C_1\sqrt{C_a}(1 - \sqrt{\rho})}\left[\frac{\beta}{\sqrt{C_a(1 - \rho)}}\sum_{b=1}^B\sigma_bd_b\sum_{t=2}^Tw_t\left(\sqrt{\frac{A_t}{A_{t-1}}} - 1\right) \right. \\
 && \left. + \sum_{b=1}^B\left[\frac{LC_3^3/C_1^3w_1d_b}{C_a(1 - \sqrt{\rho})^2} + \frac{2C_3^2C_2\sigma_bd_b}{C_1}\right]\left[w_1\log\left(\frac{\sigma_b^2}{\epsilon^2} + 1\right) 
  + \omega\sum_{t=1}^{T}\eta_t\sqrt{\frac{a_t}{A_t}}\right]\right].
\end{eqnarray*} 
Combining above with Lemma~\ref{lemma:nonconvex-block-stepsize-grad-norm}, we have
\begin{eqnarray}
\lefteqn{\min_{1 \leq t \leq T}\left(\CE[\|\nabla F(\theta_t)\|^{4/3}_2]\right)^{3/2} \leq \frac{1}{T}\sum_{t=1}^T\left(\CE[\|\nabla F(\theta_t)\|^{4/3}_2]\right)^{3/2}} \nonumber\\
& \leq & \frac{2\sqrt{2\left(\max_{1 \leq t \leq T}\CE\left[\max_b \tilde{v}_{t,b}\right] + \epsilon^2\right)}}{C_1/C_2(1-\beta)\eta_TT}\left[F(\theta_1) - F(\theta_*) \right. \nonumber\\
&& \left. + \frac{C_2}{C_1\sqrt{C_a}(1 - \sqrt{\rho})}\left[\frac{\beta}{\sqrt{C_a(1 - \rho)}}\sum_{b=1}^B\sigma_bd_b\sum_{t=2}^Tw_t\left(\sqrt{\frac{A_t}{A_{t-1}}} - 1\right) \right.\right. \nonumber\\
 && \left.\left. + \sum_{b=1}^B\left[\frac{LC_2^3/C_1^3w_1d_b}{C_a(1 - \sqrt{\rho})^2} + \frac{2C_3^2C_2\sigma_bd_b}{C_1}\right]\left[w_1\log\left(\frac{\sigma_b^2}{\epsilon^2} + 1\right) 
  + \omega\sum_{t=1}^{T}\eta_t\sqrt{\frac{a_t}{A_t}}\right]\right]\right] \nonumber\\
  & = & \frac{\sqrt{2\left(\max_{1 \leq t \leq T}\CE\left[\max_b \tilde{v}_{t,b}\right] + \epsilon^2\right)}}{\eta_TT}\left[\frac{2C_2}{(1 - \beta)C_1}[F(\theta_1) - F(\theta_*)] \right. \nonumber\\
  && \left. + \frac{2C_2^2}{C_1^2\sqrt{C_a}(1 - \sqrt{\rho})(1 - \beta)}\left[\frac{\beta}{\sqrt{C_a(1 - \rho)}}\sum_{b=1}^B\sigma_bd_b\sum_{t=2}^Tw_t\left(\sqrt{\frac{A_t}{A_{t-1}}} - 1\right) \right.\right. \label{eq:mom-final-1}\\
 && \left.\left. + \sum_{b=1}^B\left[\frac{LC_2^3w_1d_b}{C_1^3C_a(1 - \sqrt{\rho})^2} + \frac{2C_3^2C_2\sigma_bd_b}{C_1}\right]\left[w_1\log\left(\frac{\sigma_b^2}{\epsilon^2} + 1\right) 
  + \omega\sum_{t=1}^{T}\eta_t\sqrt{\frac{a_t}{A_t}}\right]\right]\right]. \label{eq:mom-final-2}
\end{eqnarray} 
\end{proof}


\subsection{Proof of Corollary~\ref{corollary:nonconvex_momentum_highpro}}
By the concavity of the minimum, we have
\begin{eqnarray*}
\CE\left[\min_{1 \leq t \leq T}\|\nabla F(\theta_t)\|^{4/3}_2\right]^{3/2} \leq 
\min_{1 \leq t \leq T}\left(\CE[\|\nabla F(\theta_t)\|^{4/3}_2]\right)^{3/2}.
\end{eqnarray*}
Let $X = \min_{1 \leq t \leq T}\|\nabla F(\theta_t)\|_2^2$. 
The Theorem~\ref{theorem:nonconvex_momentum} suggests that we have $\CE[X^{2/3}] \leq C(T)^{2/3}$. 
By Markov's inequality, we get
\begin{eqnarray*}
P\left(X^{2/3} > \frac{C(T)^{2/3}}{\delta^{2/3}}\right) \leq \frac{\CE[X^{2/3}]}{C(T)^{2/3}}\delta^{2/3} \leq \delta^{2/3}.
\end{eqnarray*}
Hence, $P\left(X > \frac{C(T)}{\delta}\right) \leq \delta^{2/3}$, and we have $P(X \leq \frac{C(T)}{\delta}) \geq 1 - \delta^{2/3}$.


\subsection{Proof of Corollary~\ref{corollary:nonconvex_momentum_seq}}
\begin{proof}
When $a_t = at^{\tau}$, we have $A_t = \mathcal{O}(t^{1 + \tau})$. This suggests that
\begin{eqnarray*}
\eta_t\sqrt{\frac{a_t}{A_t}} =  \eta\sqrt{\frac{a_t}{tA_t}} = \mathcal{O}\left(\frac{1}{t}\right),
\end{eqnarray*}
\begin{eqnarray*}
w_t = \frac{\eta}{1 - \tilde{\beta}_t}\sqrt{\frac{A_t}{ta_t}} \leq  \frac{\eta}{1 - \beta}\sqrt{\frac{A_t}{ta_t}}  = \mathcal{O}\left(1\right),
\end{eqnarray*}
and
\begin{eqnarray*}
\sqrt{\frac{A_t}{A_{t-1}}} - 1 = \frac{\sqrt{A_t} - \sqrt{A_{t-1}}}{\sqrt{A_{t-1}}} = \mathcal{O}\left(\frac{1}{t}\right).
\end{eqnarray*}
Hence, 
\begin{eqnarray*}
\sum_{t=1}^T\eta_t\sqrt{\frac{a_t}{A_t}} = \mathcal{O}\left(\log(T)\right), \hspace{.05in} \sum_{t=1}^Tw_t\left(\sqrt{\frac{A_t}{A_{t-1}}} - 1 \right) = \mathcal{O}\left(\log(T)\right), \hspace{.05in} \text{ and } \hspace{.05in} C(T) = \mathcal{O}\left(\frac{\log(T)}{\sqrt{T}}\right).
\end{eqnarray*}
On the other hand, when $a_t = \alpha^{-t}$, we have
\begin{eqnarray*}
\eta_t\sqrt{\frac{a_t}{A_t}} = \eta\sqrt{\frac{1 - \alpha}{(1 - \alpha^t)t}} \leq \frac{\eta}{\sqrt{t}},
\end{eqnarray*}
\begin{eqnarray*}
w_t = \frac{\eta}{1 - \tilde{\beta}_t}\sqrt{\frac{A_t}{a_tt}} = \frac{\eta}{1 - \tilde{\beta}_t}\sqrt{\frac{1 - \alpha^t}{(1 - \alpha)t}} \leq \frac{\eta}{(1 - \beta)\sqrt{(1 - \alpha)t}},
\end{eqnarray*}
and
\begin{eqnarray*}
\sqrt{\frac{A_t}{A_{t-1}}} - 1 = \sqrt{\frac{1 - \alpha^t}{(1 - \alpha^{t-1})\alpha}} \leq \sqrt{\frac{1 + \alpha}{\alpha}}.
\end{eqnarray*}
Then, we get
\begin{eqnarray*}
\sum_{t=1}^T\eta_t\sqrt{\frac{a_t}{A_t}} \leq 2\eta\sqrt{T}, \hspace{.05in} \sum_{t=1}^Tw_t\left(\sqrt{\frac{A_t}{A_{t-1}}} - 1 \right) = \mathcal{O}\left(\sqrt{T}\right), \hspace{.05in} \text{ and } \hspace{.05in} C(T) = \mathcal{O}\left(1\right).
\end{eqnarray*}
\end{proof}


\subsection{Proof of Corollary~\ref{corollary:nonconvex_momentum_comparison}}
\begin{proof}
As $\|g_{t,\mG_b}\|_2^2/d_b \leq G_b^2$, then we have 
\begin{eqnarray*}
\tilde{v}_{t,b} = (v_{t-1, b} + a_t\CE_t[\|g_{t,\tilde{\mG}_b}\|_2^2]/d_b])/A_t \leq G_b^2, 
\end{eqnarray*}
and therefore $\bar{v}_{T, B} \equiv \max_{1 \leq t \leq T}\CE\left[\max_b \tilde{v}_{t,b}\right] \leq \max_b G_b^2$. 
Arranging the terms in $\tilde{C}(T)$, we obtain
\begin{eqnarray*}
  \lefteqn{\tilde{C}(T)}\\
   & = & \frac{\sqrt{2\left(\max_b G_b^2+ \epsilon^2\right)}}{\eta_TT}\left[\frac{2C_2}{(1 - \beta)C_1}[F(\theta_1) - F(\theta_*)] \right. \\
  && \left. + \frac{2C_2^2}{C_1^2\sqrt{C_a}(1 - \sqrt{\rho})(1 - \beta)}\left[\left[\frac{\beta}{\sqrt{C_a(1 - \rho)}}\sum_{t=2}^Tw_t\left(\sqrt{\frac{A_t}{A_{t-1}}} - 1\right) + \frac{2C_3^2C_2}{C_1}\omega\sum_{t=1}^{T}\eta_t\sqrt{\frac{a_t}{A_t}}\right]\sum_{b=1}^B\sigma_bd_b \right.\right. \\
 && \left.\left.  + \frac{LC_2^3w_1^2}{C_1^3C_a(1 - \sqrt{\rho})^2}\sum_{b=1}^Bd_b\log\left(\frac{\sigma_b^2}{\epsilon^2} + 1\right)
  + \frac{2C_3^2C_2w_1}{C_1}\sum_{b=1}^B\sigma_bd_b\log\left(\frac{\sigma_b^2}{\epsilon^2} + 1\right) \right.\right. \\
  && \left.\left. + \frac{LC_2^3w_1d\omega}{C_1^3C_a(1 - \sqrt{\rho})^2}\sum_{t=1}^{T}\eta_t\sqrt{\frac{a_t}{A_t}} \right]\right].
\end{eqnarray*} 
When $B=d$, we have
\begin{eqnarray*}
  \lefteqn{\tilde{C}_d(T)}\\
   & = & \frac{\sqrt{2\left(\max_b\max_{i\in\tilde{\mG}_b} G_{i}^2+ \epsilon^2\right)}}{\eta_TT}\left[\frac{2C_2}{(1 - \beta)C_1}[F(\theta_1) - F(\theta_*)] \right. \\
  && \left. + \frac{2C_2^2}{C_1^2\sqrt{C_a}(1 - \sqrt{\rho})(1 - \beta)}\left[\left[\frac{\beta}{\sqrt{C_a(1 - \rho)}}\sum_{t=2}^Tw_t\left(\sqrt{\frac{A_t}{A_{t-1}}} - 1\right) + \frac{2C_3^2C_2}{C_1}\omega\sum_{t=1}^{T}\eta_t\sqrt{\frac{a_t}{A_t}}\right]\sum_{i=1}^d\sigma_i \right.\right. \\
 && \left.\left.  + \frac{LC_2^3w_1^2}{C_1^3C_a(1 - \sqrt{\rho})^2}\sum_{i=1}^d\log\left(\frac{\sigma_i^2}{\epsilon^2} + 1\right)
  + \frac{2C_3^2C_2w_1}{C_1}\sum_{i=1}^d\sigma_i\log\left(\frac{\sigma_i^2}{\epsilon^2} + 1\right) \right.\right. \\
  && \left.\left. + \frac{LC_2^3w_1d\omega}{C_1^3C_a(1 - \sqrt{\rho})^2}\sum_{t=1}^{T}\eta_t\sqrt{\frac{a_t}{A_t}} \right]\right] \\
  & = & \frac{\sqrt{2\left(\max_b\max_{i\in\tilde{\mG}_b} G_{i}^2+ \epsilon^2\right)}}{\eta_TT}\left[\frac{2C_2}{(1 - \beta)C_1}[F(\theta_1) - F(\theta_*)] \right. \\
  && \left. + \frac{2C_2^2}{C_1^2\sqrt{C_a}(1 - \sqrt{\rho})(1 - \beta)}\left[\left[\frac{\beta}{\sqrt{C_a(1 - \rho)}}\sum_{t=2}^Tw_t\left(\sqrt{\frac{A_t}{A_{t-1}}} - 1\right) + \frac{2C_3^2C_2}{C_1}\omega\sum_{t=1}^{T}\eta_t\sqrt{\frac{a_t}{A_t}}\right]\sum_{b=1}^{\tilde{B}}\sum_{i\in \tilde{\mG}_b}\sigma_i \right.\right. \\
 && \left.\left.  + \frac{LC_2^3w_1^2}{C_1^3C_a(1 - \sqrt{\rho})^2}\sum_{b=1}^{\tilde{B}}\sum_{i\in \tilde{\mG}_b}\log\left(\frac{\sigma_i^2}{\epsilon^2} + 1\right)
  + \frac{2C_3^2C_2w_1}{C_1}\sum_{b=1}^{\tilde{B}}\sum_{i\in \tilde{\mG}_b}\sigma_i\log\left(\frac{\sigma_i^2}{\epsilon^2} + 1\right) \right.\right. \\
  && \left.\left. + \frac{LC_2^3w_1d\omega}{C_1^3C_a(1 - \sqrt{\rho})^2}\sum_{t=1}^{T}\eta_t\sqrt{\frac{a_t}{A_t}} \right]\right].
\end{eqnarray*}
Substituting $r_1 := \frac{\sum_{b=1}^{\tilde{B}}\sum_{i \in \tilde{\mG}_b}\log\left(\sigma_i^2/\epsilon^2 + 1\right)}{\sum_{b=1}^{\tilde{B}}d_b\log\left(\sigma_b^2/\epsilon^2 + 1\right)}$, 
$r_2 := \frac{\sum_{b=1}^{\tilde{B}}\sum_{i \in \tilde{\mG}_b}\sigma_i}{\sum_{b=1}^{\tilde{B}}\sigma_bd_b}$ and $r_3 := \frac{\sum_{b=1}^{\tilde{B}}\sum_{i \in \tilde{\mG}_b}\sigma_i\log\left(\sigma_i^2/\epsilon^2 + 1\right)}{\sum_{b=1}^{\tilde{B}}\sigma_bd_b\log\left(\sigma_b^2/\epsilon^2 + 1\right)}$, we get
\begin{eqnarray*}
  \lefteqn{\tilde{C}_d(T)}\\ 
  & = & \frac{\sqrt{2\left(\max_b\max_{i\in\tilde{\mG}_b} G_{i}^2+ \epsilon^2\right)}}{\eta_TT}\left[\frac{2C_2}{(1 - \beta)C_1}[F(\theta_1) - F(\theta_*)] \right. \\
  && \left. + \frac{2C_2^2}{C_1^2\sqrt{C_a}(1 - \sqrt{\rho})(1 - \beta)}\left[\left[\frac{\beta}{\sqrt{C_a(1 - \rho)}}\sum_{t=2}^Tw_t\left(\sqrt{\frac{A_t}{A_{t-1}}} - 1\right) + \frac{2C_3^2C_2}{C_1}\omega\sum_{t=1}^{T}\eta_t\sqrt{\frac{a_t}{A_t}}\right]r_2\sum_{b=1}^{\tilde{B}}\sigma_bd_b \right.\right. \\
 && \left.\left.  + \frac{LC_2^3w_1^2}{C_1^3C_a(1 - \sqrt{\rho})^2}r_1\sum_{b=1}^{\tilde{B}}d_b\log\left(\frac{\sigma_b^2}{\epsilon^2} + 1\right)
  + \frac{2C_3^2C_2w_1}{C_1}r_3\sum_{b=1}^{\tilde{B}}\sigma_bd_b\log\left(\frac{\sigma_b^2}{\epsilon^2} + 1\right) \right.\right. \\
  && \left.\left. + \frac{LC_2^3w_1d\omega}{C_1^3C_a(1 - \sqrt{\rho})^2}\sum_{t=1}^{T}\eta_t\sqrt{\frac{a_t}{A_t}} \right]\right] \\
  & \geq & \min(1, r_{\min})\sqrt{\frac{\max_b\max_{i\in\tilde{\mG}_b} G_{i}^2 + \epsilon^2}{\max_b G_b^2 +
\epsilon^2}}C_{T, \tilde{B}}.
\end{eqnarray*}
The proof is completed.
\end{proof}


\subsection{Proof of Proposition~\ref{prop:gen_error}}


\begin{proof}
As function is $\tilde{\gamma}$-Lipschitz, we have the following result:
\begin{eqnarray*}
\sup_z\CE_M[f(M(S); z) - f(M(S'); z)] \leq \tilde{\gamma}\CE_M[\|M(S) - M(S')\|_2].
\end{eqnarray*}
Therefore, we can consider bounding $\CE_M[\|M(S) - M(S')\|_2]$. 
Let $\beta_t = 0$ for all $t$. 
\begin{eqnarray*}
\theta_{t+1} & = & \theta_1 - \sum_{k=1}^t\eta_kH_k^{-1}m_k \\
& = & \theta_1 - \sum_{k=1}^t\eta_kH_k^{-1}g_k \\
& = & \theta_1 - \sum_{k=1}^t\eta_kH_k^{-1}\nabla f(\theta_k; z_{i_k}),
\end{eqnarray*}
where $i_k \in [n]$ is the example index selected at iteration $k$. Then, we can bound $\Delta_{t+1} = \|\theta_{t+1} - \theta_{t+1}'\|_2$ as follows
\begin{eqnarray}
\CE[\Delta_{t+1}] & = & \CE[\|\theta_{t+1} - \theta_{t+1}'\|_2] \nonumber\\
& = & \CE[\|\theta_{1} - \theta_{1}' - \sum_{k=1}^t\eta_kH_k^{-1}\nabla f(\theta_k; z_{i_k}) + \sum_{k=1}^t\eta_kH_k^{'-1}\nabla f(\theta_k'; z_{i_k}')\|_2] \nonumber\\
& \leq & \CE[\|\theta_{1} - \theta_{1}'\|_2] + \sum_{k=1}^t\eta_k\CE[\|H_k^{-1}\nabla f(\theta_k; z_{i_k}) -H_k^{'-1}\nabla f(\theta_k'; z_{i_k}')\|_2] \nonumber\\
& = & \sum_{k=1}^t\eta_k\CE[\|H_k^{-1}\nabla f(\theta_k; z_{i_k}) -H_k^{'-1}\nabla f(\theta_k'; z_{i_k}')\|_2].  \label{eq:gen_err_0}
\end{eqnarray}
Note that $z_{i_k} = z{i_{k}}'$ with probability $1 - 1/n$. Then, we can bound each term $\CE[\|H_k^{-1}\nabla f(\theta_k; z_{i_k}) -H_k^{'-1}\nabla f(\theta_k'; z_{i_k}')\|_2]$ as follows
\begin{eqnarray}
\lefteqn{\CE[\|H_k^{-1}\nabla f(\theta_k; z_{i_k}) -H_k^{'-1}\nabla f(\theta_k'; z_{i_k}')\|_2]} \nonumber\\
& \leq & \frac{2}{n}\CE[\|H_k^{-1}\nabla f(\theta_k; z_{i_k})\|_2] 
+ \left(1 - \frac{1}{n}\right)\CE[\|H_k^{-1}\nabla f(\theta_k; z_{i_k}) -H_k^{'-1}\nabla f(\theta_k'; z_{i_k})\|_2]\nonumber\\
& \leq & \frac{2}{n}\CE[\|H_k^{-1}\nabla f(\theta_k; z_{i_k})\|_2]  + \left(1 - \frac{1}{n}\right)\CE[\|H_k^{-1}\nabla f(\theta_k; z_{i_k}) -H_k^{'-1}\nabla f(\theta_k; z_{i_k})\|_2] \nonumber\\
&&+ \left(1 - \frac{1}{n}\right)\CE[\|H_k^{'-1}\nabla f(\theta_k; z_{i_k}) -H_k^{'-1}\nabla f(\theta_k'; z_{i_k})\|_2]. \label{eq:gen_err_1}
\end{eqnarray}
 The second term is bounded as
 \begin{eqnarray*}
 \lefteqn{\CE[\|H_k^{-1}\nabla f(\theta_k; z_{i_k}) -H_k^{'-1}\nabla f(\theta_k; z_{i_k})\|_2]} \\
 & \leq & \CE[\|H_k^{-1} -H_k^{'-1}\|_2\|\nabla f(\theta_k; z_{i_k})\|_2] \\
  & \leq & \tilde{\gamma}\CE[\|H_k^{-1} -H_k^{'-1}\|_2] \\
  & = & \tilde{\gamma}\CE\left[\max_b\left|\frac{1}{\sqrt{\hat{v}_{k, b}} + \epsilon} - \frac{1}{\sqrt{\hat{v}_{k, b}'} + \epsilon}\right|\right].
 \end{eqnarray*}
We expand the third term of (\ref{eq:gen_err_1}) as
\begin{eqnarray*}
\lefteqn{\CE[\|H_k^{'-1}\nabla f(\theta_k; z_{i_k}) -H_k^{'-1}\nabla f(\theta_k'; z_{i_k})\|_2]}\\
& \leq & \CE[\|H_k^{'-1}\|_2\|\nabla f(\theta_k; z_{i_k}) -\nabla f(\theta_k'; z_{i_k})\|_2] \\
& \leq & L\CE[\|H_k^{'-1}\|_2\|\theta_k - \theta_k'\|_2] \\
& \leq & L\CE\left[\frac{1}{\sqrt{\min_b\hat{v}_{k, b}} + \epsilon}\|\theta_k - \theta_k'\|_2\right]\\
& = & L\CE\left[\frac{1}{\sqrt{\min_b\hat{v}_{k, b}} + \epsilon}\Delta_k\right].
\end{eqnarray*}
Substituting the above results into (\ref{eq:gen_err_1}) and combining with (\ref{eq:gen_err_0}), we obtain
\begin{eqnarray*}
\CE[\Delta_{t+1}] 
& \leq & \frac{2}{n}\sum_{k=1}^t\eta_k\CE[\|H_k^{-1}\nabla f(\theta_k; z_{i_k})\|_2] \\
&&  + \left(1 - \frac{1}{n}\right)\tilde{\gamma}\sum_{k=1}^t\eta_k\CE\left[\max_b\left|\frac{1}{\sqrt{\hat{v}_{k, b}} + \epsilon} - \frac{1}{\sqrt{\hat{v}_{k, b}'} + \epsilon}\right|\right] \nonumber\\
&&+ \left(1 - \frac{1}{n}\right)L\sum_{k=1}^t\eta_k\CE\left[\frac{1}{\sqrt{\min_b\hat{v}_{k, b}} + \epsilon}\Delta_k\right].
\end{eqnarray*}
Note that if $w_t = \eta_t/\sqrt{a_t/A_t}$ is ''almost" non-increasing w.r.t. another non-increasing sequence $\{z_t\}$ and positive constant $C_1$ and $C_2$, then $w_t^2$ is also ''almost" non-increasing w.r.t. another non-increasing sequence $\{z_t^2\}$ and positive constant $C_1^2$ and $C_2^2$. 
Using Lemma~\ref{lemma:nonconvex-momentum-grad-weighted-bound} with $C = I$, we have 
\begin{eqnarray*}
\lefteqn{\sum_{k=1}^t\eta_k\CE[\|H_k^{-1}\nabla f(\theta_k; z_{i_k})\|_2]} \\
& \leq & \sqrt{t}\sqrt{\sum_{k=1}^t\eta_k^2\CE[\|H_k^{-1}\nabla f(\theta_k; z_{i_k})\|_2^2]} \\
& = & \sqrt{t}\sqrt{\sum_{k=1}^t\eta_k^2\sqrt{\frac{A_k}{a_k}}\CE\left[\sqrt{\frac{a_k}{A_k}}\|H_k^{-1}\nabla f(\theta_k; z_{i_k})\|_2^2\right]} \\
& \leq & \sqrt{t}\sqrt{\frac{C_2^2}{C_1^2}\left[w_1^2\sum_{b=1}^Bd_b\log\left(\frac{\sigma_b^2}{\epsilon^2} + 1\right) 
  + d\sum_{k=1}^{t}\eta_k^2\frac{A_k}{A_{k-1} + a_1}\right]} \\
  & \leq & \frac{C_2}{C_1}\sqrt{\left[w_1^2\sum_{b=1}^Bd_b\log\left(\frac{\sigma_b^2}{\epsilon^2} + 1\right) 
  + d\omega\sum_{k=1}^{t}\eta_k^2\right]t}.
\end{eqnarray*}
Then, we get
\begin{eqnarray*}
\CE[\Delta_{t+1}] 
& \leq & \frac{2C_2}{nC_1}\sqrt{\left[w_1^2\sum_{b=1}^Bd_b\log\left(\frac{\sigma_b^2}{\epsilon^2} + 1\right) 
  + d\omega\sum_{k=1}^{t}\eta_k^2\right]t} \\
  && + \left(1 - \frac{1}{n}\right)\tilde{\gamma}\sum_{k=1}^t\eta_k\CE\left[\max_b\left|\frac{1}{\sqrt{\hat{v}_{k, b}} + \epsilon} - \frac{1}{\sqrt{\hat{v}_{k, b}'} + \epsilon}\right|\right] \nonumber\\
&&+ \left(1 - \frac{1}{n}\right)L\sum_{k=1}^t\eta_k\CE\left[\frac{1}{\sqrt{\min_b\hat{v}_{k, b}} + \epsilon}\Delta_k\right]. 
\end{eqnarray*}
\end{proof}

\end{document}